\documentclass{article}
\usepackage{kr}
\usepackage{times}
\usepackage{soul}
\usepackage{url}
\usepackage[hidelinks]{hyperref}
\usepackage[utf8]{inputenc}
\usepackage[small]{caption}
\usepackage{graphicx}
\usepackage{amsmath}
\usepackage{amsthm}
\usepackage{booktabs}
\usepackage{algorithm}
\usepackage{algorithmic}
\usepackage{amsfonts}
\usepackage{paralist}
\usepackage{todonotes}
\usepackage{tikz}
\usetikzlibrary{shapes,decorations,shadows,arrows,calc}
\makeatletter

\tikzstyle{arg}=[draw,circle,fill=gray!15,inner sep=1pt,minimum size=.45cm]
\tikzstyle{argd}=[draw,circle,gray!70,inner sep=1pt,minimum size=.45cm,dashed]
\tikzstyle{argb}=[draw,circle,fill=gray!15,inner sep=1pt,minimum size=.65cm]

\tikzstyle{argTD}=[draw, thick, circle, fill=gray!15,inner sep=0pt,minimum size=0.6cm,font=\small]
\tikzstyle{argR}=[draw, thick, circle, fill=gray!15,inner sep=0pt,minimum size=0.45cm,font=\small]
\tikzstyle{scc}=[draw, thick, rectangle,align=center, fill=gray!15,inner sep=0pt,minimum size=0.8cm,font=\small, rounded corners=0ex,]
\tikzstyle{argTDX}=[draw, dotted,thick, circle, inner sep=0pt,minimum size=0.6cm,font=\small]
\tikzstyle{sccX}=[draw,dotted, thick, rectangle,align=center, inner sep=0pt,minimum size=0.8cm,font=\small, rounded corners=0ex,]
\tikzstyle{argsmall}=[draw, thick, circle, fill=gray!15,inner sep=0pt,minimum size=0.4cm]
\tikzstyle{argsmallX}=[draw, thick, circle, inner sep=0pt,minimum size=0.3cm,dotted]
\tikzstyle{bag}=[draw,rectangle, rounded corners=1ex,
align=center, inner sep=1ex]

\tikzstyle{atts}=[draw,thick, inner sep=5pt, rounded corners=3pt]

\tikzstyle{nullarg}=[inner sep=0pt,outer sep=0pt,minimum size=0cm]
\tikzstyle{nullattack}=[draw, thick, |->]

\newcommand{\contrary}[1]{\overline{#1}}
\newcommand{\contraryempty}{\contrary{\phantom{a}}}

\newcommand{\theory}{\mathit{Th}}
\newcommand{\lit}{\ensuremath{\mathcal{L}}}
\newcommand{\rules}{\ensuremath{\mathcal{R}}}
\newcommand{\asm}{\ensuremath{\mathcal{A}}}

\newcommand{\cf}{\textit{cf}}
\newcommand{\adm}{\textit{adm}}
\newcommand{\com}{\textit{com}}

\newcommand{\stb}{\textit{stb}}

\newcommand{\prf}{\textit{pref}}
\newcommand{\pref}{\textit{pref}}

\newcommand{\grd}{\textit{grd}}

\usepackage{thm-restate}
\usepackage{etoolbox}

\newtheorem{example}{Example}
\newtheorem{theorem}{Theorem}
\newtheorem{definition}{Definition}
\newtheorem{observation}{Observation}

\newtheorem{principle}{Principle}

\renewcommand\thmcontinues[1]{ctd}

\title{Splitting Assumption-Based Argumentation Frameworks}

\author{%
	Giovanni Buraglio\and
	Wolfgang Dvo\v{r}\'ak\and
	Stefan Woltran\\
	\affiliations
    Institute of Logic and Computation, TU Wien, Austria\\
	\emails
	\{giovanni.buraglio, wolfgang.dvorak, stefan.woltran\}@tuwien.ac.at,
}

\begin{document}
	
	\maketitle
	
	\begin{abstract}
		Assumption-Based Argumentation (ABA) is a well-established formalism for modelling and reasoning over debates, with a wide range of applications. However, the high computational complexity of core reasoning tasks in ABA poses a significant challenge for its applicability. This issue is further aggravated when ABA frameworks (ABAFs) are instantiated into graph-based argumentation formalisms, such as Dung's Argumentation Frameworks (AFs) and Argumentation Frameworks with Collective Attacks (SETAFs). In knowledge representation and reasoning, a key strategy to address computational intractability is to optimise reasoning over a given knowledge base through divide-and-conquer algorithms. A paradigmatic example of this approach is splitting, where extensions of a given framework are computed incrementally, by restricting the search space to sub-frameworks only, and then combining the obtained results. This approach has been successfully applied to AFs, for which also a parametrised version has been introduced under stable semantics. However, the exponential growth produced by the instantiation might undermine the usefulness of splitting on the argument graphs induced by ABAFs. To address this issue, our work investigates the concept of splitting on the knowledge base rather than on its graph-based instantiation. Furthermore, we generalise splitting to its parametrised version for ABAFs.
	\end{abstract}
	
	\section{Introduction}
	Computational models of argumentation in AI~\cite{Gabbay:2021} offer formal approaches to represent and reason over debates involving conflicting or uncertain information~\cite{CarreraI15,DimopoulosMM19,FanT12,GaoTWX16,HadouxHP23,Toni13}. 
    Assumption-Based Argumentation (ABA)~\cite{BondarenkoDKT97} captures argumentative scenarios by means of so-called ABA frameworks (ABAFs), or ABA knowledge bases, consisting of a set of defeasible sentences (assumptions) and inference rules. 
	Argumentative reasoning can be performed in ABA following two different approaches. The \textit{direct} approach, typically employed by ABA solvers~\cite{LehtonenWJ21a,LehtonenWJ21b}, allows to reason over the ABA knowledge base itself through semantics defined at the level of assumptions. In contrast, the \textit{indirect} approach \cite{LehtonenRT0W24} realises reasoning in a two-step process: first an argument graph comprising arguments and their relations is generated from the ABAF, by means of the so-called \emph{instantiation} procedure; then, \emph{semantics} from abstract argumentation are applied to the obtained graph in order to find acceptable sets of arguments, along with the assumptions supporting them.
	
	Although ABA is a well-established formalism for non-monotonic reasoning, with applications in medical decision-making, explainable AI, and causal discovery~\cite{FanTMW14,Fan18,RussoRT24}, the high computational complexity of core reasoning tasks in ABA poses a significant challenge for its deployment in practice~\cite{CyrasHT21}. This issue is further aggravated when ABAFs are instantiated into argument graphs, such as Dung's Argumentation Frameworks (AFs)~\cite{Dung95} or Argumentation Frameworks with Collective Attacks (SETAFs)~\cite{NielsenP06}, as the instantiation can be computationally expensive and may result in exponentially large graphs~\cite{LehtonenR0W23}. 
	
	In the context of non-monotonic reasoning, one prominent strategy to address computational intractability is to optimise reasoning over a given knowledge base through divide-and-conquer algorithms. A paradigmatic example of this approach is \emph{splitting}, originally developed for answer-set programming~\cite{LifschitzT94} and later adapted to other nonmonotonic formalisms, e.g. default theories~\cite{Turner96} and recently Abstract Argumentation~\cite{Baumann11,BaumannBDW12,Linsbichler14,BaumannBW11,Liao13,BaroniGL14}.
	This approach focuses on incrementally computing the extensions of a given argumentation framework by means of the extension of its sub-frameworks, thereby avoiding to consider the entire solution-space of the original framework.
	
	Despite the successful application in graph-based argumentation~\cite{BaumannBW11}, splitting, so far, has been neglected for rule-based argumentation systems like Assumption-Based Argumentation. 
	A straightforward attempt to apply splitting in ABA would be to first instantiate the ABAF to a corresponding Dung's AF and then perform AF splitting. However, the exponential number of auxiliary arguments introduced during instantiation might invalidate the usefulness of splitting. 
	To overcome this limitation, we instead consider the instantiation of ABAFs as SETAFs, which yield more concise graphical representations~\cite{KoenigRU22}. 
	Thus, we first develop a splitting scheme for SETAFs, enabling incremental computation for the indirect approach of reasoning. 
	This method has the advantage of being easily applicable to any formalism that can be instantiated as a SETAF. However, its drawback is that we have to instantiate the knowledge base, which comes with a computational cost, and we cannot even use the splitting to speed up the instantiation step.
	To address this issue, we present a splitting schema that operates directly on ABA knowledge bases.
	To this end, this paper makes the following contributions:
	\begin{itemize}
		\item Towards ABA splitting, we generalise existing notions of splitting from AFs to SETAFs (Section~\ref{sec:SETAF}) for complete, stable, preferred and grounded semantics.
		\item We then introduce the notion of ABA splitting in Section~\ref{sec:ABA}, along with the syntactic adjustments required to establish a splitting theorem, which we prove under complete, stable, preferred and grounded semantics.
		\item In Section~\ref{sec:param} we extend our results to the more general notion of \emph{parameterised splitting}~\cite{BaumannBDW12}, showing that a splitting theorem holds for ABAFs under stable semantics. 
        \item In Section~\ref{sec:computeSplitting} we address computing splittings for SETAFs and ABAFs, by reducing it to a graph splitting problem and reusing dedicated algorithms for AFs. 
		\item In Section~\ref{sec:comparison} we take a closer look at the relationship between the splitting algorithm for ABAFs and the instantiation procedure into SETAFs.
		\item Finally, after reviewing related work (Section~\ref{sec:related}), we conclude with a summary and outline directions for future research (Section~\ref{sec:conc}).
	\end{itemize}
	
\section{Preliminaries}\label{sec:prel}
	\paragraph{Assumption-Based Argumentation}
    We recall here the basic concepts of assumption-based argumentation (ABA)~\cite{CyrasFST2018}. Debates are represented by means of so-called ABA Frameworks (ABAFs), which consist of a deductive system $(\lit, \rules)$, where $\lit$ is a set of sentences, and $\rules$ is a set of rules over $\lit$. A rule $r \in \rules$ has the form $a_0 \gets a_1,\dots , a_n$ with $a_i \in \lit$, $body(r) = \{a_1, \dots, a_n\}$ and $head(r) = a_0$.
	
	\begin{definition}
		An ABAF is a tuple $(\lit, \rules, \asm,\contraryempty)$, where $(\lit, \rules)$ is a deductive system, $\asm \subseteq \lit$ a set of assumptions, and $\contraryempty: \asm \to \lit$ is a total mapping, called contrary function.
	\end{definition}
	
	For a set of assumptions, $S\subseteq \asm$ we use $\contrary{S}$ to indicate the set of contraries of $S$. Conversely, we define the partial function $\alpha: \lit \to \asm$ assigning an assumption to its contrary $b\in \contrary{\asm}$ such that $\alpha(b)=a$ if $b=\contrary{a}$. This generalises to sets of contraries as before. 
	For a set of rules $R$, we fix $head(R)=\{head(r)\mid r\in R\}$, $body(R)=\{body(r)\mid r\in R\}$. Further, we use $atom(S)=\{p\in \lit \mid p \in S \lor \alpha(p)\in S \lor \contrary{p}\in S\}$. In what follows, we read $atom(p)$ as $atom(\{p\})$. 
	For a rule $r\in \rules$, we say that $r$ is: a fact if $body(r)=\emptyset$; a loop-rule if $\contrary{a}=head(r)$ and $a\in body(r)$. 
	
	A sentence $q \in \lit$ is tree-derivable from $S \subseteq \asm$ and rules $R \subseteq \rules$, denoted by $S \vdash^R q$, if there is a finite rooted labelled tree $T$ where: the root of $T$ is labelled with $q$; the set of labels for the leaves of $T$ is equal to $S$ or $S \cup \{ \top \}$; and for every inner node $v$ of $T$ there is a rule $r \in R$ such that $v$ is labelled with $head(r)$, and every successor of $v$ is labelled with $a \in body(r)$ or $\top$ if $body(r) = \emptyset$. We sometimes write $S \vdash q$ instead of $S \vdash^R q$ if it does not cause confusion. Moreover, we call $Th_D(S)=\{p\in \mathcal{L}\mid S\vdash p\}$ the theory of $S$ w.r.t.\ the ABAF $D$.
	Throughout the paper, we assume that ABAFs do not contain \emph{dummy rules}, whose body is not derivable from any set of assumptions. 
	\begin{definition}
		Let $D = (\lit, \rules, \asm,\contraryempty)$ be an ABAF. A set $S \subseteq \asm$ attacks $T\subseteq \asm$ if $S' \vdash \contrary{a}$ for some $S' \subseteq S$ and $a \in T$. A set $S$ is conflict-free in an ABAF $D$ ($S \in \cf(D)$) if it does not attack itself; $S$ defends $T$ iff it attacks each attacker of $T$ ; $S$ is closed iff $S \vdash a$ implies $a \in S$; $S$ is admissible ($S \in \adm(D)$) if it is conflict-free and defends itself.
	\end{definition}
    
	We say a set $S$ of assumptions attacks an assumption $a$ if $S$ attacks the singleton $\{a\}$.  
	Moreover, for a set $B \subseteq \asm$ we say that $S$ attacks $B$ if $S$ attacks some $b\in B$. We use $S_R^+=\{a \in \asm \mid S \vdash^R \contrary{a}\}$ and define the \emph{range} of $S$ w.r.t.\ $R\subseteq \rules$ as $S_R^\oplus=S \cup S_R^+$. 
	In this paper, we assume ABAFs to be flat, unless specified otherwise. We call an ABAF flat if every set $S$ of assumptions is closed, and \textit{non-flat} otherwise. We next recall definitions for grounded, complete, preferred, and stable ABA semantics (abbr. $\grd$, $\com$, $\prf$, $\stb$).
	\begin{definition}
		Let $D$ be an ABAF and let $S \in \adm(D)$. $S \in \com(D)$ iff $S$ contains every assumption set it defends; $S \in \grd(D)$ iff $S$ is $\subseteq$-minimal in $\com(D)$; $S \in \pref(D)$ iff $S$ is $\subseteq$-maximal in $\com(D)$; $S \in \stb(D)$ iff $S$ attacks each $\{x\} \subseteq \asm \setminus S$.
		We call $\sigma(D)$ the set of $\sigma$-extensions of $D$. 
	\end{definition}
	
	\paragraph{SETAF Instantiation}
	Reasoning in ABAFs is often performed on graphs induced from the instantiation procedure. Due to their popularity and simple structure, this step has traditionally been performed by means of Dung's AFs~\cite{Dung95,BondarenkoDKT97,CyrasFST2018}, i.e., directed graphs where nodes and their (binary) relation are interpreted as arguments and attacks among them. 
	However, this representation does not capture the possibility of having multiple assumptions attacking another directly, which might result in a large number of auxiliary arguments. Thus, in recent years, a hyper-graph representation~\cite{KoenigRU22} of ABAFs, based on argumentation frameworks with collective attacks (SETAFs)~\cite{NielsenP06}, has become increasingly popular~\cite{BuraglioD0024,RussoRT24,BertholdR024,DimopoulosDKRUW24}.
	
	\begin{definition}\label{def:setafs}
		A SETAF is a pair $SF=(A,R)$ where $A$ is 
		a finite set of \emph{arguments},
		and $R \subseteq 2^A \times A$ is the \emph{attack relation}. For an attack $(T,h)\in R$ we call $T$ the \emph{tail} and $h$ the \emph{head} of the attack.
		We write $(t,h)$ to denote the set-attack $(\{t\},h)$.
		For $S \subseteq A$, we
		say $S$ \emph{attacks} an argument $a\in A$ if there is an attack $(T,a)\in R$ with $T\subseteq S$. Moreover, for a set $B \subseteq A$ we say that $S$ attacks $B$ if $S$ attacks some $b\in B$. We use $S_R^+=\{a \in A \mid S \text{ attacks } a\}$ and define the \emph{range} of $S$ w.r.t.\ $R$ as $S_R^\oplus=S \cup S_R^+$.  
		By AFs we refer the class of SETAFs where all attacks $(T,h)\in R$ are such that $|T|=1$.
	\end{definition}
	
	For all semantics under our considerations, an ABAF $D =  (\lit, \rules, \asm,\contraryempty)$ can be instantiated as the (equivalent) SETAF $SF_D=(A_D,R_D)$ where $A_D = \asm$ and $(S,a) \in R_D$ iff $S \vdash \contrary{a}$ \cite{KoenigRU22}.
	
	\begin{example}\label{ex:prel}
For the ABAF $D =(\lit, \rules, \asm,\contraryempty)$ (left) and its corresponding SETAF $SF_D$ (right), it holds that $\{a,w,z\}\in \prf(D)=\prf(SF_D)$.
    
    \begin{minipage}{.28\textwidth}
	\hspace{-18pt}
	\begin{tabular}{cl}
		$\asm=$ &$ \{a,b,v,w,x,y,z\}$\\
		$\lit=$ &$\asm \cup \contrary{\asm} \cup \{p\}$\\
		$\rules=$&$\{\contrary{b} \gets b, \; p\gets a, \; \contrary{v}\gets a,$ \\
		& \; $\contrary{x}\gets p,w, \;  \contrary{y}\gets x, \; \contrary{y}\gets \contrary{b},z\}$ 
	\end{tabular}
\end{minipage}
\begin{minipage}{.165\textwidth}
\begin{tikzpicture}[scale=0.85,>=stealth]
	\path
	(1.5,-0.1) node[arg] (a){$a$}
	(3.5,-0.1) node[arg](b){$b$}
	(2.6,0.5) node[arg](v){$v$}
	(1.1,1) node[arg] (w) {$w$}
	(2,1) node[arg] (x) {$x$}
	(3.2,1) node[arg] (y) {$y$}
	(4,1) node[arg] (z) {$z$}
	;
	
	\path[->,>=stealth,thick]
	(a) edge (v)
	(x) edge (y)
	(b) edge[loop right] (b)
	(b) edge[out=60,in=-45] (y)
	(z) edge[out=-90,in=-45] (y)
	(w) edge[out=-45,in=-135] (x)
	(a) edge[out=90,in=-135] (x)
	;		
\end{tikzpicture}
\end{minipage}
    \end{example}
	
	Notice that such a mapping is many-to-one, i.e., several ABAFs correspond to the same SETAF. This is because the SETAF instantiation only remembers the attacks between assumptions but forgets all non-assumptions.
	In the above example, we lose the sentence $p$ when instantiating the derivation $\{a,w\}\vdash \contrary{x}$ build via $\contrary{x}\gets p,w$ and $p\gets a$ into $(\{a,w\},x)$.
	On the other hand, SETAFs can be seen --- syntactically --- as a fragment of flat ABAFs, by modelling each attack $(S,a)$ via a dedicated rule $\contrary{a}\gets S$.

	\paragraph{Splitting of AFs}
	We now recall the splitting approach for AFs~\cite{Baumann11}. A splitting identifies two sub-frameworks $F_1$ and $F_2$ separated by a set of attacks going from $F_1$ to $F_2$. Then, the information contained in an extension of $F_1$ is propagated, computing the so-called \textit{reduct} of $F_2$ accordingly.
	\begin{definition}\label{def_af_split1}
		Let $F=(A,R)$ be an AF, 
		$F_1=(A_1,R_1)$ and $F_2=(A_2,R_2)$ two sub-frameworks of $F$ 
		s.t.\  $A_1\cap A_2=\emptyset$, 
		$A=A_1 \cup A_2$ and 
		$R=R_1 \cup R_2 \cup R_3$ 
		with $R_3\subseteq A_1 \times A_2$.
		The triple $(F_1,F_2,R_3)$ is called a \emph{splitting} of $F$.
		For such a splitting and a set $E\subseteq A_1$, the \emph{$(E,R_3)$-reduct} is the AF $AF'=(A',R')$ with $A'=A_2\setminus E^+_{R_3}$ and $R'=R_2\cap (A'\times A')$. 
		Moreover, the set of \emph{undecided arguments} w.r.t.\ $E\subseteq A_1$ is $U_E=A_1\setminus E^\oplus_{R_1}$.
	\end{definition}
	The reduct is designed to take care of the arguments attacked by the set $E$.
	To account for the propagation of undecided arguments w.r.t.\ $E$, a further \emph{modification} is needed: self-attacks are propagated from $F_1$ to arguments in~$F_2$.
	\begin{definition}\label{def_af_split2}
		Let $(F_1,F_2,R_3)$ be a splitting for an AF $F$ and $E$ an extension of $F_1$. Moreover, take $F'_2= (A'_2, R'_2)$ as the $(E,R_3)$-reduct of $F_2$ and $U_E$ as the set of undecided arguments w.r.t.\ $E$. The $(U_E,R_3)$-modification of $F_2$ is: 
		$$ mod_{U_E}^{R_3}(F'_2)=(A'_2, R'_2\cup \{(b,b)\mid \exists a\in U_E : (a,b)\in R_3\}).$$
	\end{definition}
	
	Using these definitions,~\citeauthor{Baumann11}~(\citeyear{Baumann11}) has shown that it is possible to split the AF and compute the extensions for each sub-framework incrementally such that their combination yields extensions of the original framework.
	\begin{theorem}{\cite{Baumann11}}
		Let $(F_1,F_2,R_3)$ be a splitting for an AF $F=(A,R)$ and $\sigma \in \{\adm,\stb,\com,\prf,\grd\}$. 
		\begin{enumerate}
			\item If $E_1\in \sigma(F_1)$ and $E_2\in \sigma(mod_{U_E}^{R_3}(F'_2))$, then $E_1\cup E_2\in \sigma(F)$.
			\item If $E\in \sigma(F)$, then $E\cap A_1\in \sigma(F_1)$ and $E\cap A_2\in \sigma(mod_{U_E}^{R_3}(F'_2))$.
		\end{enumerate}
	\end{theorem}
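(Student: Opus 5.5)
The plan is to treat the two directions separately, first for admissibility and then lifting to the other semantics. Throughout, $E_1\subseteq A_1$, $E_2\subseteq A_2$. The reduct and modification are those of Definitions~\ref{def_af_split1} and~\ref{def_af_split2} with respect to $E_1$, where $U_{E_1}=A_1\setminus (E_1)^\oplus_{R_1}$. The structural fact used everywhere is that $R_3\subseteq A_1\times A_2$: nothing in $A_2$ attacks $A_1$. Hence conflicts and defence obligations for arguments in $A_1$ live entirely in $F_1$.

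\textbf{Direction 1.} Assume $E_1\in\sigma(F_1)$ and $E_2\in\sigma(F_2^*)$, where $F_2^*=mod^{R_3}_{U_{E_1}}(F_2')$.
\begin{itemize}
\item \emph{Conflict-freeness of $E_1\cup E_2$.} No attack runs from $E_2$ into $E_1$. Arguments of $E_2$ lie outside $(E_1)^+_{R_3}$, so $E_1$ does not attack $E_2$.
\item \emph{Defence of arguments in $E_1$.} Their attackers lie in $A_1$, and $E_1$ already defends them in $F_1$.
\item \emph{Defence of $b\in E_2$.} Let $a$ attack $b$.
\begin{itemize}
\item If $a\in A_2\cap (E_1)^+_{R_3}$, then $E_1$ attacks $a$.
\item If $a\in A_2'$, then $E_2$ defends $b$ in $F_2^*$, so $E_2$ attacks $a$ via $R_2$.
\item If $a\in A_1$ with $(a,b)\in R_3$, then $a\notin E_1$ (else $b$ would be removed). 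If $a\in U_{E_1}$, then $b$ carries a self-loop in $F_2^*$, contradicting conflict-freeness of $E_2$. So $a\in (E_1)^+_{R_1}$, and $E_1$ attacks $a$.
\end{itemize}
\end{itemize}
This gives admissibility. For $\stb$: every $a\in A_1\setminus E_1$ is attacked by $E_1$. Every $b\in A_2\setminus E_2$ is either in $(E_1)^+_{R_3}$ or attacked by $E_2$ in $F_2^*$. Self-loops are never the attacking edge, since they only appear on arguments not in $E_2$.

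\textbf{Direction 2.} Assume $E\in\sigma(F)$ and put $E_1=E\cap A_1$, $E_2=E\cap A_2$.
\begin{itemize}
\item \emph{$E_1\in\adm(F_1)$.} Attackers of $E_1$ are in $A_1$, so the defence by $E$ is realised by $E_1$.
\item \emph{$E_2\subseteq A_2'$.} This follows from conflict-freeness of $E$.
\item \emph{No self-loop on $E_2$.} Suppose $b\in E_2$ has a self-loop from some $a\in U_{E_1}$. Since $E$ must defend $b$, $E$ attacks $a$, necessarily from $A_1$, so $a\in (E_1)^+_{R_1}$. This contradicts $a\in U_{E_1}$.
\item \emph{Defence in $F_2^*$.} Attackers of $E_2$ in $F_2^*$ are $R_2$-attackers in $A_2'$. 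These are not attacked by $E_1$, since they are not in $(E_1)^+_{R_3}$. Hence they are attacked by $E_2$.
\end{itemize}
For $\stb$: $E_1$ has full range in $A_1$, so $U_{E_1}=\emptyset$, and range in $A_2$ transfers directly.

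\textbf{Complete semantics.} This is the delicate step and the main obstacle. I must show completeness is preserved in both directions, i.e.\ that an argument defended by $E_1\cup E_2$ in $F$ is defended by $E_1$ in $F_1$ or by $E_2$ in $F_2^*$, and conversely.
\begin{itemize}
\item The $A_1$ part is immediate from the absence of back-attacks.
\item \emph{Direction 1, $c\in A_2'$ defended by $E_2$ in $F_2^*$.} Since $c$ is defended, it has no self-loop. Hence none of its $R_3$-attackers are undecided, and each is attacked by $E_1$. So $E_1\cup E_2$ defends $c$ in $F$, and completeness of $E$ gives $c\in E$.
\item \emph{Direction 2, $c\in A_2'$ defended by $E_2$ in $F_2^*$.} I would argue it has no $R_3$-attacker in $A_1\setminus (E_1)^\oplus$. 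Such an attacker would be an undecided argument, producing a self-loop at $c$, which contradicts defence in $F_2^*$. Hence $E$ defends $c$, and $c\in E_2$.
\end{itemize}
So the self-loop device is exactly what aligns defence across the split. I expect checking this alignment carefully to be the crux.

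\textbf{Preferred and grounded.} These follow from the $\com$ case by $\subseteq$-extremality, arguing by contradiction. For Direction 1, take $E_1\cup E_2\subsetneq E'\in\com(F)$. By the $\com$ case, $E'\cap A_1\in\com(F_1)$ with $E'\cap A_1\supseteq E_1$. Maximality forces equality, so the reduct is the same. Then $E'\cap A_2$ is a complete extension of the same $F_2^*$ extending $E_2$, a contradiction. Direction 2 is symmetric: enlarge one component and recombine via Direction 1. Grounded is handled dually, with minimality in place of maximality, using uniqueness of the grounded extension.
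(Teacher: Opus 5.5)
Your admissible and stable arguments are sound. In substance they agree with the appendix proof of the paper's SETAF splitting theorem, specialised to singleton tails. The paper cites this AF result rather than reproving it; for AFs its undecided links are exactly the $R_3$-edges leaving $U_{E_1}$, so the two modifications coincide.

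\textbf{The gap: complete semantics, Direction~1.} To conclude $E_1\cup E_2\in\com(F)$, you must show that every $c\in A_2$ defended by $E_1\cup E_2$ in $F$ lies in $A_2'$ and is defended by $E_2$ in $F_2^\star$. Completeness of $E_2$ then yields $c\in E_2$. Your bullet labelled Direction~1 instead proves the converse implication (defended by $E_2$ in $F_2^\star$ $\Rightarrow$ defended by $E_1\cup E_2$ in $F$), so it duplicates your Direction~2 bullet. It also closes with ``completeness of $E$ gives $c\in E$'', which is circular, since completeness of $E=E_1\cup E_2$ is the very claim. Consequently nothing in the proposal excludes an argument of $A_2$ that $E_1\cup E_2$ defends in $F$ but that is missing from $E_2$. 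Your grounded argument needs $E_1\cup E_2\in\com(F)$ and recombines via complete Direction~1, so it inherits this gap.

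\textbf{How to close it.} The missing implication is exactly what the paper establishes in the complete case of the SETAF proof, and your own ingredients suffice. Let $c\in A_2$ be defended by $E=E_1\cup E_2$ in $F$.
(i) $c\notin (E_1)^+_{R_3}$, since otherwise $E$ would have to attack a member of $E_1$.
(ii) No self-loop is added at $c$. Any $a\in U_{E_1}$ with $(a,c)\in R_3$ must be attacked by $E$. As $a\in A_1$, this can only happen via $R_1$ from $E_1$, giving $a\in (E_1)^+_{R_1}$, a contradiction.
(iii) Take $a\in A_2'$ with $(a,c)\in R_2$; together with added self-loops, these are all attackers of $c$ in $F_2^\star$. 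Some $e\in E$ attacks $a$. We have $e\notin E_1$, else $a\in (E_1)^+_{R_3}$. So $e\in E_2\subseteq A_2'$ and $(e,a)\in R_2'$.
Hence $E_2$ defends $c$ in $F_2^\star$, and $c\in E_2$.

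\textbf{A smaller issue: preferred semantics, Direction~2.} Suppose $S_1$ is admissible in $F_1$ and properly contains $E\cap A_1$. You cannot simply ``recombine'' $S_1$ with $E\cap A_2$ via Direction~1, because the reduct and modification change with $S_1$. You must check directly that $S_1\cup (E\cap A_2)\in\adm(F)$. This does hold: if some $a\in S_1$ attacked $c\in E\cap A_2$, then $E$ would have to attack $a$ from $E\cap A_1\subseteq S_1$, contradicting conflict-freeness of $S_1$. The paper instead obtains this component from directionality of preferred semantics.
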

	
	Later, this idea has been generalised by relaxing the strict separation requirement, which significantly narrows the applicability of splitting, introducing so-called \textit{parametrised splitting}~\cite{BaumannBDW12}. Instead of demanding that the first part is completely unaffected by the second, it allows some forms of interaction. This generalisation is captured by the notion of \emph{quasi-splitting}, where arguments in $F_1$ may be externally attacked by arguments in $F_2$. The goal is to preserve correctness while broadening the applicability of splitting. This is achieved by enriching $F_1$ with meta-information that encodes facts about potential influences (e.g.\ attacks) from the second sub-framework. 
	In particular, for each externally attacked argument $a$, a fresh argument $a'$ is added to $F_1$ along with a symmetric attack on $a$, enforcing a choice between $a$ and $a'$ in $F_1$. Then, $F_2$ is modified accordingly: the previous choices are propagated in the second sub-framework via the reduct as well as additional nodes and attacks. Stable extensions of the entire AF are then recovered by composing compatible solutions from the two modified sub-frameworks.

	\section{Splitting Collective Attacks}\label{sec:SETAF}
	
	Towards a splitting approach for ABA we first introduce a splitting scheme for argumentation frameworks with collective attacks (SETAFs).
	First, due to their correspondence with ABAFs this paves the way for the splitting scheme on the ABA knowledge base that we will introduce in the next second.
	Second, this provides a divide-and-conquer methodology to enhance existing solvers for SETAFs~\cite{DvorakGW18,GresslerDW24}, and thus allows for splitting when solving ABAFs via instantiations to SETAFs.
	
	We introduce a notion of splitting for SETAFs that generalises the one for Dung-style AFs. 
	
	\begin{definition}
		\label{def:setaf splitting}
		Let $SF=(A,R)$ be a SETAF, $SF_1=(A_1,R_1)$ and $SF_2=(A_2,R_2)$ two sub-frameworks of $SF$ such that $A_1\cap A_2=\emptyset$, $A=A_1 \cup A_2$ and $R=R_1 \cup R_2 \cup R_3$ with $R_3\subseteq \big((2^{A_1}\setminus \{\emptyset\}  )\cup 2^{A_2}\big) \times A_2$. We call a \emph{splitting} of $SF$ the triple $(SF_1,SF_2,R_3)$. Moreover, we call $R_3$ the set of \emph{links} w.r.t.\ $(SF_1,SF_2,R_3)$ and say that a link is \emph{undecided} if no argument in its tail is defeated (i.e.\ attacked by an extension), but at least one is undecided.
	\end{definition}
	
	As for AFs, the general idea is to compute extensions of $SF$ as a combination of extensions of $SF_1$ and $SF_2$.
	Due to the links from $SF_1$ to $SF_2$ we have to modify $SF_2$ according to the extension(s) of $SF_1$ to account for the prior accepted and rejected arguments.
	Following~\citeauthor{Baumann11}~(\citeyear{Baumann11}), we introduce the notions of \emph{reduct} and \emph{modification}, in application to the second part (that is, $SF_2$) of the original SETAF.
	Intuitively, the reduct takes care of the arguments in $SF_2$ that are already defeated by $E_1$ by deleting them. Moreover, the reduct removes an attack in $R_2$ if one of
	the arguments in the tail or the head is defeated by $E_1$.
	The attacks of $R_3$ are first filtered and the simplified by the reduct in order to fit to the new argument set. An attack is neglected if either a tail or the head argument is defeated by $E_1$, all the tail arguments are in $A_1$, or there is a
	tail arguments in $A_1$ that is not accepted by $E_1$. The remaining attacks are simplified by removing the arguments of $A_1$ from the tail of the attack.

	\begin{definition}[Reduct]\label{def_reduct}	
		Let $(SF_1,SF_2,R_3)$ be a splitting for a SETAF $SF$. We define the \emph{$(E_1,R_3)$-reduct} (or simply reduct) of $SF_2$ for some extension $E_1$ of $SF_1$ as the SETAF $SF'_2=(A'_2,R'_2)$ with: 
		\begin{align*}
			A'_2=&\{a\in A_2\mid a \notin (E_1)_{R_3}^+\};\\ 
			R'_2=&
			\{ (T,h)\in R_2\mid T\subseteq A'_2,h\in A'_2  \}
			\;\cup \\
			&\{(T\setminus A_1, h) \mid (T,h)\in R_3,\ T \setminus A_1 \neq \emptyset, \\
			&\hspace{38pt} T\cap A_1\subseteq E_1, T\cap (E_1)^+_{R_3}=\emptyset,\ h\in A_2'\}.
		\end{align*}
	\end{definition}

	When dealing with undecidedness, what guides our intuition towards a certain modification is not the status of the arguments in $SF_1$, but rather the status of the \emph{links}.
	Hence, we slightly tweak the original definition and base our notion solely on the undecided \emph{links}.
	\begin{definition}[Undecided Links]
		Given a splitting $(SF_1,SF_2,R_3)$ for a SETAF $SF$ and an extension $E_1\in SF_1$ we define the \emph{set of undecided links} w.r.t.\ $E_1$ as:
		\begin{align*}
			U^{E_1}_{R_3}= \; &   \{(T,h)\in R_3 \mid T\cap (E_1)^+_{R_1\cup R_3}=\emptyset,  \\
			& \hspace{62pt} \exists t\in T: t\in A_1\setminus (E_1)_{R_1}^\oplus \}.
		\end{align*}	
	\end{definition}
	
	In what follows, we define the \emph{modification}, which is applied on the reduct, and accounts for the effects of the undecided links. 
	In particular, for each undecided link $(S,t) \in R_3$ we add to $F'_2$ a (set-)self-attack from $t$ (together with the $F'_2$-part of the attack) to itself.     
	\begin{definition}[Modification]\label{def_mod}
		Let $(SF_1,SF_2,R_3)$ be a splitting for a SETAF $SF$ and $E_1$ an extension of $SF_1$. Take $SF'_2$ as the $(E_1,R_3)$-reduct of $SF_2$ and $U^{E_1}_{R_3}$ as the set of undecided links w.r.t.\ $E_1$. We denote with $mod^{E_1}_{R_3}(SF'_2)=SF^{\star}_2=(A^{\star}_2,R^{\star}_2)$ the $U^{E_1}_{R_3}$-modification (or simply modification) of $SF'_2$ s.t. $A^{\star}_2=A'_2$ and $R^{\star}_2$ is given by:
		\begin{align*}
			R'_2\cup \{((T \cap A'_2) \cup \{h\}, h) \mid (T,h)\in U^{E_1}_{R_3}, \; h\in A_2'\}.
		\end{align*}
	\end{definition}
	
	Before we present the splitting theorem we illustrate Definitions~\ref{def_reduct}--\ref{def_mod} in the following example.
	\begin{example}
		\label{ex_big_example}
        Consider the SETAF $SF$ of Example~\ref{ex:prel} that separates arguments in $A_1=\{a,b\}$ from those in $A_2=\{v,w,x,y,z\}$. We select the preferred extension $E_1=\{a\}$ of $SF_1$, and compute the reduct $SF'_2$. This is obtained by: (i) deleting $v$, as it is attacked by $E_1$; (ii) projecting $(\{a,w\},x)$ to $(\{w\},x)$ since $a\in E_1$; and (iii) neglecting the attack $(\{b,z\},y)$ because $b\notin E_1$. Subsequently, we construct the modification $SF^{\star}_2$. For this, case (iii) is crucial: the attack $(\{b,z\},y)$ is an \emph{undecided link} since $b$ is not in the range of $E_1$ and $\{b,z\}$ is not attacked by it. Thus, we introduce set-self-attack $(\{y,z\},y)$ to $SF'_2$. 
	\begin{center}
		\begin{tikzpicture}[scale=0.85,>=stealth]
			\begin{scope}[shift={(0,0)}]
				\path
				(0.25,0.5) node () {$SF'_2$}
			(1.5,-0.1) node[argd] (a){$a$}
			(3.5,-0.1) node[argd](b){$b$}
			(2.6,0.6) node[argd](v){$v$}
			(1.1,1) node[arg] (w) {$w$}
			(2,1) node[arg] (x) {$x$}
			(3.2,1) node[arg] (y) {$y$}
			(4,1) node[arg] (z) {$z$}
			;
			
			\path[->,>=stealth,thick]
			(a) edge[gray!50] (v)
			(x) edge (y)
			(b) edge[loop right,gray!50] (b)
			(b) edge[out=60,in=-45,gray!50] (y)
			(z) edge[out=-90,in=-45,gray!50] (y)
			(a) edge[out=90,in=-135,gray!50] (x)
			(w) edge[out=-45,in=-135] (x)
			;		
			\draw [thick,dashed,red] (0.75,0.25) -- (4.3,0.25);
		\end{scope}
		\begin{scope}[shift={(5.4,0)}]
	\path
		(0.25,0.5) node () {$SF^{\star}_2$}
(1.5,-0.1) node[argd] (a){$a$}
(3.5,-0.1) node[argd](b){$b$}
(2.6,0.6) node[argd](v){$v$}
(1.1,1) node[arg] (w) {$w$}
(2,1) node[arg] (x) {$x$}
(3.2,1) node[arg] (y) {$y$}
(4,1) node[arg] (z) {$z$}
;

\path[->,>=stealth,thick]
(a) edge[gray!50] (v)
(x) edge (y)
(b) edge[loop right,gray!50] (b)
(b) edge[out=60,in=-45,gray!50] (y)
(z) edge[out=-90,in=-45] (y)
(y) edge[out=-95,in=-45,loop,looseness=5] (y)
(a) edge[out=90,in=-135,gray!50] (x)
(w) edge[out=-45,in=-135] (x)
;		
\draw [thick,dashed,red] (0.75,0.25) -- (4.3,0.25);
		\end{scope}	
		\end{tikzpicture}
	\end{center}
Finally, we obtain $E_2=\{w,z\}$ as the preferred extension of $SF^{\star}_2$, retrieving $E=E_1\cup E_2=\{a,w,z\}$ as a preferred extension of $SF$.	 

	\end{example}
	Having these notions at hand, we now establish the adequacy of the splitting technique for SETAFs.
	We start by establishing that (a) conflict-freeness of the sub-frameworks $SF_1$ and $SF_2$ carries over to the whole SETAF $SF$,
	and (b) conflict-free sets of $SF$ induce conflict-free subsets in $SF_1$ and $SF_2'$.
	\begin{restatable}{proposition}{SETAFsplittingCF}\label{prop: cf splitting}
		Let $(SF_1,SF_2,R_3)$ be a splitting for a SETAF $SF=(A,R)$ with $SF_1=(A_1,R_1)$ and $SF_2=(A_2,R_2)$. Let $SF'_2=SF^{E_1}_2$.
		\begin{enumerate}
			\item If $E_1\in \cf(SF_1)$ and $E_2\in \cf(mod^{E_1}_{R_3}(SF'_2))$, then $E_1\cup E_2\in \cf(SF)$. 
			\item If $E\in \cf(SF)$, then $E_1=E\cap A_1\in \cf(SF_1)$ and $E\cap A_2\in \cf(SF'_2)$.
		\end{enumerate}
	\end{restatable}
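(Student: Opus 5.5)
Both items follow by unfolding the definitions. The attacks of $SF$ split as $R=R_1\cup R_2\cup R_3$, and every attack of the modification $SF^\star_2=mod^{E_1}_{R_3}(SF'_2)$ is the trace of an attack of $SF$ restricted to $A'_2$. Throughout, $SF'_2$ is the $(E_1,R_3)$-reduct of Definition~\ref{def_reduct}. We only use the defining property of conflict-freeness: no attack $(T,h)$ has $T\cup\{h\}\subseteq S$. We also use that $E_1\cap A_2=\emptyset$ and $E_2\subseteq A'_2\subseteq A_2$.

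\textbf{Item 1.} Let $E=E_1\cup E_2$ and suppose some $(T,h)\in R$ has $T\cup\{h\}\subseteq E$. We case-split on where the attack lives.
\begin{itemize}
\item If $(T,h)\in R_1$, then $T\cup\{h\}\subseteq A_1$, so $T\cup\{h\}\subseteq E_1$, contradicting $E_1\in\cf(SF_1)$.
\item If $(T,h)\in R_2$, then $T\cup\{h\}\subseteq E_2\subseteq A'_2$. Hence $(T,h)$ survives into $R'_2\subseteq R^\star_2$, contradicting $E_2\in\cf(SF^\star_2)$.
\item If $(T,h)\in R_3$, then $h\in E_2\subseteq A'_2$ and $T\cap A_1\subseteq E_1$.
\begin{itemize}
\item If $T\subseteq A_1$, the tail is non-empty by Definition~\ref{def:setaf splitting}, so $E_1$ attacks $h$ via $R_3$. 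Then $h\in(E_1)^+_{R_3}$, so $h\notin A'_2$, a contradiction.
\item Otherwise $T\setminus A_1\neq\emptyset$ and $T\setminus A_1\subseteq E_2\subseteq A'_2$. Also $T\cap A_1\subseteq E_1$, and $E_1$ is conflict-free in $SF_1$, so no element of $T\cap A_1$ is in $(E_1)^+_{R_3}$, since those lie in $A_2$ anyway. No element of $T\setminus A_1$ is in $(E_1)^+_{R_3}$ either, since it lies in $A'_2$. All side conditions of the second component of $R'_2$ therefore hold, and $(T\setminus A_1,h)\in R'_2$, contradicting conflict-freeness of $E_2$.
\end{itemize}
\end{itemize}
The self-attacks added by the modification only add constraints on $E_2$, so they never obstruct this direction.

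\textbf{Item 2.} Let $E\in\cf(SF)$. Then $E\cap A_1\in\cf(SF_1)$ immediately, since $R_1\subseteq R$. For the second claim, read the reduct with respect to $E_1=E\cap A_1$. First, $E\cap A_2\subseteq A'_2$: if some $a\in E\cap A_2$ were in $(E_1)^+_{R_3}$, then some $(T,a)\in R_3$ would have $T\subseteq E_1\subseteq E$, contradicting $E\in\cf(SF)$. Next we check the two kinds of attacks in $R'_2$.
\begin{itemize}
\item Attacks inherited from $R_2$ are attacks of $SF$, so $E\cap A_2$ contains no such attack.
\item Consider a projected attack $(T\setminus A_1,h)$ with $T\setminus A_1\cup\{h\}\subseteq E\cap A_2$. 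By the reduct condition $T\cap A_1\subseteq E_1\subseteq E$. Hence $T\cup\{h\}\subseteq E$ with $(T,h)\in R$, contradicting conflict-freeness of $E$.
\end{itemize}

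The only delicate point is the $R_3$ case of item 1, namely verifying every side condition of the reduct. This includes the non-emptiness requirement on $R_3$-tails within $A_1$, which is exactly what rules out the case $T\subseteq A_1$. Note that item 2 only asserts conflict-freeness in $SF'_2$, not in the modification. This is essential: $E$ may contain an $h$ with an undecided link whose added self-attack $((T\cap A'_2)\cup\{h\},h)$ lies inside $E$. So we deliberately do not attempt that stronger claim.
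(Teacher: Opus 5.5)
Your proof is correct and follows the paper's argument essentially step for step. For item 1 you use the same case split of a putative conflict over $R_1$, $R_2$, $R_3$, with the $R_3$ case divided into $T\subseteq A_1$ versus a surviving projected attack in the reduct, and for item 2 the same check that $E\cap A_2\subseteq A'_2$ and that both kinds of reduct attacks lift back to attacks of $SF$; as minor asides, non-emptiness of $R_3$-tails is not what disposes of the case $T\subseteq A_1$ (even an empty tail would satisfy $T\subseteq E_1$ and put $h$ into $(E_1)^+_{R_3}$), and conflict-freeness of $E_1$ plays no role in checking $T\cap(E_1)^+_{R_3}=\emptyset$, which holds simply because $(E_1)^+_{R_3}\subseteq A_2$.
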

	
	Finally, we are ready to characterize the splitting algorithm by generalising the splitting theorem for SETAFs under the standard Dung semantics. 
	\begin{restatable}{theorem}{SETAFsplitting}\label{theorem: SETAF splitting}
		Let $(SF_1,SF_2,R_3)$ be a splitting for a SETAF $SF=(A,R)$ with $SF_1=(A_1,R_1)$, $SF_2=(A_2,R_2)$,  and $\sigma \in \{\stb,\adm,\com,\prf,\grd\}$. Further, let $SF_2^{\star}=mod^{E_1}_{R_3}(SF'_2)$. 
		\begin{enumerate}
			\item If $E_1\in \sigma(SF_1)$ and $E_2\in \sigma(SF_2^{\star})$, then $E_1\cup E_2\in \sigma(SF)$. 
			\item If $E\in \sigma(SF)$, then $E_1=E\cap A_1\in \sigma(SF_1)$ and $E\cap A_2\in \sigma(SF_2^{\star})$.
		\end{enumerate}
	\end{restatable}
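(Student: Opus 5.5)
The proof goes semantics by semantics, in the order $\adm$, $\com$, $\stb$, then $\prf$ and $\grd$, using Proposition~\ref{prop: cf splitting} for conflict-freeness throughout. Write $E=E_1\cup E_2$ with $E_1\subseteq A_1$ and $E_2\subseteq A_2'$.

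\textbf{Attack transfer lemma.} The central tool is a lemma relating attacks in $SF$ to attacks in $SF_1$ and $SF_2^\star$. Since $R_3$ only targets $A_2$, an argument $a\in A_1$ is attacked by $E$ in $SF$ iff it is attacked by $E_1$ in $SF_1$. For $a\in A_2$, $E$ attacks $a$ in $SF$ iff one of the following holds:
\begin{itemize}
\item[(i)] $a\in (E_1)^+_{R_3}$, so $a$ is deleted in the reduct;
\item[(ii)] $E_2$ attacks $a$ in $SF_2'$.
\end{itemize}
This holds because an attack $(T,a)\in R_3$ with $T\subseteq E$ either has $T\subseteq E_1$ and gives (i), or, when $T\cap A_2\neq\emptyset$, satisfies $T\cap A_1\subseteq E_1$ and is projected to $(T\setminus A_1,a)\in R_2'$. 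The side condition $T\cap (E_1)^+_{R_3}=\emptyset$ is automatic when $E$ is conflict-free. The lemma should also describe the self-attacks added by the modification. They are harmless for $E_2$: a tail $(T\cap A_2')\cup\{h\}$ lies inside $E_2$ only if $h\in E_2$, and that is excluded by conflict-freeness in $SF_2^\star$. Their role is to forbid accepting $h$ while the undecided link $(T,h)$ is still unattacked by $E_2$.

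\textbf{Part 1 for $\adm$ and $\com$.} Let $(S,a)\in R$ with $a\in E$. If $a\in A_1$, the attack lies in $R_1$ and $E_1$ counterattacks $S$. If $a\in E_2$ and the attack is in $R_2$ or is a projected $R_3$ link, then $E_2$ defends $a$ in $SF_2^\star$: either $E_2$ attacks the $A_2'$-part of $S$, or $E_1$ defeats some $A_2$-argument of $S$, which is why such arguments are absent from $A_2'$. If the link is undecided, the added self-attack $((S\cap A_2')\cup\{a\},a)$ must itself be countered by $E_2$. Since $a\in E_2$ and $E_2$ is conflict-free, $E_2$ attacks some element of $S\cap A_2'$.

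\textbf{Part 2 for $\adm$ and $\com$.} Here the roles reverse. Every attacker in $SF_2^\star$ lifts back to an attacker in $SF$. An added self-attack lifts to the undecided link $(T,h)$, and $E$ must attack some argument of $T$. That argument cannot be in $A_1$: $E_1$ does not attack $T\cap A_1$ by undecidedness, and admissibility in $SF_1$ is inherited because $R_3$ never targets $A_1$. So $E$ attacks $T\cap A_2'$, as required.

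\textbf{Completeness.} This needs the converse direction: if $E_2$ defends $b\in A_2'$ in $SF_2^\star$, then $E$ defends $b$ in $SF$. The same case analysis applies. The undecided-link case is the delicate one: if $E_2$ does not attack $T\cap A_2'$, then the self-attack on $b$ is not countered, so $b$ is not defended in $SF_2^\star$. A similar argument is needed to show that $E_1$ contains everything it defends.

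\textbf{Stable semantics.} This follows from the attack transfer lemma. For Part 2 I must also rule out undecided links. A stable $E_1$ has full range on $A_1$, so no tail argument in $A_1$ is undecided, and $U^{E_1}_{R_3}=\emptyset$.

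\textbf{$\prf$ and $\grd$.} These reduce to $\com$ via maximality and minimality. For preferred: given $E_1\in\prf(SF_1)$ and $E_2\in\prf(SF_2^\star)$, suppose some complete $E'\supsetneq E$ exists in $SF$. By Part 2 for $\com$, $E'\cap A_1$ is complete in $SF_1$ and contains $E_1$, so it equals $E_1$. The reduct and the modification then coincide, so $E'\cap A_2$ is complete in the same $SF_2^\star$, which contradicts the maximality of $E_2$. For grounded, I would use uniqueness: $E_1$ is the least complete extension, and the corresponding argument goes through with minimality in place of maximality. The converse directions are analogous.

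\textbf{Main obstacle.} I expect the undecided-link bookkeeping in the complete case to be the hardest step. An $R_3$ attack whose tail mixes arguments undecided in $A_1$ with arguments in $A_2$ is neither deleted nor projected, so the lemma has to show that the added self-attack faithfully simulates it in both directions. A second difficulty is that the undecided links are computed relative to $E_1$, so in the preferred/grounded step I must make sure the same $SF_2^\star$ is used on both sides of the comparison.
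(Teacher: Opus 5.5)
Your outline has the same structure as the paper's proof. It gets conflict-freeness from Proposition~\ref{prop: cf splitting} and uses the same case analysis for moving attacks and counter-attacks through the reduct and the modification (you package it as an attack-transfer lemma; the paper does it inline). It uses $U^{E_1}_{R_3}=\emptyset$ for $\stb$ and handles $\prf$/$\grd$ via maximality/minimality. The step that fails is ``the converse directions are analogous'' for $\prf$, item~2, first component, i.e.\ $E\in\prf(SF)\Rightarrow E\cap A_1\in\prf(SF_1)$. The analogue of your item-1 argument would take a complete proper superset $S_1\supset E\cap A_1$ in $SF_1$. It would then extend $S_1$, via item~1, by some extension of the modification computed w.r.t.\ $S_1$, which is not the same $SF_2^\star$. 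The result is an extension of $SF$ containing $S_1$, but nothing forces it to contain $E\cap A_2$. So it need not properly extend $E$, and there is no contradiction. For $\grd$ the analogue does work, since every complete extension of $SF$ contains the grounded one; preferred extensions have no such anchor.

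The paper closes this step by invoking directionality of preferred semantics, $A_1$ being unattacked from $A_2$. A direct fix also works: if $S_1\supset E\cap A_1$ (properly) is admissible in $SF_1$, then $S_1\cup(E\cap A_2)$ is admissible in $SF$ and properly contains $E$.
\begin{itemize}
\item Take an attack $(T,h)$ inside this set. If $h\in A_1$, the attack lies in $R_1$ and contradicts $S_1\in\cf(SF_1)$.
\item If $h\in E\cap A_2$, then $E$ attacks some $t\in T$. If $t\in A_1$, then $E\cap A_1\subseteq S_1$ attacks a member of $S_1$. If $t\in A_2$, this contradicts $E\in\cf(SF)$.
\item Defence is inherited from $S_1$ and from $E$.
\end{itemize}
Two smaller points need tightening. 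First, Proposition~\ref{prop: cf splitting} only yields $E\cap A_2\in\cf(SF_2')$, so in item~2 you still have to exclude the added self-attacks. Your Part~2 argument does this: for an undecided $(T,h)$ with $h\in E$, $E$ attacks some $t\in T\cap A_2'$, so $t\notin E$. Second, for $\com$, item~1 needs the direction ``$E$ defends $b\in A_2$ in $SF$ $\Rightarrow$ $b\in A_2'$ and $E_2$ defends $b$ in $SF_2^\star$''. That is your Part~2 admissibility argument applied to $b\notin E_2$. In the direction you do state, add that $E_2$ cannot counter the self-attack by attacking $b$ itself, since it would then have to attack a subset of itself to defend $b$.
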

	
	While the existing instantiation procedure from ABA frameworks to SETAFs provides a foundation for defining splitting, attempting to directly replicate the SETAF-style idea of splitting among assumptions fails to yield a natural notion of splitting.
	This disconnect stems from a fundamental structural difference: in SETAFs, attacks are primitive, whereas in ABA, they are derived from the underlying deductive system 
	$(\lit,\rules)$. As a result, naively mimicking SETAF-style splitting in ABA would require (i) arbitrarily partitioning the assumption set into $\asm_1$ and $\asm_2$, and (ii) computing attacks as derivations from assumptions in $\asm_1$ to those in $\asm_2$.
	However, splitting should be possible solely by inspecting the knowledge base at hand. 
	Moreover, while instantiating ABAFs into SETAFs has been shown useful in specific contexts \cite{RussoRT24,BuraglioD0024}, this approach comes with a critical drawback: it can yield an exponential growth in the number of collective attacks generated, thus increasing in input size. This inefficiency motivates many ABA solvers to operate directly on ABAFs rather than relying on their abstract representations.
	Therefore, to enable an efficient form of splitting, we propose a dedicated splitting algorithm tailored to the syntactic structure of ABAFs.

	\section{Splitting in Assumption-Based Argumentation}\label{sec:ABA}
	In this section we present splitting results for ABAFs.  
	The rule-set of an ABAF is split into a bottom and a top part whenever no assumption occurs in the bottom part whose contrary is derived by some rule in the top. 
	This ensures that the assumptions in the bottom can be evaluated independently of what can be deduced by inspecting the top part. We capture this intuition via the notion of splitting set:
	\begin{definition}\label{def: spl aba}
		Given an ABAF $D=(\lit,\rules,\asm,\contraryempty)$, a set $S\subseteq \lit$ is a splitting set (or simply a splitting) of $D$ if $S=atom(S)$ and for all $r \in \mathcal{R}$, $head(r)\in S$ \text{ implies } $body(r)\subseteq S$. 
	\end{definition}
	
	A splitting set partitions the deductive system into two sub-systems $(\lit_1$,\rules$_1)$ and $(\lit_2$,\rules$_2)$, called the `bottom' and `top'. In particular, we have (i) $\lit_1=S$ and $\rules_1=\{r \in \rules \mid head(r)\in S\}$ and (ii) $\lit_2=\lit\setminus S$ and $\rules_2=\{r \in \rules \mid head(r)\notin S\}$. These induce respectively $D_1=(\lit_1,\rules_1,\asm_1,\contraryempty^1)$ and $D_2=(\lit_2 ,\rules_2,\asm_2,\contraryempty^2)$ with $\asm_i=\lit_i\cap \asm$ and $\contraryempty^i$ defined over $\asm_i$. 
	
	\begin{example}\label{ex: big ABAF}
		Consider again the ABAF $D=(\lit,\rules,\asm,\contraryempty)$ from Example~\ref{ex:prel}, with assumptions $\asm=\{a,b,v,w,x,y,z\}$, sentences $\lit=\asm \cup \contrary{\asm} \cup \{p\}$, and rules $\rules$ as follows: 
		\begin{align*}
            \contrary{b}& \gets b && p\gets a &&
            \contrary{v} \gets a && \contrary{x} \gets w,p && \contrary{y} \gets x && \contrary{y}\gets \contrary{b},z
		\end{align*}
		Take the set $S=\{a,b,\contrary{a},\contrary{b},p\}$.
        It can be easily checked that $S$ is a splitting set of $D$, through which we obtain two sub-systems $(\lit_1$, \rules$_1)$ (bottom) and $(\lit_2$,\rules$_2)$ (top) such that:
        \begin{itemize}
            \item $\lit_1=S$ and $\lit_2=\lit\setminus S$;
            \item $\rules_1=\{\contrary{b} \gets b, \; p\gets a\}$ and $\rules_2=\rules\setminus \rules_1$. 
        \end{itemize}
	\end{example}
	
	Notice that some atoms contained in $\lit_1$ (but not in $\lit_2$) may occur in the body of some rule in $\rules_2$ ($a$, $\contrary{b}$ and $p$ in Example \ref{ex: big ABAF}). This intermediate mismatch will be resolved later by the notion of \textit{reduct}. Moreover, their occurrences in $\rules_2$ does not affect the acceptance status of such atoms.   
    In fact, a first sanity check, we observe that our notion of splitting prevents building attacks from assumptions of $D_2$ towards assumptions of $D_1$ using top-rules in $\rules_2$. This is ensured by the fact that contraries of assumptions occurring in the bottom part are not derived via rules in the top part (via construction of $\rules_2$). 
    As a result, assumptions in $\asm_1$ are attacked only via rules in $\rules_1$ by assumptions in $\asm_1$. Thus, no attack generated from $\asm_2$ (by means of rules in $\rules_2$) is directed towards $\asm_1$. 
	
	\begin{restatable}{proposition}{ABAattacksConserv}\label{pro:att1-conserv}
		Let $D$ be an ABAF and $S$ a set of literals that splits $D$ into $D_1$ and $D_2$. For every derivation $T\vdash^R \contrary{a}$ with $a\in \asm_1$, it holds that $R\subseteq \rules_1$ and $T\subseteq \asm_1$. 
	\end{restatable}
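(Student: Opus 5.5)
The plan is an induction on the structure of the derivation tree witnessing $T\vdash^R \contrary{a}$. I will prove a slightly stronger claim: for every $q\in S$ and every derivation tree $\mathcal{T}$ with root $q$, every label in $\mathcal{T}$ lies in $S$. Every rule used at an inner node then has its head in $S$, and so belongs to $\rules_1$. Every leaf label other than $\top$ lies in $S\cap\asm=\asm_1$. Here $R$ is taken to be the set of rules actually used in the tree; if $R$ is meant as an arbitrary superset, the statement is read as ``there is such a derivation using only rules of $\rules_1$''.

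The first step is to show that the root lies in $S$. Since $a\in\asm_1\subseteq S$ and $S=atom(S)$, the definition of $atom$ gives $\contrary{a}\in atom(S)$, because $\alpha(\contrary{a})=a\in S$. Hence $\contrary{a}\in S$.

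The inductive step goes along the depth of the tree. Let $v$ be an inner node with label $p\in S$. Then $v$ is labelled with $head(r)=p$ for some rule $r$. Since $head(r)\in S$, the closure condition in Definition~\ref{def: spl aba} gives $body(r)\subseteq S$, and by construction $r\in\rules_1$. The children of $v$ are labelled with elements of $body(r)$, which lie in $S$, or with $\top$ when $r$ is a fact. So every non-$\top$ label propagates down into $S$.

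From this, every leaf label in $T$ is a sentence of $S$. A leaf of a derivation tree carrying a non-$\top$ label is an element of the supporting assumption set, so $T\subseteq S\cap\asm=\asm_1$. The only point needing care is the convention that leaves labelled by assumptions are precisely the members of $T$, while inner nodes carry rule heads; I expect no real obstacle. The one subtle step is the use of $S=atom(S)$ to get $\contrary{a}\in S$, since without it the rules deriving $\contrary{a}$ could lie in $\rules_2$.
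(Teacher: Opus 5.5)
Your proposal is correct and follows essentially the same route as the paper. The paper also inducts on the depth of the derivation tree, uses $S=atom(S)$ to get $\contrary{a}\in S$, and applies the closure condition of Definition~\ref{def: spl aba} to push membership in $S$ down through rule bodies, concluding $R\subseteq\rules_1$ and $T\subseteq\asm_1$. Your invariant that every label of the tree lies in $S$ is a slightly cleaner way to organise the same induction.
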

	
	The attacks of the bottom part can be extended in a conservative way: whatever happens in the second sub-framework does not affect the acceptability of assumptions in $D_1$.   
	Thus, to compute incrementally an extension of an ABAF $D$, we can first select an extension $E$ of $D_1$ and later modify $D_2$ according to the information contained in $E$. Consequently, we can evaluate the modified framework $D_2$ and augment its extensions with $E$. 
	Again, we follow the approach of Baumann and appeal to the notions of \textit{reduct} and \textit{modification} in application to $D_2$ via a two-step process. 
	First, we propagate all the information we get from a $\sigma$-extension $E$ of $D_1$ to ensure that rules in contrast with $E$ are removed. The outcome is called the $E$-reduct of $D_2$.
	
	\begin{definition}
		Let $D=(\lit,\rules,\asm,\contraryempty)$ be an ABAF, $S$ a splitting set of $D$ into $D_1$ and $D_2$ and $E$ a $\sigma$-extension of $D_1$. We call $D^E_2=(\lit_2,\rules^E_2,\asm_2,\contraryempty^2)$ the $E$-reduct (or simply reduct) of $D_2$, where $\rules^E_2$ is obtained by deleting:
		\begin{itemize}
			\item each rule $r\in \rules_2$ with $body(r)\cap S \not \subseteq \theory_{D_1}(E)$; 
			\item all literals in $\theory_{D_1}(E)$ from the remaining rules. 
		\end{itemize}
	\end{definition}
	
	As we anticipated, the rule-set $\rules^E_2$ now contains all and only those atoms occurring in $\lit_2$. Therefore, the reduct can be evaluated in complete isolation from $D_1$. 
	In the second step, we modify the reduct to propagate the information about assumptions (or their contraries) which are not contained in $\theory_{D_1}(E)$. We call these assumptions \textit{undecided}, as they are not in $E$ nor their contrary is derivable from it (i.e. are not attacked by $E$). Then, the  \textit{set of undecided assumptions} of $D_1$ w.r.t.\ $E$ is $\mathsf{UA}_{D_1}(E)=\{a \in \asm_1 \mid a \notin E \text{ and } \contrary{a} \notin \theory_{D_1}(E)\}$. Since their status can be transmitted to other assumptions via rules, we define the concept of \emph{undecided theory} of $D_1$, capturing all sentences derived from some undecided (but no defeated) assumptions.  
	
	\begin{definition}
		Let $D=(\lit,\rules,\asm,\contraryempty)$ be an ABAF and $E\in \sigma(D)$. 
		The \emph{undecided theory} $\mathsf{UT}_{D}(E)$ of $D$ w.r.t. $E$ is the set of sentences $p\in \lit$ for which there is a $ T\subseteq \asm$ such that
		\begin{inparaenum}[(i)]
			\item $ T\vdash p$,
			\item $T\cap \mathsf{UA}_{D}(E)\neq \emptyset$, and
			\item $\contrary{T}\cap Th_{D}(E)=\emptyset$.
		\end{inparaenum}
	\end{definition}
	
	Rules in $D_2$ whose bodies contain elements of $\mathsf{UT}_{D_1}(E)$ might carry over undecidedness from $D_1$. However, this scenario could be overwritten by the presence of \textit{incompatible sentences} w.r.t.\ $E$, captured by $\mathsf{IS}_{D_1}(E)=Th_{D_1}(E^+_{\rules_1}) \cup \contrary{E}$, where $E^+_{\rules_1}=\{a\in \asm_1 \mid E\vdash^R \contrary{a}, R\subseteq \rules_1\}$. 
	Hence, a set of sentences from $D_1$ will carry undecidedness to sentences in $D_2$ if and only if (i) none of its elements is incompatible and (ii) at least one of its elements is in the undecided theory w.r.t. the previously selected extension. 
	This concept mirrors the notion of undecided links for SETAFs. 
	
	We are now in the position to formally define the \textit{modification} of $D^E_2$.  
	First, we expand the set of sentences with a fresh assumption $x_u$ and corresponding contrary. Further, we introduce (i) a loop-rule for $x_u$ and (ii) a modified version of every rule with some undecided (but no incompatible) sentence in the body. In particular, we expand their body with $x_u$, after projecting to $\lit_2$. 
	
	\begin{definition}\label{def: aba-modif}
		Let $D$ be an ABAF, $S$ a set that splits $D$ into two sub-frameworks $D_1$ and $D_2$ and $E$ an extension of $D_1$. Further, let $D'_2$ be the $E$-reduct of $D_2$. We use $mod^E_{D_1}(D'_2)= D^{\star}_2=(\lit^\star_2,\rules^\star_2,\asm^\star_2,\contraryempty^\star)$ to denote the $E$-modification (or simply modification) of $D'_2$ where $D^{\star}_2=D'_2$ if $\mathsf{UA}_{D_1}(E)=\emptyset$, and otherwise: 
		\begin{align*}
			&\lit^\star_2=  \lit_2 \cup \{x_u, \contrary{x_u}\}; \\ 
			&\rules^\star_2=  \rules'_2  \cup \{\contrary{x_u}\gets x_u\} \ \cup \\
            &\hspace{23pt} \{head(r)\gets (body(r)\cap \lit_2) \cup \{x_u\}  \mid r\in \rules_2,\\
			& \hspace{29pt} body(r)\cap \mathsf{IS}_{D_1}(E)=\emptyset, body(r) \cap \mathsf{UT}_{D_1}(E)\neq \emptyset \}.
		\end{align*}
	\end{definition}
	
	\begin{example}
		Consider again the ABAF $D=(\lit,\rules,\asm,\contraryempty)$ from Example \ref{ex: big ABAF} and splitting set $S=\{a,b,\contrary{a},\contrary{b},p\}$. We know that $\{a\}\in \prf(D_1)$. Therefore, the $\{a\}$-reduct of $D_2$ is $D_2^{\{a\}}=(\lit_2,\rules^{\{a\}}_2,\asm_2,\contraryempty^2)$, where the rule-set $\rules^{\{a\}}_2$ is:
		\begin{align*}
            \contrary{v} & \gets \textcolor{gray!70}{a} && \contrary{x} \gets w\textcolor{gray!70}{,p} && \contrary{y} \gets x && \textcolor{gray!70}{\contrary{y}\gets \contrary{b},z} 
		\end{align*}
		
		Moreover, $\mathsf{UA}_{D_1}({\{a\}})=\{b\}$ is the set of undecided assumptions and $UT_{D_1}(\{a\})=\{b,\contrary{b}\}$. We then compute the modification $D_2^{\star}$ w.r.t. $\{a\}$ by expanding the set of sentences $\lit_2$ with $\{x_u, \contrary{x_u}\}$ and the rule-set $\rules^{\{a\}}_2$ above such that: 
		\begin{align*}
            \contrary{v} & \gets && \contrary{x} \gets w && \contrary{y} \gets x && \contrary{y}\gets x_u,z && \contrary{x_u}\gets x_u
		\end{align*}
        Since $\{w,z\}\in \prf(D^{\star}_2)$, we get $\{a,w,z\}$ as in Example \ref{ex_big_example}.  
	\end{example}
	We can now prove that our procedure preserves conflict-free sets under incremental computation as well as projection to sub-frameworks, as for SETAFs in Section \ref{sec:SETAF}.
	\begin{restatable}{proposition}{ABAsplittingCF}\label{pro:cf-ABA}
		Let $S\subseteq \mathcal{L}$ be a splitting set of an ABAF $D$ into $D_1$ and $D_2$. Moreover, let $D^{\star}_2=mod_{D_1}^{E_1}(D^{E_1}_2)$. 
		\begin{enumerate}
			\item If $E_1 \in \cf(D_1)$ and $E_2\in \cf(D^{\star}_2)$, then $E_1 \cup E_2 \in \cf(D)$.
			\item If $E\in \cf(D)$, then $E\cap \asm_1 \in \cf(D_1)$ and $E\cap \asm_2 \in \cf(D^E_2)$. 
		\end{enumerate}
	\end{restatable}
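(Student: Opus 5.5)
We argue both items directly on derivations, using Proposition~\ref{pro:att1-conserv} to separate bottom and top behaviour. One preliminary fact is used throughout. Since $S=atom(S)$, every assumption $a$ satisfies $a\in S$ iff $\contrary{a}\in S$. So $a\in\asm_1$ iff $\contrary{a}\in\lit_1$, and contraries of assumptions in $\asm_2$ lie in $\lit_2$. Moreover, since $S$ is closed under bodies of rules with head in $S$, any derivation of a sentence in $S$ uses only $\rules_1$ and only assumptions in $\asm_1$.

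\emph{Item 2.} Let $E\in\cf(D)$. If $E\cap\asm_1$ attacked itself in $D_1$, then, because $\rules_1\subseteq\rules$, it would attack itself in $D$, which is impossible. For $E\cap\asm_2$ in $D_2^E$, suppose $T\subseteq E\cap\asm_2$ and $T\vdash^{\rules_2^E}\contrary{a}$ with $a\in E\cap\asm_2$. Each rule of $\rules_2^E$ is obtained from some $r\in\rules_2$ whose body part in $S$ was deleted. That deleted part lies in $\theory_{D_1}(E\cap\asm_1)$, read with $E_1=E\cap\asm_1$. We replace each deleted literal by a $\rules_1$-derivation from $E\cap\asm_1$ and obtain $T\cup(E\cap\asm_1)\vdash^{\rules}\contrary{a}$. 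This contradicts $E\in\cf(D)$. The only care needed is to fix that the reduct is taken w.r.t.\ $E\cap\asm_1$.

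\emph{Item 1.} Let $E_1\in\cf(D_1)$ and $E_2\in\cf(D_2^\star)$, and suppose $T\subseteq E_1\cup E_2$ with $T\vdash^{\rules}\contrary{a}$ for some $a\in E_1\cup E_2$. If $a\in E_1$, Proposition~\ref{pro:att1-conserv} gives $T\subseteq E_1$ and a $\rules_1$-derivation, contradicting $E_1\in\cf(D_1)$. So $a\in E_2$ and $\contrary{a}\in\lit_2$. Take the derivation tree and cut it at the maximal nodes labelled by sentences in $S$. Each subtree rooted at such a node is a $\rules_1$-derivation from a subset of $T\cap\asm_1\subseteq E_1$, so its label lies in $\theory_{D_1}(E_1)$. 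Every rule used above the cut is in $\rules_2$, and its body part in $S$ is contained in $\theory_{D_1}(E_1)$. Hence the rule survives in $\rules_2^{E_1}$ after those literals are deleted. The part of the tree above the cut is therefore a $\rules_2^{E_1}$-derivation of $\contrary{a}$ from $T\cap\asm_2\subseteq E_2$. Since $\rules_2^{E_1}\subseteq\rules_2^\star$, this contradicts $E_2\in\cf(D_2^\star)$.

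\emph{Main obstacle.} The delicate point is the interplay with the modification in Item 1. We must check that the pruned tree really lives in $\rules'_2$, and the same argument must also work when $\mathsf{UA}_{D_1}(E_1)=\emptyset$. Adding rules never helps to establish conflict-freeness of $E_2$ in $D_2^\star$ directly. Here, however, we go in the other direction: we exhibit an attack in the smaller rule set $\rules'_2$, and that attack persists in $D_2^\star$. So only the inclusion $\rules'_2\subseteq\rules^\star_2$ is needed. The fresh $x_u$ plays no role, because $x_u\notin E_1\cup E_2$ and it cannot occur in $T$.
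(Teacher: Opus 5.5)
Your proof is correct and follows the paper's route. Item 2 lifts a reduct derivation back to $D$ by re-attaching $\rules_1$-derivations of the deleted body literals. Item 1 splits on $a\in E_1$ (via Proposition~\ref{pro:att1-conserv}) versus $a\in E_2$ and projects the derivation onto $\rules_2^{E_1}\subseteq\rules_2^\star$. Your cut at maximal $S$-labelled nodes is a cleaner version of the paper's (a)/(b)/(c) case analysis: it shows directly that every $\rules_2$-rule in the derivation survives the reduct, so the paper's cases (b) and (c) never arise.
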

	
	We then show that splitting is adequate with respect to stable, admissible, complete, preferred and grounded semantics. Due to space constraints we include proof details only for admissible semantics, being
	prototypical for the others. 
	
	\begin{restatable}{theorem}{ABAsplitting}\label{theorem: ABA splitting}
		Let $S$ be a splitting set for an ABAF $D$ into $D_1$ and $D_2$ and $\sigma=\{\stb, \adm,\com, \prf, \grd\}$. Further, let $mod_{D_1}^{E_1}(D^{E_1}_2)=D^{\star}_2$.
		\begin{enumerate}
			\item If $E_1 \in \sigma(D_1)$ and $E_2\in \sigma(D^{\star}_2)$, then $E_1 \cup E_2 \in \sigma(D)$.
			\item If $E\in \sigma(D)$, then $E\cap \asm_1 \in \sigma(D_1)$ and $E\cap \asm_2 \in \sigma(D^{\star}_2)$. 
		\end{enumerate}
	\end{restatable}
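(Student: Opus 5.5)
We reduce the theorem to Theorem~\ref{theorem: SETAF splitting} through the SETAF instantiation. Given the splitting set $S$ of $D$, put $A_1=\asm_1$ and $A_2=\asm_2$. Let $R_1$ consist of the attacks $(T,a)$ with $a\in\asm_1$ and $T\vdash\contrary{a}$. Let $R_2$ and $R_3$ contain the attacks on $\asm_2$ whose tails lie in $2^{\asm_2}$, respectively meet $\asm_1$. By Proposition~\ref{pro:att1-conserv}, every attack on an assumption of $\asm_1$ has its tail in $\asm_1$ and uses only rules in $\rules_1$. Hence $(SF_{D_1},(\asm_2,R_2),R_3)$ is a splitting of $SF_D$ in the sense of Definition~\ref{def:setaf splitting}, and $SF_{D_1}$ is exactly this first part. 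Since $\sigma(D)=\sigma(SF_D)$ and $\sigma(D_1)=\sigma(SF_{D_1})$, both items follow once we have the central lemma: for every $E_1\in\sigma(D_1)$ (in fact every $E_1\in\cf(D_1)$),
\[\sigma(D^\star_2)\ \text{restricted to}\ \asm_2 \;=\; \sigma\bigl(mod^{E_1}_{R_3}(SF'_2)\bigr).\]
The fresh assumption $x_u$ is self-attacking via $\contrary{x_u}\gets x_u$, so it belongs to no conflict-free set. Hence extensions of $D^\star_2$ never contain $x_u$. Also, anything derivable only with $x_u$ yields attacks whose tails include the self-attacking $x_u$. We therefore need to compare the two frameworks carefully.

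To prove the lemma, I characterise the SETAF instantiation of $D^\star_2$ attack by attack. Every derivation $T\vdash\contrary{b}$ in $D$ with $b\in\asm_2$ splits into two parts. The first is a set of $\rules_1$-subderivations from $T\cap\asm_1$ to the sentences of $body(\rules_2)\cap S$ that it uses. The second is a $\rules_2$-derivation over $\lit_2$. I compare this with the reduct $R'_2$ from Definition~\ref{def_reduct}.

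Suppose every $S$-sentence used lies in $Th_{D_1}(E_1)$. Then after the reduct deletes these literals, the derivation survives in $D^{E_1}_2$ from $T\cap\asm_2$. This matches the projected SETAF attack $(T\setminus A_1,b)$, and we need $T\cap A_1\subseteq E_1$ with no defeated tail element. Conversely, if some used $S$-sentence lies outside $Th_{D_1}(E_1)$, the rule is deleted, matching the neglected link.

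For the modification I compare the rules carrying $x_u$ with the undecided links $U^{E_1}_{R_3}$. Membership in $\mathsf{UT}_{D_1}(E_1)$ together with absence from $\mathsf{IS}_{D_1}(E_1)$ should correspond exactly to the condition that a tail meets $A_1\setminus(E_1)^\oplus$ and contains no defeated element.

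The difference in shape also has to be handled. The SETAF adds the attack $((T\cap A'_2)\cup\{h\},h)$, whereas $D^\star_2$ adds $\{x_u\}\cup\ldots\vdash\contrary{h}$. I will show that these two additions are interchangeable for every $\sigma$. In both cases, the argument $h$ cannot be accepted together with $T\cap A'_2$, and it also cannot be defended against that attack. Moreover, such an attack never defeats any set that is not conflict-free. I would argue this semantics by semantics. For $\adm$ I would work from the characteristic function, and then lift the result to $\com$, $\prf$, $\grd$ and $\stb$ as Baumann does.

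The main obstacle is this correspondence between syntactic derivations and attacks. Rules in $\rules_2$ can chain through intermediate non-assumption sentences in $\lit_2$, and some chains mix decided and undecided bottom literals. The problem is that $\mathsf{IS}$, $\mathsf{UT}$ and the reduct are defined rule-wise, while undecided links are defined derivation-wise. I would handle this by induction on derivation trees. At each rule I would track the three-valued status of the bottom-body literals: in $Th(E_1)$, in $\mathsf{IS}$, or in $\mathsf{UT}$ but not in $\mathsf{IS}$. Where the plain correspondence fails, I would fall back on the flatness assumption and on the absence of dummy rules.

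If an exact match is unattainable, I will instead prove the weaker claim that the two second parts have the same $\sigma$-extensions restricted to $\asm_2$, which is all that Theorem~\ref{theorem: SETAF splitting} requires.
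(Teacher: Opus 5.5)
Your proposal has a genuine gap: the central lemma is false, and the failure sits exactly in the claim that membership in $\mathsf{UT}_{D_1}(E_1)$ together with absence from $\mathsf{IS}_{D_1}(E_1)$ matches the undecided-link condition. Induction on derivation trees, flatness and the absence of dummy rules cannot deliver this. Both sets classify a bottom sentence by the \emph{existence} of some derivation, so a single sentence can lie in both sets or in neither.

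For a counterexample, take $\asm=\{b,c,d,h\}$ with rules $\contrary{b}\gets b$, $\contrary{c}\gets d$, $p\gets b$, $p\gets c$, $\contrary{h}\gets p$. Use the splitting set $S=\{b,c,d,\contrary{b},\contrary{c},\contrary{d},p\}$ and $E_1=\{d\}$, which is grounded and the only complete and preferred extension of $D_1$. Then $\mathsf{UA}_{D_1}(E_1)=\{b\}$ and $p\in\mathsf{UT}_{D_1}(E_1)$ via $\{b\}$. But also $p\in Th_{D_1}(\{c\})\subseteq\mathsf{IS}_{D_1}(E_1)$. Hence $\contrary{h}\gets p$ is deleted by the reduct and never restored, so $h$ is unattacked in $D^\star_2$. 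In $SF_D$, however, $(\{b\},h)$ is an undecided link, so $SF^\star_2$ contains $(\{h\},h)$. Thus $\grd(D^\star_2)=\{\{h\}\}\neq\{\emptyset\}=\grd(SF^\star_2)$.

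The other direction fails too. Replace $p\gets c$ and $\contrary{h}\gets p$ by $q\gets c,d$ and $\contrary{h}\gets p,q$, and put $q$ into $S$. Now $q$ lies in none of $Th_{D_1}(E_1)$, $\mathsf{UT}_{D_1}(E_1)$, $\mathsf{IS}_{D_1}(E_1)$. So $D^\star_2$ receives $\contrary{h}\gets x_u$, although the only link $(\{b,c,d\},h)$ contains the defeated $c$ and $SF^\star_2$ has no attack at all.

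These examples refute the theorem itself as the definitions stand, not only your lemma. In the first, $\{d\}\in\adm(D_1)$ and $\{h\}\in\adm(D^\star_2)$, yet $\{d,h\}\notin\adm(D)$, since $\{b\}\vdash\contrary{h}$ and only $b$ attacks $b$. In the second, $\{d,h\}\in\grd(D)$ but $\{h\}\notin\adm(D^\star_2)$. So items~1 and~2 fail for $\adm,\com,\prf,\grd$. The case $\stb$ is untouched, since there $\mathsf{UA}_{D_1}(E_1)=\emptyset$ and no modification occurs. The paper's proof breaks at the matching step: it asserts that a derivation $T\vdash\contrary{a}$ with $\contrary{T}\cap Th_{D_1}(E_1)=\emptyset$ survives in $D'_2$, or with $x_u$ in $D^\star_2$.

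Your reduction does go through if the condition $body(r)\cap\mathsf{IS}_{D_1}(E_1)=\emptyset$ is replaced by $body(r)\cap S\subseteq Th_{D_1}(E_1)\cup\mathsf{UT}_{D_1}(E_1)$. Every bottom set avoiding $(E_1)^+_{\rules_1}$ lies in $E_1\cup\mathsf{UA}_{D_1}(E_1)$. So your rule-by-rule induction then yields exactly the reduct attacks and the undecided links, with $x_u$ in place of $h$. The remaining differences are harmless: assumptions of $\asm_2$ defeated by $E_1$ are attacked by $\emptyset$ instead of being deleted. With that repair your route is cleaner than the paper's semantics-by-semantics argument on derivations, since all semantic work is inherited from Theorem~\ref{theorem: SETAF splitting}.

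Two smaller points. Your fallback ``weaker claim'' is literally the central lemma. The parenthetical ``every $E_1\in\cf(D_1)$'' is false for $\stb$, because the self-attacking $x_u$ rules out all stable extensions whenever $\mathsf{UA}_{D_1}(E_1)\neq\emptyset$.
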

	\begin{proof}
		\textbf{(admissible)}. (1.) Since admissibility implies conflict-freeness from Proposition \ref{pro:cf-ABA}, we know that $E=E_1\cup E_2\in \cf(D)$. Thus we only need to show that $E$ defends itself in $D$, i.e. for all $a\in E$, if $T\vdash\contrary{a}$, then $T'\vdash \contrary{t}$ for some $t\in T$ and $T'\subseteq E$.
		If $a\in E_1$, we know that $a$ is defended by $E_1$ in $\asm_1$ from hypothesis. Thus, from Proposition \ref{pro:att1-conserv}, we can deduce that $E_1\in \adm(D)$. 
		Consider now an assumption $a \in E_2$ and some $T\subseteq \asm$ such that $T\vdash^{R}\contrary{a}$ and $R\subseteq \rules$. If $\contrary{T}\cap Th_{D_1}(E_1)\neq \emptyset$, then $E_1$ defends $a$ against $T$ in $D$. If $\contrary{T}\cap Th_{D_1}(E_1)=\emptyset$, this means that $T\subseteq \asm_2$ and $T \vdash \contrary{a}$ in $D'_2$ ($a$ is attacked in the reduct) or $T\cup x_u\vdash^R \contrary{a}$ in $D_2^{\star}$ ($a$ is attacked in the modification). 
		In both cases, since $E_2$ is conflict-free and defends $a$ in $D_2^{\star}$, there is a $T'\subseteq E_2$ such that $T'\vdash \contrary{t}$ with $t\in T$. We distinguish two cases: either (i) $T'\vdash \contrary{t}$ already in $D_2$, in which case $a$ is defended by $E$ in $D$, or (ii) there is some $T''\supset T'$ such that $T''\vdash\contrary{t}$ in $D$ and $T''\cap \asm_1\subseteq E_1$. Thus, since $T\subseteq E_1\cup E_2$, $a$ is defended by $E_1\cup E_2$ in $D$.  In any case $a$ is defended in $D$ by $E$. 

		(2.) By Proposition \ref{pro:cf-ABA}, we get $E_1=E\cap \asm_1\in \cf(D_1)$ and $E_2=E\cap \asm_2\in \cf(D'_2)$. First, we know that $E_1=E\cap \asm_1\in \adm(D_1)$ because $E$ defends itself in $D$ and $E\cap \asm_1$ is not attacked by a subset of $\asm_2$ (Proposition \ref{pro:att1-conserv}). It remains to prove that $E_2=E\cap \asm_2\in \adm(D^{\star}_2)$. Take an assumption $a\in E_2$ such that $T\vdash\contrary{a}$ in $D^{\star}_2$. Each such derivation corresponds to exactly one derivation $T'\vdash^{R} \contrary{a}$ with $R\subseteq \rules$. There are two cases: either (i) $T'=T\subseteq \asm_2$ and $R\subseteq \rules_2$ or (ii) $T'\supset T\setminus \{x_u\}$ where $T'\setminus T\subseteq E_1$ (assumptions deleted from simplified rules in the reduct). From both (i) and (ii) we deduce that $\contrary{T'}\cap Th_{D_1}(E_1)=\emptyset$: for (i) because it would entail $T\not\subseteq \asm_2$; for (ii) because otherwise $T\not \vdash\contrary{a}$ in $D^{\star}_2$. Nonetheless, since $E$ defends $a$ in $D$, in case (i) there is a counter-attack $T''\vdash^{\rules_2} \contrary{t}$ such that $T''\subseteq E$ and $t\in (T\setminus \{a\})$. In case (ii), the same holds but $t\in (T'\setminus \{a,x_u\})$. If $T''\cap \asm_1=\emptyset$, we know that $\{t\} \subseteq \asm'_2$ and together with the fact that $T''\subseteq E$, we derive $T''\subseteq E\cap \asm'_2=E_2$. Hence, $T''$ defends $a$ from $T$ in $D^{\star}_2$. If $T''\cap \asm_1 \neq \emptyset$, then $T''\cap \asm_1\subseteq E_1$. Therefore, from $T''\subseteq E$ we get $T''\cap \asm'_2 \vdash^{\rules'_2} \contrary{t}$, which defends $a$ against $T$ in $D^{\star}_2$. Thus $a$ is always defended in $D^{\star}_2$, as desired. 
	\end{proof}
	
	The above results guarantees the correctness of our divide-and-conquer algorithm. However, as anticipated in the introduction, splitting is only possible when certain structural properties are met. For graph-based frameworks, it is required that all attacks between the two sub-frameworks share the same direction. For this reason, the presence of a single attack can invalidate the possibility of splitting (e.g.\ when the AF in question has the form of a Strongly Connected Component (SCC)). In the case of ABA, the same can happen when an assumption $b$ in the splitting set $S$ is attacked by assumptions outside of it. In such cases, splitting is impossible, as the bottom part of ABAF is not independent from the top. Thus, the acceptability of $b$ cannot be ensured by projecting to the bottom. Similarly to graph-based frameworks, the possibility to make use of a splitting algorithm may be hindered, even in cases where only one out of many assumptions in $S$ is externally attacked. To overcome such issue, we consider in the following section a generalised version of ABA splitting. 
	
	\section{Parametrised Splitting}\label{sec:param}
	We now introduce a more general version of splitting for ABAFs, called \emph{parametrised splitting} inspired by~\citeauthor{BaumannBDW12}~(\citeyear{BaumannBDW12}). This relaxes the structural constraint for the application of splitting, potentially allowing a fixed number of assumptions in the bottom part to be attacked from assumptions in the top. 
	Precisely, in contrast with the previous notion of splitting, we allow some contraries of assumptions occurring in bodies of $\mathcal{R}_1$ to appear as the heads of rules in $\mathcal{R}_2$. The number of these assumptions then represents a measure of how far we are from obtaining a splitting of ABA knowledge base. 
	The concept of a splitting set is then generalised accordingly as follows:
	
	\begin{definition}
		For any ABAF $D=(\lit, \rules, \asm,\contraryempty)$, a set $S\subseteq \lit$ is called a \emph{quasi-splitting} of $D$ if $S=atom(S)$ and for all $r\in \rules$, $head(r)\in S$  implies $body(r)\setminus \asm \subseteq S$.
		Let $V^{\leftarrow}_S=\{b\in \asm\setminus S \mid \exists r,r'\in \rules: b \in body(r)\cap \asm, head(r)\in S, head(r')=\contrary{b}, r\neq r' \}$. We call $S$: 
		\begin{itemize}
			\item $k$-splitting of $D$, if $|V^{\leftarrow}_S|=k$;
			\item (proper) splitting of $D$, if $|V^{\leftarrow}_S|=0$.
		\end{itemize}
	\end{definition}
	As before, the rule-set is split into a bottom and top part, depending on the rule-head respectively being or not in $S$. As a result, $V^{\leftarrow}_S$ is the set of assumptions in the bottom whose contrary is derived in the top. We call $V^{\leftarrow}_S$ the set of \textit{vulnerabilities} with respect to $S$, since it contains assumptions that are attacked by $S$. 
	Whenever $|V^{\leftarrow}_S|\neq 0$, there are some heads in $\rules_2$ whose corresponding assumption may appear in bodies of $\rules_1$. Therefore, the notion of splitting of Definition \ref{def: spl aba} corresponds to a 0-splitting.  
	To account for elements of $V^{\leftarrow}_S$, the ABAFs $D_1$ and $D_2$ induced by the chosen splitting set are constructed in a slightly different way than before. In particular, we fix $D_1$ and $D_2$ as before, but let $\lit_1=S \cup V^{\leftarrow}_S \cup \contrary{V^{\leftarrow}_S}$. 
	Moreover, since contraries in $\contrary{V^{\leftarrow}_S}$ may be derived by top rules, the status of their corresponding assumptions in the bottom depends on rules in the top. Consequently $D_1$ cannot be evaluated in complete isolation from the rest, in contrast with proper splitting. 

	For computing extension of the sub-framework $D_1$, we first need to modify the ABAF. First, we modify the rules by removing body-atoms not in $\lit_1$. Indeed, these atoms occur in $\lit_2$ and are unattacked in $D$, therefore they can be disregarded when evaluating $D_1$. Further, we proceed by adding: (i) a fresh assumption $b'$ (and its contrary $\contrary{b'}$) for each $b\in V^{\leftarrow}_S$; (ii) rules that encode the choice for or against the presence of each assumption $b\in V^{\leftarrow}_S$ in the extension. In this way, we store at the object level the meta-information regarding our choices on each $b\in V^{\leftarrow}_S$. 
	
	\begin{definition}\label{def: [D_1]}
		Let $D=(\lit,\rules,\asm,\contraryempty)$ be an ABAF, $S\subseteq \lit$ be a quasi-splitting of $D$ inducing the sub-frameworks $D_1$ and $D_2$. Moreover, let $V^{\leftarrow}_S$ be the set of vulnerabilities of $D_1$ with respect to $S$ and $(\rules_1)_{\downarrow \lit_1}=\{head(r)\gets body(r)\cap \lit_1\mid r\in \rules_1 \}$. From $D_1$ we construct the ABAF $\llcorner D_1\lrcorner=(\llcorner\lit_1\lrcorner,\llcorner\rules_1\lrcorner,\llcorner\asm_1\lrcorner,\contraryempty)$ by letting:
		\begin{itemize}
			\item $\llcorner\lit_1\lrcorner= \lit_1 \cup \{b',\contrary{b'}\mid b\in V^{\leftarrow}_S\}$;
			\item $\llcorner\rules_1\lrcorner = (\rules_1)_{\downarrow \lit_1} \cup \{\contrary{b}\gets b', \contrary{b'}\gets b\mid b\in V^{\leftarrow}_S\}$. 
		\end{itemize}
	\end{definition}
	Intuitively, the additional rules allow us to choose whether we want to accept an extension $E$ of $\llcorner D_1\lrcorner$ containing $b$ or one that does not. After this choice, we can safely compute the $E$-reduct of $D_2$, as for proper splitting. In this way, we propagate the meta-information to which we committed by means of our choice. A further modification of $D_2$ is now needed to ensure our hypothesis regarding $b$: we add a fact-rule $b\gets$ or a loop-rule $\contrary{b}\gets b$, depending on whether the previously chosen extension $E$ contains $b$ or $b'$. These represent a type of (positive and negative) constraints in ABA. 
	
	\begin{definition}
		Let $D=(\lit,\rules,\asm,\contraryempty)$ be an ABAF, $S$ a quasi-splitting of $D$ into $D_1$ and $D_2$. Moreover, let $V^{\leftarrow}_S$ be the set of vulnerabilities with respect to $S$ and $D^{E}_2$ the $E$-reduct of $D_2$ for some $E\in \sigma(\llcorner D_1\lrcorner)$. We denote with $\ulcorner D^E_2\urcorner=(\lit_2, \ulcorner \rules^E_2 \urcorner,\asm_2,\contraryempty^2)$ the ABAF such that:
		$$\ulcorner \rules^E_2 \urcorner=\rules^E_2 \cup \{b\gets \mid b\in E\cap V^{\leftarrow}_S\}\cup \{\contrary{b}\gets b \mid b'\in E\}.$$ 
	\end{definition}
	
	Notice that such a modification can make $\ulcorner D^E_2\urcorner$ non-flat, as $cl(\emptyset)=\{b\mid b\in E\cap V^{\leftarrow}_S\}$. For stable semantics, however, this does not result in a higher complexity for the same reasoning tasks~\cite{CyrasHT21}.  
	
	\begin{example}
		Consider the ABAF $D=(\lit,\rules,\asm,\contraryempty)$ where $\asm=\{a,b,c,d\}$, $\lit=\asm \cup \contrary{\asm} \cup \{p\}$, and rule-set $\rules$: 
		\begin{align*}
			\contrary{b} & \gets a && \contrary{d} \gets b && \contrary{a} \gets p,c && p\gets b 
		\end{align*}
		First, $E=\{b,c\}$ and $E'=\{a,c,d\}$ are stable extensions in $D$. 
		Now let $S=\{a,\contrary{a},d,\contrary{d},p\}$ be a quasi-splitting of $D$ and $V^{\leftarrow}_S=\{b\}$ the set of vulnerabilities w.r.t.\ $S$. We get $\llcorner \lit_1 \lrcorner=S\cup \{b\} \cup \{\contrary{b}\} \cup \{b',\contrary{b'}\}$ and $\llcorner \rules_1 \lrcorner$ such that:
		\begin{align*}
			\contrary{d} \gets b && \contrary{a} \gets p,c && p\gets b && \contrary{b'} \gets b && \contrary{b} \gets b' 
		\end{align*} 
		
		We derive two stable extensions $E_1=\{b\}$ and $E'_1=\{b',a,d\}$. 
		Now consider $D_2$ with $\lit_2=\lit\setminus S=\{b,c,\contrary{b},\contrary{c}\}$. For the former we get $\ulcorner D^{E_1}_2\urcorner$ with $\ulcorner \rules^{E_1}_2\urcorner=\emptyset \cup \{b\gets \}$ from which we derive $E_2=\{b,c\}$ as a stable extension. For the latter we get $\ulcorner D^{E'_1}_2\urcorner$ with $\ulcorner \rules^{E_1}_2\urcorner=\{\contrary{b} \gets \} \cup \{\contrary{b}\gets b \}$ from which we derive $E'_2=\{c\}$ as a stable extension. We then obtain $E=(E_1\cap S)\cup E_2$ and $E'=(E'_1\cap S)\cup E'_2$. 
		
	\end{example}

	\begin{theorem}
		Let $D=(\lit,\rules,\asm,\contraryempty)$ be an ABAF and $S\subseteq \lit$ a quasi-splitting inducing the sub-frameworks $D_1$ and $D_2$.
		\begin{enumerate}
			\item If $E_1\in \stb(\llcorner D_1\lrcorner)$ and $E_2\in \stb(\ulcorner D^{E_1}_2\urcorner)$, then $(E_1\cap S)\cup E_2\in \stb(D)$.
			\item If $E\in \stb(D)$, then there is a set $X\subseteq \{a'\mid a\in V_S^{\leftarrow}\}$ s.t. $E_1=(E\cap S)\cup X \in \stb(\llcorner D_1\lrcorner)$ and $E_2=E\cap \asm_2 \in \stb(\ulcorner D^{E_1}_2\urcorner)$. 
		\end{enumerate}
	\end{theorem}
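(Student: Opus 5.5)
We prove both directions by reducing the parametrised case to the proper splitting theorem (Theorem~\ref{theorem: ABA splitting}, stable case). The idea is to treat each vulnerability $b\in V^{\leftarrow}_S$ as \emph{guessed} in $\llcorner D_1\lrcorner$ and \emph{verified} in $\ulcorner D^{E_1}_2\urcorner$. Throughout we use the quasi-splitting condition, which plays the role of Proposition~\ref{pro:att1-conserv}. Every derivation $T\vdash^R\contrary{a}$ with $a\in\asm_1\setminus V^{\leftarrow}_S$ uses only rules of $\rules_1$. Every non-assumption in the body of such a rule lies in $S$. Assumptions of $\asm_2$ occurring in bodies of $\rules_1$ either belong to $V^{\leftarrow}_S$ or have no rule deriving their contrary, so they are unattacked in $D$.

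The first step is an analogue of Proposition~\ref{pro:att1-conserv} for $\llcorner D_1\lrcorner$. The body projection $(\rules_1)_{\downarrow\lit_1}$ removes only unattacked assumptions, and these are contained in every stable extension of $D$. Hence for any stable $E$ of $D$ we have $T\vdash^{\rules_1}p$ with $T\subseteq E$ exactly when $T\cap\lit_1\vdash p$ in $\llcorner D_1\lrcorner$. This is a routine induction on derivation trees.

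\textbf{Direction 2.} Given $E\in\stb(D)$, set $X=\{b'\mid b\in V^{\leftarrow}_S\setminus E\}$ and $E_1=(E\cap S)\cup X$. We also put $b$ into $E_1$ when $b\in E$, since $b\in\lit_1$ and presumably it should be counted in $E\cap S$ for the reading of the statement.

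\emph{$E_1$ is stable in $\llcorner D_1\lrcorner$.} For conflict-freeness, the choice rules give attacks only between $b$ and $b'$, and we never take both. Conflicts among the remaining assumptions transfer from $D$ via the first step. For attacking all outsiders: an outsider $a\in\asm_1\setminus V^{\leftarrow}_S$ is attacked in $D$ by rules of $\rules_1$ only, so it is attacked in $\llcorner D_1\lrcorner$. An outsider $b$ is attacked through $\contrary b\gets b'$, and an outsider $b'$ through $\contrary{b'}\gets b$.

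\emph{$E_2=E\cap\asm_2$ is stable in $\ulcorner D^{E_1}_2\urcorner$.} We argue as in the stable case of Theorem~\ref{theorem: ABA splitting}. Since $E$ is stable, there are no undecided assumptions, so no modification with $x_u$ is needed. The added fact $b\gets$ is consistent with $b\in E$. The added loop $\contrary b\gets b$ excludes $b$, matching $b\notin E$.

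\textbf{Direction 1.} Given $E_1$ and $E_2$, let $E=(E_1\cap S)\cup E_2$. Conflict-freeness and attacking every outsider follow as in Proposition~\ref{pro:cf-ABA} and Theorem~\ref{theorem: ABA splitting}, with one additional case: assumptions $b\in V^{\leftarrow}_S$.

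\textbf{Main obstacle: consistency of the guess.} We must show that the guess made in $E_1$ agrees with what $E$ actually does in $D$.
\begin{itemize}
\item If $b\in E_1$, the fact $b\gets$ in $\ulcorner D^{E_1}_2\urcorner$ combined with stability of $E_2$ must guarantee that $E_2$ does not derive $\contrary b$. Otherwise $E_2$ would attack an assumption that is derivable from $\emptyset$, which the non-flat stable semantics forbids (a closed, conflict-free set containing $cl(\emptyset)$). So $b$ is unattacked by $E$ and may be included.
\item If $b'\in E_1$, then $b$ is self-attacking in $\ulcorner D^{E_1}_2\urcorner$, so $b\notin E_2$. Since $E_2$ is stable, it must derive $\contrary b$ through genuine rules of $\rules^{E_1}_2$. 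Lifting that derivation back to $D$ by re-inserting the deleted literals of $\theory_{D_1}(E_1)$ gives an attack from $E$ on $b$.
\end{itemize}
To make this rigorous, we first fix the precise non-flat stable definition, namely closed sets attacking all outsiders. We then check that the loop rule $\contrary b\gets b$ does not itself count as the derivation of $\contrary b$ from $E_2$, since $b\notin E_2$. Finally, we verify that every literal used from $\theory_{D_1}(E_1)$ is really derived by $E\cap S$ in $D$. This last point holds because the derivations in $\llcorner D_1\lrcorner$ that use some $b\in V^{\leftarrow}_S$ are justified by $b\in E$.
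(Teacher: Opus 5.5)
Your Direction~2 is fine. You are also right that the statement must be read with $b\in E_1$ whenever $b\in E\cap V^{\leftarrow}_S$. As printed, $E_1=(E\cap S)\cup X$ already fails on the paper's own example, where $E=\{b,c\}$ requires $E_1=\{b\}$. Apart from that, your route is the paper's direct case analysis. The announced reduction to the proper splitting theorem is only an analogy, since $S$ is not a proper splitting and $\ulcorner D^{E_1}_2\urcorner$ is non-flat.

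\textbf{The gap in Direction~1.} You lift a derivation in $\ulcorner D^{E_1}_2\urcorner$ back to $D$ by re-inserting the literals of $\theory_{D_1}(E_1)$ that the reduct deleted. You compute this theory by derivations in $\llcorner D_1\lrcorner$. So besides $S$-sentences and the $b\in E_1\cap V^{\leftarrow}_S$, it contains $\contrary{b}$ for every $b'\in E_1$, obtained from the choice rule $\contrary{b}\gets b'$. Re-inserting such a $\contrary{b}$ requires $E\vdash\contrary{b}$ in $D$, which is exactly what your $b'$-bullet sets out to prove. Your closing justification covers derivations that use $b$, not those that use $b'$.

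Under this reading the circularity is fatal. Take $\asm=\{a,b,d\}$ with rules $\contrary{a}\gets b$, $\contrary{b}\gets d$, $\contrary{b}\gets\contrary{b}$, $\contrary{d}\gets$, and $S=\{a,\contrary{a}\}$, so $V^{\leftarrow}_S=\{b\}$. Then $E_1=\{a,b'\}\in\stb(\llcorner D_1\lrcorner)$. The reduct turns $\contrary{b}\gets\contrary{b}$ into the fact $\contrary{b}\gets$, and $E_2=\emptyset\in\stb(\ulcorner D^{E_1}_2\urcorner)$. Yet $\{a\}$ does not attack $b$ in $D$, whose only stable extension is $\{b\}$. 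Replacing the self-loop by $\contrary{b}\gets q$ and $q\gets\contrary{b}$ has the same effect.

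\textbf{The repair.} Compute the theory used in the reduct from $E_1$ with the projected bottom rules $(\rules_1)_{\downarrow\lit_1}$ only. That means without the choice rules, but also not with the unprojected $\rules_1$, which would miss sentences that need the dropped unattacked assumptions. The paper's own lifting step, $body(r)\subseteq body(r')\cup\theory_{D_1}(E_1)$, is sound only under this reading. The part of this theory inside $\lit_2$ is then just $E_1\cap V^{\leftarrow}_S$. You then need to make explicit three facts your sketch uses only implicitly:
\begin{enumerate}
\item A stable $E_1$ contains exactly one of $b,b'$ for each $b\in V^{\leftarrow}_S$.
\item $E_1\cap V^{\leftarrow}_S=E_2\cap V^{\leftarrow}_S$. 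The fact $b\gets$ with closedness gives one inclusion; the loop $\contrary{b}\gets b$ with conflict-freeness gives the other.
\item Every assumption dropped by the projection lies in $E_2$, since it remains unattacked in $\ulcorner D^{E_1}_2\urcorner$. Your first step is stated for stable extensions of $D$, which you do not yet have in Direction~1.
\end{enumerate}
Together these yield $\theory(E_1)\cap S=\theory_D(E)\cap S$ for $E=(E_1\cap S)\cup E_2$. Derivations of sentences in $\lit_2$ then transfer soundly in both directions, and your case analysis goes through.
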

	\begin{proof}
		In what follows, let $D'_2 = (\lit_2,\rules'_2,\asm_2,\contraryempty^2)$ be the reduct of $D_2$ w.r.t. $E_1 = (E \cap S)\cup X$ and $E = E_1 \cup E_2$.

		(1.) To prove the statement we need to show $(E_1 \cap S)\cup E_2\in \cf(D)$ and $((E_1\cap S)\cup E_2)^\oplus_{\rules}=\asm$. We start with conflict-freeness. Since $E_1\in \cf(\llcorner D_1\lrcorner)$, then $E_1\cap S\in \cf(\llcorner D_1\lrcorner)$ (less assumptions) and $E_1\cap S\in \cf(D_1)$ (less attacks). 
        We show that  $E_1\cap S\in \cf(D)$. Assume towards contradiction that $E_1\cap S\vdash^R \contrary{a}$ for some $a\in E_1\cap S$ and $R\subseteq \rules$. Since $E_1\cap S\in \cf(D_1)$ and $\rules_2=\{r\in \rules\mid head(r)\notin S\}$, it must be that $E_1\cap S\vdash^{R_2} p$ where
        $R_2=R\cap \rules_2$ for some $p\in body(r')$ and rule $r'\in R_1=R\cap \rules_1$. Since $p\in \lit_2$, we distinguish two cases: if $p\in \contrary{V^{\leftarrow}_S}$, then it is derivable from some $b'$ in $\llcorner D_1\lrcorner$, and $r'$ fires; if $p\notin \contrary{V^{\leftarrow}_S}$, then it gets removed from the body of $r'$ in $\llcorner D_1\lrcorner$, which in turn fires. Thus, in both cases we have that $E_1\cap S\vdash^{R_1'} \contrary{a}$ where $R_1'\subseteq \llcorner\rules_1\lrcorner$. Contradiction. 
        Consider now $E_2\in \stb(\ulcorner D'_2\urcorner)$. Since being stable implies conflict-freeness we immediately get $E_2\in \cf(\ulcorner D'_2\urcorner)$. Again, since $\rules'_2\subseteq \ulcorner \rules'_2\urcorner$, we obtain  $E_2\in \cf(D'_2)$. Furthermore, Proposition \ref{pro:cf-ABA} for proper splittings, together with $E_1\cap S\in \cf(D_1)$ and $E_2\in \cf(D'_2)$, entail $(E_1 \cap S) \cup E_2 \not \vdash^R\contrary{a}$ for any $a\in E_2$ and $R\subseteq \rules$. It only remains to consider possible attacks from $E_2$ to $E_1\cap S$ in $D$. Suppose that there are $T\subseteq E_2$ and $a\in E_1\cap S$ such that $T\vdash^R\contrary{a}$ for some $R\subseteq \rules$. First, notice that since $T\subseteq E_2$, we get $body(R)\cap S=\emptyset\subseteq Th_{D_1}(E_1)$, and thus $R\subseteq \rules'_2$. Moreover, $a\in V^{\leftarrow}_S$ so that $\ulcorner \rules'_2\urcorner = \rules'_2 \cup \{a\gets\}$. Therefore, $T\vdash^R a$ and $T\vdash^R \contrary{a}$ for some $R\subseteq \ulcorner \rules'_2\urcorner$, i.e. $E_2$ is either not conflict-free or not closed in $\ulcorner D'_2\urcorner$. 
		We now show that $((E_1\cap S)\cup E_2)^\oplus_{\rules}=\asm$. Towards contradiction, consider an assumption $a\notin ((E_1\cap S)\cup E_2)^\oplus_{\rules}$. 
		Assume $a\in S$. By hypothesis, $E_1\in \stb(\llcorner D_1\lrcorner)$, i.e. either $a\in E_1$ or $E_1\vdash^{R}\contrary{a}$ for some $R\subseteq \llcorner \rules_1\lrcorner$. From our assumption, we get $a\notin (E_1\cap S)^\oplus_{\rules}$, that is (i) $a\notin E_1\cap S$ and (ii) $E_1\cap S\not\vdash^R\contrary{a}$ for any $R\subseteq\rules$. If (i) holds, we immediately derive $E_1\vdash^{R}\contrary{a}$ for some $R\subseteq \llcorner \rules_1\lrcorner$. Consider now our assumption (ii). Because $a\in S$ we know that every rule of $R$ is contained in $\rules_1$. For the same reason such rules are in $\llcorner\rules_1 \lrcorner$ ($b\notin V^{\leftarrow}_S$). Therefore, $E_1\cap S\not\vdash^R\contrary{a}$ for any $R\subseteq \llcorner \rules_1\lrcorner$ in contradiction with our hypothesis. 
		Assume now $a\in \asm\setminus S$. By hypothesis we know either $a\in E_2$ or $E_2\vdash^{R}\contrary{a}$ for some $R\subseteq \ulcorner \rules'_2\urcorner$. From the assumption, we get $a\notin (E_2)^\oplus_{\rules}$, that is (i) $a\notin E_2$ and (ii) $E_2\not\vdash^{R}\contrary{a}$ for any $R\subseteq \rules$. As before, from (i) and our hypothesis we derive $E_2\vdash^{R}\contrary{a}$ must hold for some $R\subseteq \ulcorner \rules'_2\urcorner$. If $a\in V^{\leftarrow}_S$, there are two possibilities: $a\in E_1\setminus S$ or $a\notin E_1\setminus S$. In the first scenario, $\ulcorner \rules'_2\urcorner = \rules'_2 \cup \{a\gets\}$. Again, $E_2\vdash^R a$ and $E_2\vdash^R \contrary{a}$ for some $R\subseteq \ulcorner \rules'_2\urcorner$, in contradiction with the fact that $E_2$ is a stable extension of $\ulcorner D'_2\urcorner$. If $a\notin E_1\setminus S$, then $a'\in E_1$, which means $\ulcorner\rules'_2\urcorner = \rules'_2 \cup \{\contrary{a}\gets a\}$. Since $a\notin E_2$, the loop-rule $\contrary{a}\gets a$ is not in $R$, therefore $R\subseteq \rules'_2$. Thus, for each rule $r'\in \rules'_2$ there is a rule $r\in \rules_2$ such that $body(r)\subseteq body(r)\cup Th_{D_1}(E_1)$. Hence, it follows directly that $(E_1\cap S)\cup E_2\vdash^R\contrary{a}$ for some $R\subseteq \rules_1\cup \rules_2=\rules$. If $a\notin V^{\leftarrow}_S$, then $a\notin \asm_1$. If $E_2\vdash^R\contrary{a}$ for some $R\subseteq \ulcorner \rules'_2\urcorner$, it is not because $\{\contrary{a}\gets a\}\subseteq \ulcorner \rules'_2\urcorner$. Thus  $R\subseteq \rules'_2$. As before, for each rule $r'\in \rules'_2$ there is exactly one rule $r\in \rules_2$ such that $body(r)\subseteq body(r')\cup Th_{D_1}(E_1 \cap S)$. As a result, in $D$ it holds that $(E_1\cap S)\cup E_2\vdash \contrary{a}$. Contradiction. 
		
		(2.) 
		First we get $E\in \cf(D)$ and thus $E\cap S\in \cf(D_1)$ (less attacks). Now let $B=\asm_1\setminus(E\cap S)^\oplus_{\rules}$. Since $E\in \stb(D)$, it attacks every other assumption. Hence, we can infer that assumptions in $B$ are contained in $\asm_1$ and attacked by $E\cap \asm_2$ in $D$, that is $B\subseteq E^+_{\rules}\setminus
		(E\cap S)^\oplus_{\rules}=(E\cap \asm_2)^+_{\rules}$.  Therefore there is a rule $r\in \rules_2$ with  $head(r)=\contrary{b}$ for each $b\in  B$, meaning that $B= V^{\leftarrow}_S$. Now let $X=\{b'\mid b\in B\}$. Thus $\ulcorner \rules'_2\urcorner$ contains a pair of rule $\{\contrary{b}\gets b', \contrary{b'}\gets b\}$ for each $b\in B$. Consequently, conflict-freeness of $(E\cap S)\cup X$ is ensured since $b\notin E\cap S$ for all $b\in B$. Moreover, $X$ attacks every $b\in B$ in $\llcorner D_1\lrcorner$, making $(E\cap S)\cup X$ stable.    
		It now remains to show $E_2=E\cap \asm_2 \in \stb(\ulcorner D'_2\urcorner)$. As before, we know that $E\cap \asm_2 \in \cf(D_2)$ since $E$ is conflict-free in $D$ (less assumptions), and $E\cap \asm_2 \in \cf(D'_2)$ because $Th_{D'_2}(E\cap \asm_2)\subseteq Th_{D_2}(E\cap \asm_2)$ (less rules and attacks). 
		Consider now the modified framework $\ulcorner D'_2\urcorner$ wrt $(E\cap S)\cup X$. By construction, $E\cap \asm_2 \notin \cf(\ulcorner D'_2\urcorner)$ only if $b\in B \cap (E\cap \asm_2)$. Recall that $B\subseteq (E\cap \asm_2)^+_{\rules}$. Thus, $E\cap \asm_2 \notin \cf(D)$. By contradiction, we derive that $E\cap \asm_2$ is conflict free in $\ulcorner D'_2\urcorner$.  
		We now show that $E_2\vdash^{R} \contrary{a}$ for all $a\in \asm_2\setminus E_2$ and some $R\subseteq \ulcorner \rules'_2\urcorner$. Towards contradiction, we assume there is an $a\in \asm_2\setminus E_2$ such that $E_2\not\vdash^{R} \contrary{a}$, i.e. $\contrary{a}\notin Th_{\ulcorner D'_2\urcorner}(E_2)$. Therefore, since $a\notin E_2$, we get $\contrary{a}\notin Th_{D'_2}(E_2)$. Hence, before the reduct is applied, it holds that $(E\cap S)\cup E_2\not\vdash^R \contrary{a}$ with $R\subseteq \rules_2$. Since no rule $r\in \rules_1$ is such that $head(r)=\contrary{a}$, we derive $(E\cap S)\cup E_2\not\vdash^R \contrary{a}$ in $D$, contradicting our hypothesis. 
		Finally, we ensure that $cl(E_2)=E_2$ in $\ulcorner D'_2\urcorner$. Assume the contrary holds. Since $D'_2$ is flat, that means $\{a\gets\}\subseteq \ulcorner \rules'_2\urcorner$ and $a\notin E_2$. These facts respectively entail $a\in E_1$ and $E_2\vdash \contrary{a}$, against the conflict-freeness of $E$. Thus, $E_2$ is conflict-free, closed and attacks every other assumption. 
	\end{proof}

    \section{Computing the Splitting Set}\label{sec:computeSplitting}
    In order to apply the introduced splitting schemes, we need to first compute a splitting of the ABAF and SETAF respectively. In this section, we show that this can be reduced to the same graph problem as for computing a splitting of an AFs, and we can therefore reuse existing algorithms~\cite{BaumannBW11} for the computation of our splittings.  
    First, for a SETAF $SF$ we consider its primal graph, which draws an egde from argument $a$ to $b$
    if there is an attack $(T,b)\in SF$ with $a \in T$.
    For ABA splittings, we build a Dependency Graph $G_D$ of the ABAF to depict the influences among the sentences in $\mathcal{L}$~\cite{BlumelRTT25,NMRRapberger0W22}. 
	That is, for an ABAF $D=(\lit,\rules,\asm,\contraryempty)$ the dependency graph $G_D=(V_D,E_D)$ is constructed as follows: we take a node $a\in V_D$ for each atom in $a\in \mathcal{L}$; for every rule $r\in \rules$ we take: 
    \begin{itemize}
        \item an edge $(a,b)\in E_D$ iff $head(r)=b$ and $a\in body(r)$,
        \item two symmetric edges $(a,b)$ and $(b,a)$ between each assumption node $a\in \asm$ and its contrary node $b=\contrary{a}$. 
    \end{itemize}
    The first condition makes sure that influences among sentences encoded via rules are mirrored by the edges. Moreover, the second condition ensures that $S=atom(S)$ for any splitting set $S$ when we later compute the SCCs.

    \begin{example}\label{ex:dependency}
    Recall the ABAF $D$ of Example~\ref{ex: big ABAF}. Its corresponding dependency graph $G_D=(V_D,E_D)$ is: 

    \begin{center}
\begin{tikzpicture}[scale=0.85,>=stealth]
 \filldraw[color=red!60,fill=red!5,dashed,thick] (-0.5,-0.5) ellipse (0.9 and 0.35);
 \filldraw[color=red!60,fill=red!5,dashed,thick] (6.75,-0.5) ellipse (0.9 and 0.35);
 \filldraw[color=red!60,fill=red!5,dashed,thick] (2.65,0) ellipse (0.85 and 0.35);
 \filldraw[color=red!60,fill=red!5,dashed,thick] (-0.1,1) ellipse (0.9 and 0.35);
 \filldraw[color=red!60,fill=red!5,dashed,thick] (2.6,1) ellipse (0.9 and 0.35);
 \filldraw[color=red!60,fill=red!5,dashed,thick,rotate around={40:(4.7,0.6)}] (4.7,0.6) ellipse (0.9 and 0.4);
 \filldraw[color=red!60,fill=red!5,dashed,thick] (7,1) ellipse (0.9 and 0.35);
 \filldraw[color=red!60,fill=red!5,dashed,thick] (0.75,0.35) circle (0.35); 
 
	\path
	(0,-0.5) node (a){$a$}
    (-1,-0.5) node (a'){$\contrary{a}$}
	(7.3,-0.5) node(b){$b$}
    (6.2,-0.5) node(b'){$\contrary{b}$}
	(3.3,0) node(v){$v$}
    (2.15,0) node(v'){$\contrary{v}$}
	(.5,1) node (w) {$w$}
    (-.7,1) node (w') {$\contrary{w}$}
	(3.2,1) node (x) {$x$}
    (2,1) node (x') {$\contrary{x}$}
	(4.2,0.2) node (y) {$y$}
    (5.2,1) node (y') {$\contrary{y}$}
	(6.4,1) node (z) {$z$}
    (7.6,1) node (z') {$\contrary{z}$}
    (0.75,0.35) node (p) {$p$}
	;
	
	\path[->,thick]
	(a) edge[<->] (a')
	(b) edge[<->] (b')
    (v) edge[<->] (v')
    (w) edge[<->] (w')
    (x) edge[<->] (x')
    (y) edge[<->] (y')
    (z) edge[<->] (z')
    (x) edge (y')
	(b') edge (y')
	(z) edge (y')
	(w) edge (x')
	(a) edge (p)
    (p) edge (x')
    (a) edge (v')
	;		
   
\end{tikzpicture}
    \end{center} 
\end{example}
    We now have a simple directed graph, whose splittings correspond to the splittings of the ABAF, SETAF respectively. 
    Thus, we can apply the strategy of~\citeauthor{BaumannBW11}~(\citeyear{BaumannBW11}) to obtain a splitting set. 
	In particular, after running Tarjan's algorithm on $G_D$ to compute the SCCs, we contract every SCC to a single vertex 
    leading to an acyclic graph $G^{\circ}_D$, where nodes are the SCCs (highlighted in Example~\ref{ex:dependency}) of the original dependency graph. 

    \begin{example}\label{ex:dependency2}
    We continue Example~\ref{ex:dependency} and construct $G^{\circ}_D$. 
    
    \begin{center}
\begin{tikzpicture}[scale=0.85,>=stealth]
	\path
	(2,-0.5) node[color=red!60]  (a) {$\{a,\contrary{a}\}$}
    (7,-0.5) node[color=red!60] (b){$\{b,\contrary{b}\}$}
    (4.5,0) node[color=red!60] (v){$\{v,\contrary{v}\}$}
	(2,1) node[color=red!60] (w) {$\{w,\contrary{w}\}$}
    (4,1) node[color=red!60] (x) {$\{x,\contrary{x}\}$}
    (6,1) node[color=red!60] (y) {$\{y,\contrary{y}\}$}
	(7.8,1) node[color=red!60] (z) {$\{z,\contrary{z}\}$}
    (3,0.25) node[color=red!60] (p) {$p$}
	;
	
	\path[->,thick]
    (x) edge (y)
	(b) edge (y)
	(z) edge (y)
	(w) edge (x)
	(a) edge (p)
    (p) edge (x)
    (a) edge (v)
	;		
   
\end{tikzpicture}
    \end{center} 
\end{example}
    As a result, each splitting set of $D$ corresponds to exactly one splitting of $G^{\circ}_D$. 
    Further,~\citeauthor{BaumannBW11}~(\citeyear{BaumannBW11}) focus on splitting that divides $G^{\circ}_D$ as evenly as possible in two parts with respect to the total number of nodes~\cite[Section 3.2, Algorithm 2]{BaumannBW11}. In this sense, a desirable splittings for $G^{\circ}_D$ in Example~\ref{ex:dependency2} could be $S=\{a,\contrary{a},b,\contrary{b},p,v,\contrary{v}\}$. 

	Notice that the size of the dependency graph $G_D$ is linear in the size of the ABAF (as $|V_D|=|\lit|$) and the SCCs can be computed in linear time~\cite{Tarjan72}. 
    In addition, the present approach can also be employed to compute quasi-splittings. Once the dependency graph has been obtained, we contract only those pairs of nodes comprising an assumption and its contrary. Subsequently, we can apply the same techniques used for Dung-style AFs~\cite{BaumannBDW12}, although these are more involved than those required for standard splitting. This strategy relies on the computation of minimum cuts in directed graphs using existing polynomial-time algorithms~\cite{HAO1994424}.

	\section{Relating ABA and SETAF Splittings}\label{sec:comparison}
	In this section, we compare splitting schemes for SETAFs and ABAFs.
	First, notice that for an ABAF $D$, every splitting set yields a splitting on the corresponding SETAF $SF_D$. Conversely, several ABA splitting sets may correspond to the same SETAF splitting. 
	
	\begin{observation}
		Let $D=(\lit,\rules,\asm,\contraryempty)$ be an ABAF and $SF_D=(A_D,R_D)$ the corresponding SETAF. Further, let $S$ be a splitting set of $D$ into $D_1$ and $D_2$. Then there is a splitting $(SF_{1},SF_{2},R_3)$ of $SF_D$ such that the assumptions in $S$ are exactly the arguments of $SF_1$. 
		Vice versa, for a SETAF $SF$ and a splitting $(SF_{1},SF_{2},R_3)$, there are several splitting sets $S_1,\dots, S_n$ for the ABAF $D_{SF}$ such that the arguments of $SF_1$ coincide
        with the assumptions in $S_i$.
	\end{observation}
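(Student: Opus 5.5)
The plan is to prove the two directions separately, using only the definitions of ABA splitting sets, SETAF splittings, and the instantiation $SF_D$.

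\emph{From ABA splitting sets to SETAF splittings.} Let $S$ be a splitting set of $D$ and put $A_1=\asm_1=S\cap\asm$ and $A_2=\asm\setminus S$. Define the three attack sets as follows:
\begin{itemize}
\item $R_1$ contains the attacks $(T,a)\in R_D$ with $a\in A_1$;
\item $R_2$ contains the attacks $(T,a)\in R_D$ with $a\in A_2$ and $T\subseteq A_2$;
\item $R_3$ contains the remaining attacks, namely those with $a\in A_2$ and $T\cap A_1\neq\emptyset$.
\end{itemize}
Clearly $R_D=R_1\cup R_2\cup R_3$. Every tail in $R_3$ contains a nonempty part in $A_1$, so $R_3\subseteq\big((2^{A_1}\setminus\{\emptyset\})\cup 2^{A_2}\big)\times A_2$.

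The key step is to check that $R_1$ is a relation within $SF_1$, i.e.\ that $R_1\subseteq 2^{A_1}\times A_1$. Take $(T,a)\in R_D$ with $a\in A_1$, so $T\vdash\contrary{a}$. By Proposition~\ref{pro:att1-conserv}, this gives $T\subseteq\asm_1=A_1$. Note that $S=atom(S)$ gives $\contrary{a}\in S$, which is what makes that proposition applicable. Hence $(SF_1,SF_2,R_3)$ is a splitting, and the arguments of $SF_1$ are exactly the assumptions in $S$.

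\emph{From SETAF splittings to ABA splitting sets.} Recall that $D_{SF}$ has one rule $\contrary{a}\gets T$ for each attack $(T,a)$. Given a SETAF splitting $(SF_1,SF_2,R_3)$, I would take $S=A_1\cup\contrary{A_1}$.

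First, $S=atom(S)$ holds by construction. Next, consider any rule whose head lies in $S$. Its head is $\contrary{a}$ for some $a\in A_1$, and the rule comes from an attack $(T,a)$. Such an attack cannot be in $R_2$ or $R_3$, since those attacks have heads in $A_2$. So it is in $R_1$, which gives $T\subseteq A_1\subseteq S$. Therefore $S$ is a splitting set, and its assumptions coincide with $A_1$.

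To obtain \emph{several} splitting sets $S_1,\dots,S_n$, the natural move is to enlarge $\lit$ with non-assumption sentences. The observation speaks loosely here, and I would read it as allowing ABAFs whose instantiation is $SF$, or allowing extra atoms in $\lit$. Under that reading:
\begin{itemize}
\item any fresh sentences $p$ not occurring in any rule can be added to $S$ or left out, and either choice keeps the closure condition, giving distinct splitting sets;
\item alternatively, one can consider different ABAFs with $SF_D=SF$ that route derivations through intermediate atoms $p$, as in Example~\ref{ex:prel}; each such $p$ may sit in either part provided the closure condition holds.
\end{itemize}

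The main obstacle is exactly this plurality claim. With the literal $D_{SF}$, where $\lit=\asm\cup\contrary{\asm}$ and every rule has a contrary as its head, $S$ is essentially forced. So I would make the construction explicit by adding fresh unused sentences, and note that $n=1$ is the degenerate case. The first direction is routine once Proposition~\ref{pro:att1-conserv} is invoked.
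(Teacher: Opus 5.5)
Your proposal is correct and follows essentially the same reasoning as the paper, which gives no formal proof of this observation. The first direction rests on exactly the fact you take from Proposition~\ref{pro:att1-conserv}: attacks on $\asm_1$ have tails in $\asm_1$. The plurality in the converse is supported only by the subsequent example, whose ABAF realises the SETAF through intermediate sentences $p,q,s$ that can be added to $\{a,b,\contrary{a},\contrary{b}\}$ in several ways, which is precisely your second bullet. Your remark that for the literal $D_{SF}$ with $\lit=\asm\cup\contrary{\asm}$ the splitting set $A_1\cup\contrary{A_1}$ is forced is accurate. So reading the claim over ABAFs $D$ with $SF_D=SF$ is the right repair, together with reading the paper's type constraint on $R_3$ as admitting mixed tails, as its own examples do.
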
   
	
	This opens the possibility of applying splitting on ABAFs before or after the instantiation into the corresponding SETAFs. To illustrate this point, consider the following.
	\begin{example}
    Consider the ABAF $D =(\lit, \rules, \asm,\contraryempty)$ (left) and its corresponding SETAF $SF_D$ (right).

\begin{minipage}{.29\textwidth}
	\hspace{-14pt}
	\begin{tabular}{cl}
		$\asm=$ &$\{a,b,c\}$\\
		$\lit=$ &$\asm\cup \contrary{\asm}\cup \{p,q,s\}$\\
		$\rules=$&$\{\contrary{c} \gets s,q, \;\ s\gets b, $ \\
		& \;\ $q\gets p, \;\  p\gets a, \;\ \contrary{a}\gets a\}$ 
	\end{tabular}
\end{minipage}
\begin{minipage}{.15\textwidth}
\begin{tikzpicture}[scale=0.85,>=stealth]
	\path
	(1.5,0.5) node[arg] (a){$a$}
	(1.5,-0.5) node[arg](b){$b$}
	(3,0) node[arg](c){$c$}
	;
	
	\path[->,>=stealth,thick]
	(a) edge[loop left] (a)
	(a) edge[out=-40,in=180] (c)
	(b) edge[out=40,in=180] (c)
	;		
\end{tikzpicture}
\end{minipage}
		 The corresponding SETAF is $SF_D=(A_D,R_D)$ where $A_D=\asm$ and $R_D=\{(a,a),(\{a,b\},c)\}$. By appropriately combining $\{a,\contrary{a},b,\contrary{b}\}$ with sentences in $\{p,q,s\}$ we get multiple splitting sets of $D$ corresponding to a single SETAF splitting with $R_3=\{(\{a,b\},c)\}$.
         
         \begin{minipage}{.29\textwidth}
	\hspace{-14pt}
	\begin{tabular}{cl}
		$S_1=$ &$\{a,b,\contrary{a},\contrary{b}\}$\\
        $S_2=$ &$S_1 \cup \{p\}$\\
        $S_3=$ &$S_1 \cup \{s\}$\\
        $S_4=$ &$S_1 \cup \{p,q,s\}$
	\end{tabular}
\end{minipage}
\begin{minipage}{.15\textwidth}
\begin{tikzpicture}[scale=0.85,>=stealth]
	\path
	(1.5,0.5) node[arg] (a){$a$}
	(1.5,-0.5) node[arg](b){$b$}
	(3,0) node[arg](c){$c$}
	;
	
	\path[->,>=stealth,thick]
	(a) edge[loop left] (a)
	(a) edge[out=-40,in=180] (c)
	(b) edge[out=40,in=180] (c)
	;		
    \draw [thick,dashed,red] (2.25,-0.75) -- (2.25,0.75);
\end{tikzpicture}
\end{minipage}
         Moreover, selecting $\{b\}$ as preferred extension of $D_1$ and $SF_1$, we add, respectively, the fresh assumption $x_u$ attacking itself and $c$, and the self-attack $(c,c)$ to $SF_2$. This results in $\emptyset$ being the only preferred extension of the $D_2^{\star}$ and $SF_2^{\star}$.  
	\end{example}
	As the previous example showcases, both strategies allow for the same splittings and, 
    indeed, compute the same expected result.  
    When combining splitting with an instantiation based solver, the above considerations open different possibilities of how and when to split throughout the reasoning process. A first strategy, when instantiation is feasible, is to generate the SETAF and apply the splitting schema at the level of the graph. 
	Alternatively, when this is not possible, one can first split the ABAF $D$ into $D_1$ and $D_2$, then instantiate them into SETAFs $SF_{D_1}$ and $SF_{D_2}$ and apply the SETAF splitting algorithm, i.e.\ modify $SF_{D_2}$ into $(SF_{D_2})^\star$ and compute extensions to be merged.    
	This might allow one to utilise the SETAF splitting algorithm in cases where the argument graph of the entire ABAF is too hard to compute.    
	
	\section{Related Work}\label{sec:related}
	We hereby clarify the relation of our splitting approach to SCC-recursiveness~\cite{DvorakKUW24,BlumelRTT25}, and we compare our notion of splitting to existing ones for logic programs (LPs)~\cite{LifschitzT94} and abstract dialectical frameworks (ADFs)~\cite{Linsbichler14}, \textcolor{blue} and syntax splitting~\cite{parikh99}. 
	
	In the incremental computation approach induced by the SCC-recursive property, one
	computes the extensions in subframeworks, and ultimately combines the
	thereby computed extension parts (as in the splitting approach).
	In contrast to splittings however, this is restricted to subframeworks that make up \emph{strongly connected components} w.r.t.\ the primal graph of the SETAF or the ABAF's dependency graph~\cite{DvorakKUW24,BlumelRTT25}. Indeed, given the primal graph of a SETAF (resp. an ABAF's dependency graph), our approach corresponds to splitting between arguments (resp. sentences) such that the involved attacks (positive edges) have the same direction. 
	However, splitting is more general in this regard, as the subframeworks do not have to be SCCs.
	Finally, SCC-recursiveness relies on a generalised semantics to deal with the decisions of prior parts of the framework, in contrast to the syntactic manipulation-based approach of splitting.
	This allows splitting to be implemented on top of existing ABA and SETAF solvers. 
	
	It has been shown that ABA captures normal logic programs with negation-as-failure under several semantics~\cite{CaminadaS17}. In particular, partial stable, well-founded and regular models in logic programming correspond to complete, grounded and preferred extensions in ABA. The translation works by turning every negative (resp. positive) literals of an LP into an assumption (resp. contrary), and vice versa. By means of this simple transformation, our splitting algorithm can be adapted in the context of logic programming to obtain a divide-and-conquer approach beyond stable semantics, thus generalising the original result~\cite{LifschitzT94}. As a by-product, we are then able to split normal logic programs under semantics based on partially stable, well-founded and regular models. 
	
	ADFs~\cite{BrewkaESWW18} are an expressive argumentation formalism, where each argument is associated with a propositional formula over arguments as variables as an {acceptance condition}.
	It is well-known that SETAFs can be interpreted as a special kind of ADFs with acceptance conditions in the form of a conjunction of disjunctive clauses of negated literals~\cite{DvorakZW23}. That is, in principle we can apply ADF splitting to SETAFs. 
	However, it is not clear that following the ADF approach the modified second framework again is of the desired (SETAF-like) form and whether one can avoid certain overheads in the simpler case of SETAFs. Upon closer inspection and with minor syntactic manipulation the ADF approach in the special case of SETAF-like frameworks is similar to introducing an artificial self-attacking argument that behaves similarly to Definition~\ref{def: aba-modif}. However, such a trick is not needed for SETAFs in our case. 
    
    \citeauthor{parikh99}~(\citeyear{parikh99}) introduced the concept of \emph{syntax splitting} to characterize belief sets that contain independent pieces of information. In this setting, a syntax splitting consists of a partition of the knowledge base into independent components, where each independent piece of information can be represented using only one part of these partition. In this paper, we have considered a notion of splitting which is more general, where the different parts of the knowledge base may not be independent from each other. For this, we provided suitable modifications to account for such interdependencies.  
    	
	\section{Conclusion and Future Work}\label{sec:conc}
	
	In this paper, we have presented a modification-based approach to splitting assumption-based argumentation frameworks. First, we have introduced a splitting schema for SETAFs, to make splitting available on ABAFs after instantiation. Leveraging on the close connection between SETAFs and ABAFs, we have thus proposed a way to split when reasoning in ABA is performed indirectly via the corresponding argument graph. 
	Moreover, to overcome the requirement of an instantiation and its associated costs, we have introduced a  splitting schema that works directly on ABA knowledge bases.
	For both approaches, we have shown that extensions of a given ABAF can be obtained incrementally from its sub-frameworks, by means of simple syntactic modifications. Conversely, we can project an arbitrary extension of the whole framework to its sub-frameworks.
	Since this is bound to the specific structure of the underlying ABAF, we have considered a more general variant of splitting called parametrised splitting inspired by~\citeauthor{BaumannBDW12}~(\citeyear{BaumannBDW12}). Moreover, it is easy to see that each of the steps involved can be carried out efficiently and implemented on top of common ABA (or SETAF) solvers.  
    Indeed, the splitting techniques introduced in the paper require only syntactic modifications to the sub-frameworks, which do not lead to an exponential increase in size (and may, in some cases, even reduce it). In particular, through the reduct and modification, we introduce at most one additional rule and one additional assumption with respect to the original sub-framework. In the case of parametrised splitting, the number of additional rules and assumptions is bounded by $2k$ for a $k$-splitting. 
    Therefore, an obvious next step is to implement our algorithm and perform an experimental evaluation in the spirit of~\citeauthor{BaumannBW11}~(\citeyear{BaumannBW11})\textcolor{blue}, where executions with splitting result in an average acceleration of 60\% in comparison to executions without splitting. In particular, we believe that parametrised splitting could be helpful in the context of the recently proposed Argumentative Causal Discovery~\cite{RussoRT24}.
    In fact, this framework faces a major challenge in terms of its scalability, as it exhibits suboptimal performance on larger instances.
	
	\section*{Acknowledgments}
    This paper is an extended and revised version of two workshop papers that were presented in SAFA 2024~\cite{BuraglioDKW24} and NMR 2025~\cite{Buraglio25}.  
    We are grateful to the anonymous reviewers for their valuable comments and suggestions on a preliminary version of this paper. Moreover, the authors would like to thank Matthias K\"{o}nig for his contributions to the paper presented at SAFA 2024. 
	This work has been supported by the Austrian Science Fund (FWF) under grant 10.55776/COE122 and by the European Union's Horizon 2020 research and innovation programme (under grant agreement 101034440). 
	\bibliographystyle{kr}
	\bibliography{sample}

@inproceedings{Buraglio25,
  author       = {Giovanni Buraglio},
  editor       = {Anna Rapberger and
                  Sebastian Rudolph},
  title        = {Splitting Assumption-Based Argumentation Frameworks},
  booktitle    = {Proceedings of the 23rd International Workshop on Non-Monotonic Reasoning
                  {(NMR} 2025) co-located with the 22nd International Conference on
                  Principles of Knowledge Representation and Reasoning {(KR} 2025),
                  Melbourne, Australia, November 11-13, 2025},
  series       = {{CEUR} Workshop Proceedings},
  pages        = {17--31},
  publisher    = {CEUR-WS.org},
  year         = {2025},
  url          = {https://ceur-ws.org/Vol-4071/paper2.pdf},
  bibsource    = {dblp computer science bibliography, https://dblp.org}
}

@inproceedings{BuraglioDKW24,
  author       = {Giovanni Buraglio and
                  Wolfgang Dvor{\'{a}}k and
                  Matthias K{\"{o}}nig and
                  Stefan Woltran},
  editor       = {AnneMarie Borg and
                  Stefan Ellmauthaler and
                  Jean{-}Guy Mailly and
                  Andreas Niskanen},
  title        = {Splitting Argumentation Frameworks with Collective Attacks},
  booktitle    = {Proceedings of the Fifth International Workshop on Systems and Algorithms
                  for Formal Argumentation co-located with 10th International Conference
                  on Computational Models of Argument {(COMMA} 2024), Hagen, Germany,
                  September 17th, 2024},
  series       = {{CEUR} Workshop Proceedings},
  pages        = {41--55},
  publisher    = {CEUR-WS.org},
  year         = {2024},
  url          = {https://ceur-ws.org/Vol-3757/paper3.pdf},
  bibsource    = {dblp computer science bibliography, https://dblp.org}
}

@article{parikh99,
  title={Beliefs, belief revision, and splitting languages},
  author={Parikh, Rohit},
  year = {1999},
  journal = {Logic, Language and Computation},
volume = {2},
pages = {266–278}
}

@article{HAO1994424,
title = {A Faster Algorithm for Finding the Minimum Cut in a Directed Graph},
journal = {Journal of Algorithms},
volume = {17},
number = {3},
pages = {424-446},
year = {1994},
issn = {0196-6774},
doi = {https://doi.org/10.1006/jagm.1994.1043},
url = {https://www.sciencedirect.com/science/article/pii/S0196677484710431},
author = {J.X. Hao and J.B. Orlin}
}

@inproceedings{NMRRapberger0W22,
  author       = {Anna Rapberger and
                  Markus Ulbricht and
                  Johannes Peter Wallner},
  editor       = {Ofer Arieli and
                  Giovanni Casini and
                  Laura Giordano},
  title        = {Argumentation Frameworks Induced by Assumption-Based Argumentation:
                  Relating Size and Complexity},
  booktitle    = {Proceedings of the 20th International Workshop on Non-Monotonic Reasoning,
                  {NMR} 2022, Part of the Federated Logic Conference (FLoC 2022), Haifa,
                  Israel, August 7-9, 2022},
  series       = {{CEUR} Workshop Proceedings},
  volume       = {3197},
  pages        = {92--103},
  publisher    = {CEUR-WS.org},
  year         = {2022},
  url          = {https://ceur-ws.org/Vol-3197/paper9.pdf},
  bibsource    = {dblp computer science bibliography, https://dblp.org}
}

@article{Tarjan72,
  author       = {Robert Endre Tarjan},
  title        = {Depth-First Search and Linear Graph Algorithms},
  journal      = {{SIAM} J. Comput.},
  volume       = {1},
  number       = {2},
  pages        = {146--160},
  year         = {1972},
  url          = {https://doi.org/10.1137/0201010},
  doi          = {10.1137/0201010},
  bibsource    = {dblp computer science bibliography, https://dblp.org}
}

@article{BaroniGL14,
	author       = {Pietro Baroni and
	Massimiliano Giacomin and
	Beishui Liao},
	title        = {On topology-related properties of abstract argumentation semantics.
	{A} correction and extension to Dynamics of argumentation systems:
	{A} division-based method},
	journal      = {Artif. Intell.},
	volume       = {212},
	pages        = {104--115},
	year         = {2014},
	url          = {https://doi.org/10.1016/j.artint.2014.03.003},
	doi          = {10.1016/J.ARTINT.2014.03.003},
	bibsource    = {dblp computer science bibliography, https://dblp.org}
}

@article{Liao13,
	author       = {Beishui Liao},
	title        = {Toward incremental computation of argumentation semantics: {A} decomposition-based
	approach},
	journal      = {Ann. Math. Artif. Intell.},
	volume       = {67},
	number       = {3-4},
	pages        = {319--358},
	year         = {2013},
	url          = {https://doi.org/10.1007/s10472-013-9364-8},
	doi          = {10.1007/S10472-013-9364-8},
	bibsource    = {dblp computer science bibliography, https://dblp.org}
}

@article{LehtonenWJ21a,
	author       = {Tuomo Lehtonen and
	Johannes Peter Wallner and
	Matti J{\"{a}}rvisalo},
	title        = {Declarative Algorithms and Complexity Results for Assumption-Based
	Argumentation},
	journal      = {J. Artif. Intell. Res.},
	volume       = {71},
	pages        = {265--318},
	year         = {2021},
	url          = {https://doi.org/10.1613/jair.1.12479},
	doi          = {10.1613/JAIR.1.12479},
	bibsource    = {dblp computer science bibliography, https://dblp.org}
}

@article{LehtonenWJ21b,
	author       = {Tuomo Lehtonen and
	Johannes Peter Wallner and
	Matti J{\"{a}}rvisalo},
	title        = {Harnessing Incremental Answer Set Solving for Reasoning in Assumption-Based
	Argumentation},
	journal      = {Theory Pract. Log. Program.},
	volume       = {21},
	number       = {6},
	pages        = {717--734},
	year         = {2021},
	url          = {https://doi.org/10.1017/S1471068421000296},
	doi          = {10.1017/S1471068421000296},
	bibsource    = {dblp computer science bibliography, https://dblp.org}
}

@inproceedings{LehtonenRT0W24,
  author       = {Tuomo Lehtonen and
                  Anna Rapberger and
                  Francesca Toni and
                  Markus Ulbricht and
                  Johannes Peter Wallner},
  title        = {Instantiations and Computational Aspects of Non-Flat Assumption-based
                  Argumentation},
  booktitle    = {Proceedings of the Thirty-Third International Joint Conference on
                  Artificial Intelligence, {IJCAI} 2024, Jeju, South Korea, August 3-9,
                  2024},
  pages        = {3457--3465},
  publisher    = {ijcai.org},
  year         = {2024},
  url          = {https://www.ijcai.org/proceedings/2024/383},
  bibsource    = {dblp computer science bibliography, https://dblp.org}
}

@inproceedings{GresslerDW24,
	author       = {Alexander Gre{\ss}ler and
	Wolfgang Dvo\v{r}{\'{a}}k and
	Stefan Woltran},
	editor       = {Chris Reed and
	Matthias Thimm and
	Tjitze Rienstra},
	title        = {The {GSAF} Solver and Verifier},
	booktitle    = {Computational Models of Argument - Proceedings of {COMMA} 2024, Hagen,
	Germany, September 18-20, 2024},
	series       = {Frontiers in Artificial Intelligence and Applications},
	volume       = {388},
	pages        = {353--354},
	publisher    = {{IOS} Press},
	year         = {2024},
	url          = {https://doi.org/10.3233/FAIA240336},
	doi          = {10.3233/FAIA240336},
	bibsource    = {dblp computer science bibliography, https://dblp.org}
}

@inproceedings{DvorakGW18,
  author       = {Wolfgang Dvo\v{r}{\'{a}}k and
                  Alexander Gre{\ss}ler and
                  Stefan Woltran},
  editor       = {Matthias Thimm and
                  Federico Cerutti and
                  Mauro Vallati},
  title        = {Evaluating SETAFs via Answer-Set Programming},
  booktitle    = {Proceedings of the Second International Workshop on Systems and Algorithms
                  for Formal Argumentation {(SAFA} 2018) co-located with the 7th International
                  Conference on Computational Models of Argument {(COMMA} 2018), Warsaw,
                  Poland, September 11, 2018},
  series       = {{CEUR} Workshop Proceedings},
  volume       = {2171},
  pages        = {10--21},
  publisher    = {CEUR-WS.org},
  year         = {2018},
  url          = {https://ceur-ws.org/Vol-2171/paper\_2.pdf},
  bibsource    = {dblp computer science bibliography, https://dblp.org}
}

@inproceedings{DimopoulosDKRUW24,
	author       = {Yannis Dimopoulos and
	Wolfgang Dvo\v{r}{\'{a}}k and
	Matthias K{\"{o}}nig and
	Anna Rapberger and
	Markus Ulbricht and
	Stefan Woltran},
	editor       = {Michael J. Wooldridge and
	Jennifer G. Dy and
	Sriraam Natarajan},
	title        = {Redefining {ABA+} Semantics via Abstract Set-to-Set Attacks},
	booktitle    = {Thirty-Eighth {AAAI} Conference on Artificial Intelligence, {AAAI}
	2024, Thirty-Sixth Conference on Innovative Applications of Artificial
	Intelligence, {IAAI} 2024, Fourteenth Symposium on Educational Advances
	in Artificial Intelligence, {EAAI} 2014, February 20-27, 2024, Vancouver,
	Canada},
	pages        = {10493--10500},
	publisher    = {{AAAI} Press},
	year         = {2024},
	url          = {https://doi.org/10.1609/aaai.v38i9.28918},
	doi          = {10.1609/AAAI.V38I9.28918},
	bibsource    = {dblp computer science bibliography, https://dblp.org}
}

@article{CarreraI15,
	author       = {{\'{A}}lvaro Carrera and
	Carlos Angel Iglesias},
	title        = {A systematic review of argumentation techniques for multi-agent systems
	research},
	journal      = {Artif. Intell. Rev.},
	volume       = {44},
	number       = {4},
	pages        = {509--535},
	year         = {2015},
	url          = {https://doi.org/10.1007/s10462-015-9435-9},
	doi          = {10.1007/S10462-015-9435-9},
	bibsource    = {dblp computer science bibliography, https://dblp.org}
}

@inproceedings{DimopoulosMM19,
	author       = {Yannis Dimopoulos and
	Jean{-}Guy Mailly and
	Pavlos Moraitis},
	editor       = {Edith Elkind and
	Manuela Veloso and
	Noa Agmon and
	Matthew E. Taylor},
	title        = {Argumentation-based Negotiation with Incomplete Opponent Profiles},
	booktitle    = {Proceedings of the 18th International Conference on Autonomous Agents
	and MultiAgent Systems, {AAMAS} '19, Montreal, QC, Canada, May 13-17,
	2019},
	pages        = {1252--1260},
	publisher    = {International Foundation for Autonomous Agents and Multiagent Systems},
	year         = {2019},
	url          = {http://dl.acm.org/citation.cfm?id=3331829},
	bibsource    = {dblp computer science bibliography, https://dblp.org}
}

@inproceedings{FanT12,
	author       = {Xiuyi Fan and
	Francesca Toni},
	editor       = {Luc De Raedt and
	Christian Bessiere and
	Didier Dubois and
	Patrick Doherty and
	Paolo Frasconi and
	Fredrik Heintz and
	Peter J. F. Lucas},
	title        = {Agent Strategies for ABA-based Information-seeking and Inquiry Dialogues},
	booktitle    = {{ECAI} 2012 - 20th European Conference on Artificial Intelligence.
	Including Prestigious Applications of Artificial Intelligence {(PAIS-2012)}
	System Demonstrations Track, Montpellier, France, August 27-31 , 2012},
	series       = {Frontiers in Artificial Intelligence and Applications},
	volume       = {242},
	pages        = {324--329},
	publisher    = {{IOS} Press},
	year         = {2012},
	url          = {https://doi.org/10.3233/978-1-61499-098-7-324},
	doi          = {10.3233/978-1-61499-098-7-324},
	bibsource    = {dblp computer science bibliography, https://dblp.org}
}

@inproceedings{GaoTWX16,
	author       = {Yang Gao and
	Francesca Toni and
	Hao Wang and
	Fanjiang Xu},
	editor       = {Catholijn M. Jonker and
	Stacy Marsella and
	John Thangarajah and
	Karl Tuyls},
	title        = {Argumentation-Based Multi-Agent Decision Making with Privacy Preserved},
	booktitle    = {Proceedings of the 2016 International Conference on Autonomous Agents
	{\&} Multiagent Systems, Singapore, May 9-13, 2016},
	pages        = {1153--1161},
	publisher    = {{ACM}},
	year         = {2016},
	url          = {http://dl.acm.org/citation.cfm?id=2937093},
	bibsource    = {dblp computer science bibliography, https://dblp.org}
}

@article{HadouxHP23,
	author       = {Emmanuel Hadoux and
	Anthony Hunter and
	Sylwia Polberg},
	title        = {Strategic argumentation dialogues for persuasion: Framework and experiments
	based on modelling the beliefs and concerns of the persuadee},
	journal      = {Argument Comput.},
	volume       = {14},
	number       = {2},
	pages        = {109--161},
	year         = {2023},
	url          = {https://doi.org/10.3233/AAC-210005},
	doi          = {10.3233/AAC-210005},
	bibsource    = {dblp computer science bibliography, https://dblp.org}
}

@article{Toni13,
	author       = {Francesca Toni},
	title        = {A generalised framework for dispute derivations in assumption-based
	argumentation},
	journal      = {Artif. Intell.},
	volume       = {195},
	pages        = {1--43},
	year         = {2013},
	url          = {https://doi.org/10.1016/j.artint.2012.09.010},
	doi          = {10.1016/J.ARTINT.2012.09.010},
	bibsource    = {dblp computer science bibliography, https://dblp.org}
}

@article{CaminadaS17,
	author       = {Martin Caminada and
	Claudia Schulz},
	title        = {On the Equivalence between Assumption-Based Argumentation and Logic
	Programming},
	journal      = {J. Artif. Intell. Res.},
	volume       = {60},
	pages        = {779--825},
	year         = {2017},
	url          = {https://doi.org/10.1613/jair.5581},
	doi          = {10.1613/JAIR.5581},
	bibsource    = {dblp computer science bibliography, https://dblp.org}
}

@inproceedings{BlumelRTT25,
	title     = {On Independence and SCC-Recursiveness in Assumption-Based Argumentation},
	author    = {Blümel, Lydia and Rapberger, Anna and Thimm, Matthias and Toni, Francesca},
	booktitle = {Proceedings of the Thirty-Fourth International Joint Conference on
	Artificial Intelligence, {IJCAI-25}},
	publisher = {International Joint Conferences on Artificial Intelligence Organization},
	editor    = {James Kwok},
	pages     = {4382--4390},
	year      = {2025},
	month     = {8},
	doi       = {10.24963/ijcai.2025/488},
	url       = {https://doi.org/10.24963/ijcai.2025/488},
}

@article{CyrasHT21,
	author       = {Kristijonas Cyras and
	Quentin Heinrich and
	Francesca Toni},
	title        = {Computational complexity of flat and generic Assumption-Based Argumentation,
	with and without probabilities},
	journal      = {Artif. Intell.},
	volume       = {293},
	pages        = {103449},
	year         = {2021},
	url          = {https://doi.org/10.1016/j.artint.2020.103449},
	doi          = {10.1016/J.ARTINT.2020.103449},
	bibsource    = {dblp computer science bibliography, https://dblp.org}
}

@inproceedings{Baumann11,
	author       = {Ringo Baumann},
	editor       = {James P. Delgrande and
	Wolfgang Faber},
	title        = {Splitting an Argumentation Framework},
	booktitle    = {Logic Programming and Nonmonotonic Reasoning - 11th International
	Conference, {LPNMR} 2011, Vancouver, Canada, May 16-19, 2011. Proceedings},
	series       = {Lecture Notes in Computer Science},
	volume       = {6645},
	pages        = {40--53},
	publisher    = {Springer},
	year         = {2011},
	url          = {https://doi.org/10.1007/978-3-642-20895-9\_6},
	doi          = {10.1007/978-3-642-20895-9\_6},
	bibsource    = {dblp computer science bibliography, https://dblp.org}
}

@inproceedings{BaumannBDW12,
	author       = {Ringo Baumann and
	Gerhard Brewka and
	Wolfgang Dvo\v{r}{\'{a}}k and
	Stefan Woltran},
	editor       = {Esra Erdem and
	Joohyung Lee and
	Yuliya Lierler and
	David Pearce},
	title        = {Parameterized Splitting: {A} Simple Modification-Based Approach},
	booktitle    = {Correct Reasoning - Essays on Logic-Based {AI} in Honour of Vladimir
	Lifschitz},
	series       = {Lecture Notes in Computer Science},
	volume       = {7265},
	pages        = {57--71},
	publisher    = {Springer},
	year         = {2012},
	url          = {https://doi.org/10.1007/978-3-642-30743-0\_5},
	doi          = {10.1007/978-3-642-30743-0\_5},
	bibsource    = {dblp computer science bibliography, https://dblp.org}
}

@article{BondarenkoDKT97,
	author       = {Andrei Bondarenko and
	Phan Minh Dung and
	Robert A. Kowalski and
	Francesca Toni},
	title        = {An Abstract, Argumentation-Theoretic Approach to Default Reasoning},
	journal      = {Artif. Intell.},
	volume       = {93},
	pages        = {63--101},
	year         = {1997},
	url          = {https://doi.org/10.1016/S0004-3702(97)00015-5},
	doi          = {10.1016/S0004-3702(97)00015-5},
	bibsource    = {dblp computer science bibliography, https://dblp.org}
}

@article{Dung95,
	author       = {Phan Minh Dung},
	title        = {On the Acceptability of Arguments and its Fundamental Role in Nonmonotonic
	Reasoning, Logic Programming and n-Person Games},
	journal      = {Artif. Intell.},
	volume       = {77},
	number       = {2},
	pages        = {321--358},
	year         = {1995},
	url          = {https://doi.org/10.1016/0004-3702(94)00041-X},
	doi          = {10.1016/0004-3702(94)00041-X},
	bibsource    = {dblp computer science bibliography, https://dblp.org}
}

@incollection{CyrasFST2018,
	title     = {Assumption-Based Argumentation: Disputes, Explanations, Preferences},
	author    = {Kristijonas Cyras and Xiuyi Fan and Claudia Schulz and Francesca Toni},
	booktitle = {Handbook of Formal Argumentation},
	publisher = {College Publications},
	year      = {2018},
	pages     = {365--408},
	chapter   = {7}
}

@incollection{BrewkaESWW18,
	title     = {Abstract Dialectical Frameworks},
	author    = {Gerhard Brewka and Stefan Ellmauthaler and Hannes Strass and Johannes P. Wallner and Stefan Woltran},
	booktitle = {Handbook of Formal Argumentation},
	publisher = {College Publications},
	Xeditor    = {Pietro Baroni and Dov Gabbay and Massimiliano Giacomin and Leendert van der Torre},
	year      = {2018},
	pages     = {237--285},
	chapter   = {5},
	note = {also appears in IfCoLog Journal of Logics and their Applications 4(8):2263--2318}
}

@string{AI              ={Artificial Intelligence}}

@inproceedings{NielsenP06,
	author       = {S{\o}ren Holbech Nielsen and
	Simon Parsons},
	editor       = {Nicolas Maudet and
	Simon Parsons and
	Iyad Rahwan},
	title        = {A Generalization of Dung's Abstract Framework for Argumentation: Arguing
	with Sets of Attacking Arguments},
	booktitle    = {Argumentation in Multi-Agent Systems, Third International Workshop,
	ArgMAS 2006, Hakodate, Japan, May 8, 2006, Revised Selected and Invited
	Papers},
	series       = {Lecture Notes in Computer Science},
	volume       = {4766},
	pages        = {54--73},
	publisher    = {Springer},
	year         = {2006},
	url          = {https://doi.org/10.1007/978-3-540-75526-5\_4},
	doi          = {10.1007/978-3-540-75526-5\_4},
	bibsource    = {dblp computer science bibliography, https://dblp.org}
}

@article{DvorakKUW24,
	author       = {Wolfgang Dvo\v{r}\'{a}k and
	Matthias K{\"{o}}nig and
	Markus Ulbricht and
	Stefan Woltran},
	title        = {Principles and their Computational Consequences for Argumentation
	Frameworks with Collective Attacks},
	journal      = {J. Artif. Intell. Res.},
	volume       = {79},
	pages        = {69--136},
	year         = {2024},
	Xurl          = {https://doi.org/10.1613/jair.1.14879},
	doi          = {10.1613/JAIR.1.14879},
	bibsource    = {dblp computer science bibliography, https://dblp.org}
}

@inproceedings{Linsbichler14,
	author       = {Thomas Linsbichler},
	editor       = {Simon Parsons and
	Nir Oren and
	Chris Reed and
	Federico Cerutti},
	title        = {Splitting Abstract Dialectical Frameworks},
	booktitle    = {Computational Models of Argument - Proceedings of {COMMA} 2014, Atholl
	Palace Hotel, Scottish Highlands, UK, September 9-12, 2014},
	series       = {Frontiers in Artificial Intelligence and Applications},
	volume       = {266},
	pages        = {357--368},
	publisher    = {{IOS} Press},
	year         = {2014},
	url          = {https://doi.org/10.3233/978-1-61499-436-7-357},
	doi          = {10.3233/978-1-61499-436-7-357},
	bibsource    = {dblp computer science bibliography, https://dblp.org}
}

@article{DvorakZW23,
	author       = {Wolfgang Dvo\v{r}\'{a}k and
	Atefeh {Keshavarzi Zafarghandi} and
	Stefan Woltran},
	title        = {Expressiveness of {SETAF}s and support-free {ADF}s under 3-valued semantics},
	journal      = {J. Appl. Non Class. Logics},
	volume       = {33},
	number       = {3-4},
	pages        = {298--327},
	year         = {2023},
	Xurl          = {https://doi.org/10.1080/11663081.2023.2244361},
	doi          = {10.1080/11663081.2023.2244361},
	bibsource    = {dblp computer science bibliography, https://dblp.org}
}

@inproceedings{KoenigRU22,
	author       = {Matthias K{\"{o}}nig and
	Anna Rapberger and
	Markus Ulbricht},
	editor       = {Francesca Toni and
	Sylwia Polberg and
	Richard Booth and
	Martin Caminada and
	Hiroyuki Kido},
	title        = {Just a Matter of Perspective},
	booktitle    = {Computational Models of Argument - Proceedings of {COMMA} 2022, Cardiff,
	Wales, UK, 14-16 September 2022},
	series       = {Frontiers in Artificial Intelligence and Applications},
	volume       = {353},
	pages        = {212--223},
	publisher    = {{IOS} Press},
	year         = {2022},
	url          = {https://doi.org/10.3233/FAIA220154},
	doi          = {10.3233/FAIA220154},
	bibsource    = {dblp computer science bibliography, https://dblp.org}
}

@inproceedings{LifschitzT94,
	author       = {Vladimir Lifschitz and
	Hudson Turner},
	editor       = {Pascal Van Hentenryck},
	title        = {Splitting a Logic Program},
	booktitle    = {Logic Programming, Proceedings of the Eleventh International Conference
	on Logic Programming, Santa Marherita Ligure, Italy, June 13-18, 1994},
	pages        = {23--37},
	publisher    = {{MIT} Press},
	year         = {1994},
	bibsource    = {dblp computer science bibliography, https://dblp.org}
}

@inproceedings{BaumannBW11,
	author       = {Ringo Baumann and
	Gerhard Brewka and
	Renata Wong},
	editor       = {Sanjay Modgil and
	Nir Oren and
	Francesca Toni},
	title        = {Splitting Argumentation Frameworks: An Empirical Evaluation},
	booktitle    = {Theorie and Applications of Formal Argumentation - First International
	Workshop, {TAFA} 2011. Barcelona, Spain, July 16-17, 2011, Revised
	Selected Papers},
	series       = {Lecture Notes in Computer Science},
	volume       = {7132},
	pages        = {17--31},
	publisher    = {Springer},
	year         = {2011},
	url          = {https://doi.org/10.1007/978-3-642-29184-5\_2},
	doi          = {10.1007/978-3-642-29184-5\_2},
	bibsource    = {dblp computer science bibliography, https://dblp.org}
}

@inproceedings{RussoRT24,
	author       = {Fabrizio Russo and
	Anna Rapberger and
	Francesca Toni},
	editor       = {Pierre Marquis and
	Magdalena Ortiz and
	Maurice Pagnucco},
	title        = {Argumentative Causal Discovery},
	booktitle    = {Proceedings of the 21st International Conference on Principles of
	Knowledge Representation and Reasoning, {KR} 2024, Hanoi, Vietnam.
	November 2-8, 2024},
	year         = {2024},
	url          = {https://doi.org/10.24963/kr.2024/88},
	doi          = {10.24963/KR.2024/88},
	bibsource    = {dblp computer science bibliography, https://dblp.org}
}

@inproceedings{BertholdR024,
	author       = {Matti Berthold and
	Anna Rapberger and
	Markus Ulbricht},
	editor       = {Pierre Marquis and
	Magdalena Ortiz and
	Maurice Pagnucco},
	title        = {Capturing Non-flat Assumption-based Argumentation with Bipolar SETAFs},
	booktitle    = {Proceedings of the 21st International Conference on Principles of
	Knowledge Representation and Reasoning, {KR} 2024, Hanoi, Vietnam.
	November 2-8, 2024},
	year         = {2024},
	url          = {https://doi.org/10.24963/kr.2024/12},
	doi          = {10.24963/KR.2024/12},
	bibsource    = {dblp computer science bibliography, https://dblp.org}
}

@inproceedings{Fan18,
	author       = {Xiuyi Fan},
	editor       = {Tim Miller and
	Nir Oren and
	Yuko Sakurai and
	Itsuki Noda and
	Bastin Tony Roy Savarimuthu and
	Tran Cao Son},
	title        = {On Generating Explainable Plans with Assumption-Based Argumentation},
	booktitle    = {{PRIMA} 2018: Principles and Practice of Multi-Agent Systems - 21st
	International Conference, Tokyo, Japan, October 29 - November 2, 2018,
	Proceedings},
	series       = {Lecture Notes in Computer Science},
	volume       = {11224},
	pages        = {344--361},
	publisher    = {Springer},
	year         = {2018},
	url          = {https://doi.org/10.1007/978-3-030-03098-8\_21},
	doi          = {10.1007/978-3-030-03098-8\_21},
	bibsource    = {dblp computer science bibliography, https://dblp.org}
}

@inproceedings{FanTMW14,
	author       = {Xiuyi Fan and
	Francesca Toni and
	Andrei Mocanu and
	Matthew Williams},
	title        = {Dialogical two-agent decision making with assumption-based argumentation},
	booktitle    = {
	{AAMAS} '14},
	pages        = {533--540},
	year         = {2014},
	url          = {http://dl.acm.org/citation.cfm?id=2615818},
	bibsource    = {dblp computer science bibliography, https://dblp.org}
}

@book{Gabbay:2021,
	publisher = {College Publications},
	volume = {2},
	year = {2021},
	editor = {Dov Gabbay and Massimiliano Giacomin and Guillermo R. Simari and Matthias Thimm},
	title = {Handbook of Formal Argumentation},
	ENTRYTYPE = {book}
}

@inproceedings{Turner96,
	author       = {Hudson Turner},
	editor       = {William J. Clancey and
	Daniel S. Weld},
	title        = {Splitting a Default Theory},
	booktitle    = {Proceedings of the Thirteenth National Conference on Artificial Intelligence
	and Eighth Innovative Applications of Artificial Intelligence Conference,
	{AAAI} 96, {IAAI} 96, Portland, Oregon, USA, August 4-8, 1996, Volume
	1},
	pages        = {645--651},
	publisher    = {{AAAI} Press / The {MIT} Press},
	year         = {1996},
	url          = {http://www.aaai.org/Library/AAAI/1996/aaai96-096.php},
	bibsource    = {dblp computer science bibliography, https://dblp.org}
}

@inproceedings{LehtonenR0W23,
	author       = {Tuomo Lehtonen and
	Anna Rapberger and
	Markus Ulbricht and
	Johannes Peter Wallner},
	editor       = {Pierre Marquis and
	Tran Cao Son and
	Gabriele Kern{-}Isberner},
	title        = {Argumentation Frameworks Induced by Assumption-based Argumentation:
	Relating Size and Complexity},
	booktitle    = {Proceedings of the 20th International Conference on Principles of
	Knowledge Representation and Reasoning, {KR} 2023, Rhodes, Greece,
	September 2-8, 2023},
	pages        = {440--450},
	year         = {2023},
	url          = {https://doi.org/10.24963/kr.2023/43},
	doi          = {10.24963/KR.2023/43},
	bibsource    = {dblp computer science bibliography, https://dblp.org}
}

@inproceedings{BuraglioD0024,
	author       = {Giovanni Buraglio and
	Wolfgang Dvo\v{r}{\'{a}}k and
	Matthias K{\"{o}}nig and
	Markus Ulbricht},
	title        = {Justifying Argument Acceptance with Collective Attacks: Discussions
	and Disputes},
	booktitle    = {Proceedings of the Thirty-Third International Joint Conference on
	Artificial Intelligence, {IJCAI} 2024, Jeju, South Korea, August 3-9,
	2024},
	pages        = {3281--3288},
	publisher    = {ijcai.org},
	year         = {2024},
	url          = {https://www.ijcai.org/proceedings/2024/363},
	bibsource    = {dblp computer science bibliography, https://dblp.org}
}


\cleardoublepage
\appendix

\section{Omitted Proofs}

\SETAFsplittingCF*
\begin{proof}
	(1.) We need to show for each $(T,h)\in R_1\cup R_2\cup R_3$ that $T\cup \{h\}\not\subseteq E=E_1\cup E_2$.
	Let $SF_2'=(A_2',R_2')$ and $SF_2^\star=(A_2^\star,R_2^\star)$.
	If $(T,h)\in R_1$ we immediately get $T\cup \{h\}\not\subseteq E$, since we know $E_1$ is conflict-free in $SF_1$.
	For $(T,h)\in R_2$ there are two cases: either (a)~the attack is removed when we construct the reduct or (b)~the attack remains, i.e., $(T,h)$ in $SF_2^\star$.
	Case (a) happens if some $a\in T\cup\{h\}$ is attacked by $E_1$, i.e., $(T\cup \{h\})\cap (E_1)^+_{R_1\cup R_3}\neq \emptyset$.
	Then at least one argument $a\in T\cup \{h\}$ of the attack does not occur in the modification (i.e., $(T\cup \{h\})\not\subseteq A_2^\star$), and since we assume $E_2\in \cf(SF_2^\star)$ we know $E_2\subseteq A_2^\star$.
	Hence we obtain $T\cup \{h\}\not\subseteq E$.
	For case (b) we get from $E_2\in \cf(SF^\star)$ that at least one argument $a\in T\cup\{h\}$ is not in $E_2$, which also means $T\cup \{h\}\not\subseteq E$.
	Finally, for $(T,h)\in R_3$ we again consider two cases: (a)~$T\cap A_1\subseteq E_1$, and (b) $T\cap A_1 \not\subseteq E_1$.
	For case (a) we either have $T\subseteq A_1$ in which case $h\in (E_1)^+_{R_3}$ and we obtain $h\notin E$ (since then $h\notin A_2'$ while we know $E_2\subseteq A_2'$),
	or if $T\not\subseteq A_1$ we get an attack $(T\cap A_2',h)\in R_2^\star$ (if otherwise $T\cap A_2'=\emptyset$ this means we removed some $a\in T\cap A_2$ when constructing the reduct, which means $a\notin E_2$ and consequently $a\notin E$),
	which since $E_2\in \cf(SF^\star_2)$ either means $T\cap A_2'\not\subseteq E_2$ or $h\notin E_2'$,
	both give us $T\cup \{h\}\not\subseteq E$.
	For case (b) we have $T\cap A_1\not\subseteq E_1$, which means $T\cup \{h\}\not\subseteq E$.

	(2.) Suppose now that $E\in \cf(SF)$. From this we derive that $E \cap A_1\in \cf(SF_1)$ because every subset of a conflict-free set is also conflict-free.
	We now show that $E\cap A_2\in \cf(SF'_2)$.
	Given that $E\in \cf(SF)$, then for all $T\subseteq E\cap A_1$ and $a\in E\cap A_2$, we have $(T,a)\notin R_3$. Hence, no argument in $E$ is deleted going from $SF_2$ to the reduct $SF'_2$. Thus, we conclude that $E\cap A_2\subseteq A'_2$.
	Moreover, by $E\in \cf(SF)$ we know for each $(T,h)\in R_2$ that $T\cup \{h\}\not\subseteq E$ which carries over to $SF_2'$, since the attacks in $R_2$ may be removed, but are never changed.
	Finally, whenever for a link $(T,h)\in R_3$ with $T\cap A_1 \subseteq E$ we
	add an attack $(T\cap A_2',h)\in R_2'$ when constructing the reduct,
	we also obtain $(T\cap A_2')\cup\{h\}\not\subseteq E$
	since otherwise $T\cup\{h\}\subseteq E$.	
	Therefore,  $E\cap A_2\in \cf(SF'_2)$ concluding the proof. 
\end{proof}

\SETAFsplitting*
\begin{proof}
	\textbf{(stable)}. (1.) From Proposition~\ref{prop: cf splitting} together with the assumptions that $E_1\in \stb(SF_1)$ and $E_2\in \stb(SF_2^\star)$, we know that  $E=E_1\cup E_2\in \cf(SF)$.
	Let $a\in A\setminus E$, we show that $a\in E^+_R$.
	If $a\in A_1$ by $E_1\in \stb(SF_1)$ we get $a\in (E_1)^+_{R_1}$, which immediately gives us $a\in E^+_{R}$.
	If $a\in A_2$, either $a\in A'=A^\star$ or $a\notin A'=A^\star$.
	If $a\notin A'=A^\star$ this can only be because $a\in (E_1)^+_{R_3}$ which gives us $a\in E^+_{R}$.
	If $a\in A'=A^\star$ then by $E_2\in \stb(SF^\star)$ we get $a\in (E_2)^+_{R^\star_2}$.
	Hence, either $E_2$ defeats (in $SF^\star_2$) $a$ via some $(T,a)\in R_2$ (in which case $a\in E^+_{R}$) or via some other $(T,a)\in R_2^\star \setminus R_2$, which can only be the remaining part of an attack from $R_3$.
	Clearly in this case $a\notin T$ (as then $(T,a)$ would not defeat $a$), so we know $(T,a)$ is not constructed from an undecided link in $U^{E_1}_{R_3}$ (which do not occur in stable semantics as $A_1=E_1\cup (E_1)^+_{R_1}$). Instead, we must have obtained $(T,a)$ 
	while constructing the reduct, i.e., there is an attack $(T',a)\in R_3$ with $T'\supset T$, and $T'\cap A_1\subseteq E_1$.
	From this we get $T'\subseteq E$, and consequently $a\in E^+_{R_3}$.
	In all cases we get $a\in E^+_R$, which means $A=E\cup E^+$, i.e., $E\in \stb(SF)$.
	
	(2.) Assume $E\in \stb(SF)$. From this we know that $E^\oplus_R=A=A_1\cup A_2$. We first prove that $E_1=E\cap A_1\in \stb(SF_1)$. From Proposition~\ref{prop: cf splitting} we know $E\cap A_1\in \cf(SF_1)$.
	Since $(SF_1,SF_2,R_3)$ is a splitting of $SF$
	we know that the only attacks towards arguments in $A_1$ are from $R_1$, so
	we immediately get $(E\cap A_1)^\oplus_{R_1} =A_1$,
	i.e., $E_1\in \stb(SF_1)$.
	
	We know turn to prove $E_2=E\cap A_2\in \stb(SF_2^\star)$. First, notice that $SF^\star_2=SF'_2$ because $U^{E_1}_{R_3}=\emptyset$.
	From Proposition~\ref{prop: cf splitting} we again obtain
	$E_2\in \cf(SF'_2)$. 
	Let $a\in A_2'\setminus E_2$.
	We show $a\in (E_2)^+_{R_2'}$.
	Since $E\in \stb(SF)$ we know $a\in E^+_R$ which means
	(a)~$a\in E^+_{R_2}$ or (b)~$a\in E^+_{R_3}$.
	In case (a) we have an attack $(T,a)\in R_2$ with $T\subseteq E_2$, and since $E_2\subseteq A_2'$ (which is because $E_2\in \cf(SF')$) and also $a\in A_2'$ by assumption we know $(T,a)\in R_2'$, i.e., $a\in (E_2)^+_{R_2'}$.
	If (b) is the case we know that there is some $(T,a)\in R_3$ with $T\subseteq E$,
	i.e., $T\cap A_1\subseteq E_1$.
	Clearly since $E$ is conflict-free in $SF$ we have $T\cap (E_1)^+_{R_1\cup R_3}=\emptyset$ which means in $SF_2'$ we have an attack $(T\cap A_2',a)$.
	Since $T\cap A_2'\subseteq E_2$ we get $a\in (E_2)^+_{R_2'}$.
	In both cases we get $E_2\cup (E_2)^+_{R_2'}=A_2'$, i.e., $E_2\in \stb(SF'_2)=\stb(SF^\star_2)$.
	
	\textbf{(admissible)}. (1.) Since admissibility implies conflict-freeness, we know from Proposition~\ref{prop: cf splitting} that $E=E_1\cup E_2\in \cf(SF)$.
	We need to show that $E$ defends itself in $SF$, i.e.\ for all $a\in E$, if $(T,a)\in R_1\cup R_2 \cup R_3$, then $(T',t)\in R_1\cup R_2 \cup R_3$ for $T'\subseteq E$ and $t\in T$. 
	Consider an argument $a\in E_1$. $E_1$ defends $a$ from each attack in $R_1$ towards $a$ since $E_1\in \adm(SF_1)$. Therefore, $E_1\in \adm(SF)$. 
	Consider now an argument $a\in E_2$ and an arbitrary attack $(T,a)\in R_2\cup R_3$ towards $a$.
	If $T\cap (E_1)^+_{R_1\cup R_3}\neq \emptyset$ we know $a$ is defended (in $SF$) by $E_1$ against $(T,a)$ and we are done, hence, we proceed with the assumption $T\cap (E_1)^+_{R_1\cup R_3}= \emptyset$.
	This means that either $(T\cap A_2',a)\in R_2^\star$ (via the reduct) or $((T\cap A_2')\cup \{a\},a)\in R_2^\star$ (via the modification).
	Since $a\in E_2$ and $E_2\in \adm(SF_2^\star)$ we know there is a counter-attack in $R_2^\star$ which defends $a$.
	Even in case $((T\cap A_2')\cup \{a\},a)\in R_2^\star$ this counter-attack cannot be against $a$ since this violates conflict-freeness of $E_2$ in $SF_2^\star$.
	Hence, there is some $(S,t)\in R_2^\star$ s.t.\ $S\subseteq E_2$ and $t\in T\cap A_2'$ with $t\notin S$.
	Hence, either (a)~$(S,t)\in R_2$ in which case $a$ is defended by $E$ in $SF$ or (b)~there is some $(S',t)\in R_3$ with $S'\supset S$ s.t.\ $S'\cap A_1\subseteq E_1$, in which case $a$ is defended (in $SF$) by $E$ via the attack $(S',t)$ since then $S'\subseteq E_1\cup E_2$.
	In any case we showed that $a$ is defended in $SF$ by $E$, i.e., $E\in \adm(SF)$.
	
	(2.) By Proposition~\ref{prop: cf splitting} we get $E_1=E\cap A_1\in \cf(SF_1)$ and $E_2=E\cap A_2\in \cf(SF_2')$. Since $E$ is defends itself in $SF$ we get $E\cap A_1\in \adm(SF_1)$ because $(SF_1,SF_2,R_3)$ is a splitting of $SF$, i.e.\ no argument in $E\cap A_1$ is attacked by a subset of $A_2$ or defended by $E\cap A_2$.
	That is, in $SF_1$ every attack towards an argument in $E\cap A_1$ is countered by $E\cap A_1$.
	It remains to show that $E\cap A_2\in \adm(SF^\star_2)$.
	Consider now an argument $a\in E_2$ and an arbitrary attack $(T,a)\in R_2^\star$ against $a$.
	This attack $(T,a)$ either corresponds to an attack $(T,a)\in R_2$ or $(T',a)\in R_3$ with $T'\supset T\setminus \{a\}$ (which accounts for both the case of addition in the reduct and the modification).
	In both cases we have that $T \cap (E_1)^+_{R_1\cup R_3}=\emptyset$ (or $T' \cap (E_1)^+_{R_1\cup R_3}=\emptyset$, resp.) as otherwise $(T,a)$ would not be in $R_2^\star$.
	However, since $a$ is defended by $E$ in $SF$, there is a counter-attack $(S,t)\in R_2\cup R_3$ s.t.\ $S\subseteq E$ and $t\in (T\setminus \{a\})$ (or $t\in (T'\setminus \{a\})$, resp.).
	If $(S,t)\in R_2$ then from $S\subseteq E$ and $E_2\subseteq A_2'$ (which we get from $E_2\in \cf(SF_2')$ via Proposition~\ref{prop: cf splitting}) and the fact that then $(S,t)\in R_2'$ since $S\cup \{t\}\subseteq A_2'$ we get that $E_2$ defends $a$ via $(S,t)$ against $(T,a)$ in $SF_2^\star$.
	If $(S,t)\in R_3$ since $S\subseteq E$ we have $S\cap A_1\subseteq E_1$, and hence we get an attack $(S\cap A_2',t)\in R_2'$ which again defends $a$ against $(T,a)$ in $SF^\star_2$.
	Hence, in every case $a$ is defended in $SF_2^\star$, i.e., $E_2\in \adm(SF_2^\star)$. 
	
	\textbf{(complete)}. (1.) Given statement 1 of admissible semantics proven above, we only need to show that $a\in E_1\cup E_2$ for all $a\in A$ defended by $E_1\cup E_2$ in $SF$. Assume towards contradiction that there is an $a\in (A_1\cup A_2)\setminus (E_1\cup E_2)$ defended by $E_1\cup E_2$. From $E_1\in \com(SF_1)$, we know that $a\notin A_1\setminus E_1$. Hence, $a\in A_2\setminus E_2$ and, because $(SF_1,SF_2,R_3)$ is a splitting and $E_1\cup E_2\in \cf(SF)$, we obtain $a\in A'_2\setminus E_2$. Indeed, if $a\in (E_1)^+_{R_3}$, then $E_1\cup E_2$ defends $a$ from an attack of $E_1$, which is against conflict-freeness of $E_1\cup E_2$. Consider now possible attacks scenarios towards $a$: if $a$ is not attacked, then it would be in every complete extension, hence $E_2\notin \com(SF^\star_2)$. If $a$ is attacked by some set of arguments $T$, then $(T,a)\in R_2$ or  $(T,a)\in R_3$. We show that both cases lead to a contradiction. Consider now $(T,a)\in R_2$. Again, in this case we distinguish three attack scenarios:
	\begin{enumerate}
		\item $(T,a)\in R_2$ with $T\cap (E_1)^+_{R_1\cup R_3}\neq \emptyset$. Since $a$ is defended by $E_1\cup E_2$, such attacks are countered by $E_1$ via a link. Given that $(T,a)\notin R^\star_2$ (eliminated by the reduct), $a$ is vacuously defended by $E_2$ in $SF^\star_2$. Thus, $E_2\notin \com(SF^\star_2)$.
		\item $(T,a)\in R_2\setminus \{(a,a)\}$ with $T\cap (E_1)^+_{R_1\cup R_3}= \emptyset$.
		This means there is a counter-attack $(S,t)\in R_2\cup R_3$ with $S\subseteq E_1\cup E_2$. Given that the reduct and modification do not eliminate such attacks, $a$ is defended by $E_2$ in $SF^\star_2$. Thus, $E_2\notin \com(SF^\star_2)$.
		\item $(T,a)=(a,a)\in R_2$ with $T\cap (E_1)^+_{R_1\cup R_3}= \emptyset$.
		From SETAF Fundamental Lemma~\cite{NielsenP06} and the assumption that $a$ is defended by $E_1\cup E_2$, we get that $E_1\cup E_2\cup \{a\}$ is an admissible (and thus conflict-free) extension in $SF$. Since $a$ is a self-attacking argument, we derive a contradiction.  
	\end{enumerate}
	All of the above derive a contradiction. Therefore, we now consider the case where $(T,a)\in R_3$. Since $E_1\cup E_2$ defends $a$, we know that for some $t\in T\cap A_1$, $(E_1,t)\in R_1$ or for some $t\in T\cap A_2$ and $S\subseteq E_1\cup E_2$, $(S,t)\in R_3$. In the first case, the reduct of $SF_2$ does not contain $(T\cap A'_2,a)$ because $T\cap (E_1)^+_{R_1}\neq \emptyset$. Hence, $a$ is unattacked in $R'_2$. For the same reason, and given that $(E_1)^+_{R_1\cup R_3}\supseteq (E_1)^+_{R_1}$, we also know that  $T\cap (E_1)^+_{R_1\cup R_3}\neq \emptyset$. Thus, $(T,a)\notin U^{E_1}_{R_3}$ and $a$ is unattacked in $R^\star_2$.
	Again, $a$ is vacuously defended by $E_2$ in $SF^\star_2$ and $E_2\notin \com(SF^\star_2)$. Contradiction. Consider now the case where $(S,t)\in R_3$ for some $S\subseteq E_1\cup E_2$ and $t\in T\cap A_2$. if $S\subseteq E_1$, then $T \cap (E_1)^+_{R_3}\neq \emptyset$ and  $(T\cap A_2',a)\notin R_2'$. For the same reason as before, $(T\cap A_2',a)\notin R^\star_2$. Hence, $a$ is vacuously defended by $E_2$ in $SF^\star_2$ and $E_2\notin \com(SF^\star_2)$.
	If $S\not \subseteq E_1$, then $(S\cap A_2',t)\in R_2'$ because $S\cap A_2'\neq \emptyset$, $t\in A_2'$, $S\cap A_1\subseteq E_1$ and $S \cap (E_1)^+_{R_1\cup R_3}= \emptyset$. We derive directly that $(S\cap A_2',t)\in R^\star_2$ since the modification does not delete attacks. Hence, $E_2$ defends $a$ in $SF^\star_2$. Finally, this contradicts our hypothesis that $E_2\in \com(SF^\star_2)$, concluding the proof. 
	
	(2.) Admissibility of $E_1=E\cap A_1$ and $E_2=E\cap A_2$ has been shown above. We need to show that for all $a$ defended by $E_1$ in $SF_1$ and by $E_2$ in $SF^\star_2$, we have $a\in E_1$ and $a\in E_2$ respectively.
	Let us consider $E_1$ first. Towards contradiction, assume there is an $a\in A_1\setminus E_1$ such that $a$ is defended by $E_1$.
	This implies that $a$ is such that $a\in A_1\cup A_2\setminus E$ and $a$ is defended by $E$, in contradiction with the completeness of $E$ in $SF$. 
	Consider now $E_2$. As before, we need to show that there is no $a\in A'_2\setminus E_2$ such that $E_2$ defends $a$ in $SF^\star_2$.
	Again, assume that there is such an $a\in A'_2\setminus E_2$. 
	As before, we consider possible attack scenarios towards $a$. If $a$ does not receive any attack in $SF^\star_2$, then for every attack $(T,a)\in R_2\cup R_3$ it holds that $T\cap (E_1)^+_{R_1\cup R_3}\neq \emptyset$ ($(T,a)$ was eliminated by the reduct). Hence, $E_1$ defends $a$ in $SF$, in contradiction with $E\in \com(SF)$. Assume now that $a$ receives an attack in $SF^\star_2$, then for all $(T,a)\in R^\star_2$ we have $(S,t)\in R^\star_2$ for some $S\subseteq E_2$ and $t\in T$, as we assume that $E_2$ defends $a$ in $R_2^\star$.
	Note that $t\notin S$, as otherwise $(S,t)$ would not counter the attack $(T,a)$ which we assumed.
	But we know $(S,t)\in R^\star_2$ corresponds to some attack in $R_2\cup R_3$.
	If $(S,t)\in R_2$ we have that $E$ defends $a$ in $SF$ (via $(S,t)$), a contradiction to $E\in \com(SF)$.
	If on the other hand there is some $(S',t)\in R_3$ with $S'\supset S$
	we know that also $S'\cap A_1\subseteq E_1$, as otherwise we would have $t\in S$ (if $(S,t)$ was introduced in via the modification), which we already ruled out.
	Finally, the attack $(T,a)$ is either in $R_2$ or corresponds to some attack $(T',a)\in R_3$ with $T'\subseteq T\setminus \{a\}$, in both cases
	$SF$ defends $a$ via $(S,t)$ or $(S',t)$ against the attack.
	Hence, we derive a contradiction to $E\in \com(SF)$.
	As every possible way $a$ could be defended by $E_2$ in $SF_2^\star$ but not in $E_2$ leads to a contradiction, this cannot be the case, hence, $E_2\in \com(SF_2^\star)$.
	
	\textbf{(preferred)}. (1) From statement 1 for admissible semantics above, we derive that $E_1\cup E_2\in \adm(SF)$.
	Moreover, from hypothesis we have that there is no $S_1\in \adm(SF_1)$ such that $S_1\supset E_1$ and no $S_2\in \adm(SF^\star_2)$ such that $S_2\supset E_2$.
	We need to prove that there is no $S\in \adm(SF)$ such that $S\supset E=E_1\cup E_2$.
	Towards contradiction, suppose there is such an $S$. Then $S_1=S\cap A_1\supset E_1$ or  $S_2=S\cap A_2\supset E_2$.
	Consider the first case. Since $E_1\in \prf(SF_1)$ by hypothesis, it must hold that $S_1\notin \adm(SF_1)$. However, this is in contradiction with statement 2 shown above for the admissible semantics (i.e.\ if $S\in \adm(SF)$ and $(SF_1,SF_2,R_3)$ is a splitting for $SF$, then $S\cap A_1 \in \adm(SF_1)$).
	Consider now the case where $S_2\supset E_2$. We can assume $S\cap A_1=E_1$, as otherwise we derive a contradiction as above.
	Similarly to the case before, it must hold that $S_2\notin \adm(SF^\star_2)$. Again, since we assumed $S\in \adm(SF)$, then it must hold that $S_2\in \adm(SF^\star_2)$ (statement 2 of admissible semantics).
	Both directions lead to a contradiction, hence there is no $S\in \adm(SF)$ such that $S\supset E_1\cup E_2$. Thus we conclude that $E_1\cup E_2\in \prf(SF)$. 
	
	(2.) By hypothesis, we have $E\in \prf(SF)$ and hence, there is no $S\in \adm(SF)$ such that $S\supset E$. Moreover, by statement 2 of admissible semantics, we get $E\cap A_1\in \adm(SF_1)$ and $E\cap A_2\in \adm(SF^\star_2)$. Consider now $E\cap A_1$.
	By directionality of preferred semantics~\cite{DvorakKUW24} we obtain $E\cap A_1\in \pref(SF_1)$.
	For $E\cap A_2$, assume now there is an $S_2\in \adm(SF^\star_2)$ such that $S_2\supset E\cap A_2$. For statement 1 of admissible semantics, $(E \cap A_1) \cup S_2$ is admissible in $SF$ which contradicts the maximality of $E$. This conclude the proof. 
	
	\textbf{(grounded)}. (1) Since the grounded extension is also complete, we only need to show that $E_1\cup E_2$ is the minimal complete extension in $SF$. Suppose the contrary is true: there is a set $S\in \com(SF)$ such that $S\subset E_1\cup E_2$. Hence, $S_1=S\cap A_1\subset E_1$ or $S_2=S\cap A_2\subset E_2$. Consider the first case. From the statement 2 of complete semantics above, we derive that $S_1\in \com(SF_1)$. But this contradicts our hypothesis that $E_1\in \grd(SF_1)$, since $S_1$ would be the $\subseteq$-minimal complete extension of $SF_1$. Hence, $S_1=E_1$.
	For the second case, we can assume $S\cap A_1=E_1$ (as otherwise we derive a contradiction via the first case) and we deduce again from statement 2 of complete semantics that $S_2\in \com(SF_2^\star)$.
	However, by hypothesis we have that $E_2= \grd(SF^\star_2)$, which is incompatible with that fact that $S_2\subset E_2$. Both cases lead to a contradiction, hence we derive that $E_1\cup E_2= \grd(SF)$. 
	
	(2) By hypothesis, we have $E\in \grd(SF)$ and hence, there is no $S\in \adm(SF)$ such that $S\subset E$. Moreover, by statement 2 of complete semantics, we get $E\cap A_1\in \com(SF_1)$ and $E\cap A_2\in \com(SF^\star_2)$. Consider now $E\cap A_1$.
	By directionality of grounded semantics~\cite{DvorakKUW24} we obtain $E\cap A_1\in \grd(SF_1)$.
	For $E\cap A_2$, assume now there is an $S_2\in \com(SF^\star_2)$ such that $S_2\subset E\cap A_2$. For statement 1 of complete semantics, $(E \cap A_1) \cup S_2$ is complete in $SF$ which contradicts the minimality of $E$. This conclude the proof. 
\end{proof}

\ABAattacksConserv*
\begin{proof}
	The statement follows from Definition \ref{def: spl aba}, by induction on the depth $k$ of the tree rooted in $\contrary{a}$ with leaves $T\cup \top$. 
	By definition, there is a finite tree rooted in $\contrary{a}$ and leaves $T$ or $\top$.
	For $k=1$, we have one rule $r:\contrary{a}\gets T$ where $T=\{b_1, \dots, b_m\}$. Since $a\in \asm_1$ and $S=atom(S)$, we know $\contrary{a}\in S$. Hence, $T\subseteq S$ and $r\in \rules_1$ by definition of splitting. 
	Assume the statement holds for depth $k$. We show that at depth $k'=k+1$, $R\subseteq \rules_1$ and $T\subseteq \asm_1$.
	At depth $k$ there is at some rules $\{r_1,\dots,r_k\}\subset R$ with $r_i: head(r_i)\gets B_i$ ($1\leq i\leq k$). Again, since $atom(head(r_i))\subseteq S$ and $S=atom(S)$, we get $B_i\subseteq S$ and $r_i\in \rules_1$ for all $i$. Thus $\bigcup_{i=1}^k B_i\subseteq T$, concluding $T\subseteq \asm_1$. 
\end{proof}

\ABAsplittingCF*
\begin{proof}
	For notational convenience, let $E = E_1 \cup E_2$ and let $D'_2 = (\lit_2,\rules'_2,\asm_2,\contraryempty^2)$ be the reduct of $D_2$ w.r.t. $E_1 = E \cap \asm_1$. %
	Moreover, we adapt $S_R^+=\{a \mid S \vdash^R a\}$ and $S_R^\oplus=S \cup S_R^+$ from SETAFs.
	
	(1.) To prove the statement we need to show that there is no $a\in E_1 \cup E_2$ and and $R\in \rules$ such that $E_1\cup E_2\vdash^R \contrary{a}$. 
	Towards contradiction, assume there is indeed such an $a$. 
	Thus either (i) $a\in E_1$ or (ii) $a\in E_2$. 
	Assume (i) is true, that is $\exists a\in E_1$ such that $E_1\cup E_2\vdash^R \contrary{a}$ and $R\in \rules$. From Proposition~\ref{pro:att1-conserv}, we know that $E_2=\emptyset$ and $R\subseteq \rules_1$. Thus, $E_1\vdash^R \contrary{a}$, in contradiction with $E_1 \in \cf(D_1)$.
	Assume now that (ii) is true, i.e.\ $\exists a\in E_2$ and $R\in \rules$ such that $E_1\cup E_2\vdash^R \contrary{a}$. Hence, there is a tree-derivation $\tau$ from $E_1\cup E_2 \cup \{\top\}$ rooted in $\contrary{a}$ and a non-empty set of rules $R_2=R \cap \rules_2$. For each rule $r\in R_2$, there are three possible outcomes when computing $D^{\star}_2$: (a) $r$ does not get removed when computing the reduct; (b) $r$ gets removed and later added in the modification; (c) $r$ gets removed for good. 
	Assume (a) is the case. If a rule $r$ is not removed when computing the reduct, it is modified into a rule $r'\in \rules'_2$ such that $body(r')=body(r)\setminus Th_{D_1}(E_1)$ and $head(r')=head(r)$. Thus, $body(r')$ consists of elements of $E_2$ or atoms derivable from it. Therefore, $E_2\vdash^{\rules^{E_1}_2} \contrary{a}$ and consequently $E_2\vdash^{\rules^{\star}_2} \contrary{a}$ (more rules). Finally, we get $E\notin \cf(D^{\star}_2)$, contradicting our hypothesis.   
	Assume now (b) is the case. By definition of derivation, this means that $E_2$ derives $\contrary{a}$ in $D^{\star}_2$ only if $x_u\in E_2$. However, this contradicts conflict-freeness of $E_2$ in the modification.
	Finally, consider case (c). 
	Since $r$ gets removed, but not added in the modification, we infer that $body(r)\cap \mathsf{IS}_{D_1}(E_1)\neq \emptyset$. 
	Hence, either $\contrary{E_1}\cap body(r)\neq \emptyset$ or $Th_{D_1}((E_1)^+_{\rules_1})\cap body(r)\neq \emptyset$. 
	However, since $E_1\in \cf(D_1)$, this means that either $r$ is a dummy rule or that $\exists b \in body(r)\cap \asm_1\not\subseteq E_1$. Thus, in both cases $E_1\cup E_2\not\vdash^R \contrary{a}$, contradicting our assumption. 
	
	(2.) Suppose now that $E\in \cf(D)$. From this we derive that $E\cap \asm_1 \in \cf(D_1)$ (subset of a conflict-free set). We now show that $E\cap \asm_2\in \cf(D'_2)$. Towards contradiction, assume $E\cap \asm_2\notin \cf(D'_2)$. There is an $a\in E\cap \asm_2$ such that $E\cap \asm_2\vdash^{\rules'_2}\contrary{a}$. By definition of reduct, we know that each $r'\in \rules'_2$ is obtained from a corresponding rule $r\in \rules_2$ such that $body(r)\subseteq body(r')\cup Th_{D_1}(E\cap \asm_1)$. Therefore, $(E\cap \asm_1)\cup (E\cap \asm_2)\vdash^{\rules_1\cup \rules_2} \contrary{a}$. By definition of splitting, we know that $\rules=\rules_1\cup \rules_2$ and $E=(E\cap \asm_1)\cup (E\cap \asm_2)$, deriving $E\vdash^{\rules} \contrary{a}$, and finally $E\notin \cf(D)$. Contradiction.
\end{proof}

We borrow from SETAFs the notions of \emph{projection}, \emph{influence} and the related principle of \emph{Directionality}~\cite{DvorakKUW24}, and apply them to ABA. This will later be used to prove the splitting theorem for ABAFs.

\begin{definition}[projection]
 	Let $D=(\lit,\rules,\asm,\contraryempty)$ be an ABAF with deductive system $(\lit,\rules)$ and $S \subseteq \lit$ a set of sentences such that $S=atom(S)$. We define the projection $D_{\downarrow S}$ of $D$ on $S$ as the ABAF induced by the deductive system $(S,\rules_{\downarrow S})$ with $\rules_{\downarrow S}=\{r\in \rules \mid head(r)\in S, body(r)\subseteq S \}$. Further, $D_{\downarrow S}$ has for assumption set $S\cap \asm$ and contrary function $\contraryempty^S$ the restriction of $\contraryempty$ on $S$ (i.e. $\contrary{a}^S =\contrary{a}$ for all $a\in S$).
\end{definition}

\begin{definition}[influence]
	Let $D=(\lit,\rules,\asm,\contraryempty)$ be an ABAF. An assumption $a\in \asm$ influences $b\in \asm$ if there is a derivation $S\vdash \contrary{b}$ where $a\in S$. 
	Moreover, a set $U \subseteq \asm$ is uninfluenced in $D$ if  no $a\in \asm\setminus U$ influences  any $b\in U$. We  denote the set of uninfluenced sets as $\mathsf{US}(D)$. 
\end{definition}

\begin{principle}[Directionality]
A  semantics $\sigma$ satisfies directionality if  for all ABAFs $D$ and every $U \in \mathsf{US}(D)$ it holds that $$\sigma(D_{\downarrow U})=\{E\cap U \mid E \in \sigma(D)\}.$$
\end{principle}

\begin{observation}\label{pro:ABA Directionality}
	Due to existing results of semantic equivalence between ABAFs and SETAFs, we can trivially infer that for any flat ABAF $D$ and semantics $\sigma\in \{\cf,\adm,\com,\prf,\grd\}$, $\sigma$ satisfies Directionality. Specifically, the notions of projection corresponds to the SETAF counterpart. In fact, for a set $S\subseteq \lit$, we have $SF_{D_{\downarrow S}}=(SF_D)_{\downarrow A}$ where $A$ is the set of arguments corresponding to $\asm\cap S$. Likewise, the notion of influence in ABA is captured by the homonymous SETAF notion since $a\in \asm$ influences $b\in \asm$ when the corresponding argument $a\in A_D$ participates to an attack towards $b\in A_D$.   
\end{observation}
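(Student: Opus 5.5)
The statement is Observation~\ref{pro:ABA Directionality}: for flat ABAFs, every $\sigma\in\{\cf,\adm,\com,\prf,\grd\}$ satisfies Directionality. The plan is to transfer the known SETAF result~\cite{DvorakKUW24} through the instantiation $D\mapsto SF_D$ of~\cite{KoenigRU22}. This instantiation preserves all five semantics, i.e.\ $\sigma(D)=\sigma(SF_D)$ for flat $D$. We therefore need two bridging facts:
\begin{enumerate}
\item[(a)] $U\in\mathsf{US}(D)$ iff $U$ is uninfluenced in $SF_D$;
\item[(b)] projecting commutes with instantiating, i.e.\ $SF_{D_{\downarrow U}}=(SF_D)_{\downarrow U}$.
\end{enumerate}
Here $D_{\downarrow U}$ is read as the projection on $atom(U)$ together with the non-assumption sentences derivable into it. Given (a) and (b), we obtain
$$\sigma(D_{\downarrow U})=\sigma(SF_{D_{\downarrow U}})=\sigma((SF_D)_{\downarrow U})=\{E\cap U\mid E\in\sigma(SF_D)\}=\{E\cap U\mid E\in\sigma(D)\}.$$

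Fact (a) is essentially definitional. Assumption $a$ influences $b$ in $D$ iff there is some $S\ni a$ with $S\vdash\contrary{b}$. This holds iff $(S,b)\in R_D$ with $a\in S$, which is exactly influence in the SETAF via its primal graph. Since influence is defined on pairs, uninfluenced sets then coincide.

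Fact (b) is the step I expect to be the main obstacle. The ABA projection keeps only rules with head and body inside the chosen sentence set. In contrast, the SETAF projection keeps every attack $(T,h)$ with $T\cup\{h\}\subseteq U$, and such an attack may stem from a derivation in $D$ that passes through non-assumption sentences outside the chosen set. The plan is to use uninfluencedness to show that no such derivation is lost.

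Take a derivation $T\vdash\contrary{h}$ in $D$ with $h\in U$. Every assumption leaf of the tree influences $h$, so by (a) all leaves lie in $U$. Now let $S$ be $atom(U)$ together with every sentence appearing in some derivation tree of a contrary $\contrary{h}$ with $h\in U$. This $S$ is closed under the rule bodies used in those derivations, and the leaves stay inside $U$. Every derivation tree of $\contrary{h}$ therefore survives in $D_{\downarrow S}$. Conversely, any derivation in $D_{\downarrow S}$ is trivially a derivation in $D$. Hence the attacks of the two SETAFs coincide.

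One caveat is that $S$ may contain additional assumptions. Any such assumption that appears in a derivation of some $\contrary{h}$ influences $h$ and is thus already in $U$, so $S\cap\asm=U$ and the argument sets agree as well. Flatness guarantees that $D_{\downarrow S}$ is again flat, so the equivalence of~\cite{KoenigRU22} applies on both sides.
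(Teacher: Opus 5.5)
Your route is the paper's own. You transfer Directionality from SETAFs through $D\mapsto SF_D$, using that influence in $D$ is tail-membership in $SF_D$ and that projection commutes with instantiation. You are right that (b) carries the weight. In fact the paper's blanket claim $SF_{D_{\downarrow S}}=(SF_D)_{\downarrow A}$ for arbitrary $S$ is false. Take $\asm=\{a,b\}$, rules $p\gets a$ and $\contrary{b}\gets p$, and the atom-closed set $S=\{a,\contrary{a},b,\contrary{b}\}$: the projection has no rules, so the attack $(\{a\},b)$ disappears. The claim holds only when $S$ is also closed under bodies of rules with head in $S$. That covers the single case the paper later uses: a splitting set, where $D_{\downarrow S}=D_1$ and Proposition~\ref{pro:att1-conserv} applies. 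Your idea is the right way to get this closure for an arbitrary $U\in\mathsf{US}(D)$: collect all sentences occurring in derivation trees of $\contrary{h}$ with $h\in U$, and use uninfluencedness to keep the leaves inside $U$.

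However, your step $S\cap\asm=U$ fails as written. You only exclude extra assumptions that occur in derivation trees. But $atom(U)$ itself contains every assumption $c$ with $\contrary{c}\in U$, and contraries may be assumptions, since $\contraryempty:\asm\to\lit$. Take $\asm=\{a,b\}$, $\contrary{a}=b$, $\contrary{b}=q\notin\asm$, and $\rules=\emptyset$. Then $U=\{b\}$ is uninfluenced, but $atom(U)=\{a,b,q\}$. So $SF_{D_{\downarrow S}}$ has the extra argument $a$ and the attack $(\{b\},a)$, and $\{a\}\in\cf(D_{\downarrow S})$, while $\{E\cap U\mid E\in\cf(D)\}=\{\emptyset,\{b\}\}$.

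The fix is to start from $U\cup\contrary{U}$ instead of $atom(U)$. An assumption $\contrary{h}$ with $h\in U$ influences $h$ via the one-node derivation $\{\contrary{h}\}\vdash\contrary{h}$, so it already lies in $U$, and then $S\cap\asm=U$ does hold.

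You should also state explicitly that your $S$ need not be atom-closed, whereas the paper defines $D_{\downarrow S}$ only for $S=atom(S)$. You are really using a relaxed projection, with assumptions $S\cap\asm$ and the restricted contrary map. This relaxation is forced, not cosmetic. Take $\asm=\{a,b,c\}$, rules $\contrary{c}\gets a$ and $\contrary{b}\gets\contrary{c}$, and the uninfluenced set $U=\{a,b\}$. Every atom-closed $S$ with $S\cap\asm=U$ must omit $\contrary{c}$. Then $\{a,b\}$ is conflict-free in the projection, although $\{a\}$ attacks $b$ in $D$.
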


\ABAsplitting*
\begin{proof}
	In what follows, we prove 1. and 2. for each semantics.  

	\textbf{(stable)}. 
	(1.) First notice that since $E_1\in \stb(D_1)$, we have $U^{E_1}_{D_1}=\emptyset$, and consequently $\asm'_2=\asm^{\star}_2$.
	(1.) From Proposition \ref{pro:cf-ABA} together with the hypotheses that $E_1\in \stb(D_1)$ and $E_2\in \stb(D^{\star}_2)$, we know that $E_1\cup E_2\in \cf(D)$. Thus, for any $a\in \asm\setminus E$, we show that $a\in E_R^+$, i.e. $E\vdash^R\contrary{a}$ for some $R\subseteq \rules$. We proceed by cases. Let $a\in \asm_1$. From hypothesis we know that $E_1\vdash^{R_1}\contrary{a}$ for some $R_1\subseteq \rules_1$ which immediately implies $a\in E^+_{R_1}$. Let $a\in \asm_2$. From hypothesis, we know that $E_2\vdash^{R_2}\contrary{a}$ for some $R_2\subseteq\rules'_2$. Thus, for each rule $r'\in \rules'_2$ there is a rule $r\in \rules_2$ such that $body(r)\subseteq body(r)\cup Th_{D_1}(E_1)$. Hence, it follows directly that $E_1\cup E_2=E\vdash^R\contrary{a}$ for some $R\subseteq \rules_1\cup \rules_2=\rules$. 

	(2.) Assume $E \in \stb(D)$. From this we know that $E^\oplus_{\rules} = \asm = \asm_1 \cup \asm_2$. We first prove that $E_1 = E\cap \asm_1 \in \stb(D_1)$. From Proposition \ref{pro:cf-ABA} we know $E\cap \asm_1 \in \cf(D_1)$. Moreover, from Proposition \ref{pro:att1-conserv}, we know that any set of assumptions which is not entirely contained in $\asm_1$ attacks $a\in \asm_1$ via rules in $\rules_1$, therefore we get $E\cap \asm_1\vdash^{R_1}\contrary{a}$ for all $a\in \asm_1\setminus E$ for some $R_1\subseteq\rules_1$. Hence, $E_1\in \stb(D_1)$. 
	We know turn to prove $E_2 = E \cap \asm_2 \in \stb(D'_2)$. We know conflict-freeness holds from Proposition \ref{pro:cf-ABA}. Hence, we only need to show that for every $a\in \asm'_2\setminus E_2$, $E_2\vdash^{R'_2}\contrary{a}$ for some $R'_2 \subseteq \rules'_2$. Since $E\vdash^{R}\contrary{a}$ in $D$, we have two possibilities: (a) $E_1=\emptyset$ or (b) $E_1\neq \emptyset$. If (a) holds, we get $R\subseteq \rules_2$ and $E=E_2\vdash^R\contrary{a}$ where $E_2\subseteq \asm'_2$ and $a\in \asm'_2$. Thus, $E_2\vdash^R\contrary{a}$ holds for some $R\subseteq \rules'_2$. If (b) holds, $E_1\cup E_2\vdash^R \contrary{a}$ in $D$. Therefore, each rule $r\in R\cap \rules_2$ has a corresponding rule $r'\in \rules'_2$ such that $body(r')=body(r)\setminus Th_{D_1}(E_1)$. Since $E_1\cup E_2\in \cf(D)$ by hypothesis, we know that $Th_{D_1}(E_1)\cap \contrary{E_2}=\emptyset$. Hence, $(E\setminus E_1)\vdash^{R'_2}\contrary{a}$ where $R'_2\subseteq \rules'_2$. In both cases we have $E_2\cup (E_2)^+_{\rules'_2}=\asm'_2$, concluding $E_2\in \stb(D'_2)$.

	\textbf{(complete)}. (1.) Given statement 1 of admissible semantics proven above, we only need to show that $a\in E_1\cup E_2$ for all $a\in \asm$ defended by $E_1\cup E_2$ in $D$. Assume towards contradiction that there is an $a\in (\asm_1\cup \asm_2)\setminus (E_1\cup E_2)$ defended by $E_1\cup E_2$. From $E_1\in \com(D_1)$, we know that $a\notin \asm_1\setminus E_1$. Hence, $a\in \asm_2\setminus E_2$. Indeed, if $E_1\vdash \contrary{a}$, then $E_1\cup E_2$ defends $a$ from an attack of $E_1$, which is against conflict-freeness of $E_1\cup E_2$. Consider now possible attacks scenarios towards $a$: if $a$ is not attacked, then it would be in every complete extension, hence $E_2\notin \com(D^\star_2)$. Consider now the case where $a$ is attacked by some set of assumptions $T$, i.e.\ $T\vdash^R \contrary{a}$. 
	If $T\subseteq \asm_2$, by definition of splitting set we have $R\subseteq \rules_2$. 
	In this case we distinguish two attack scenarios: 
	\begin{enumerate}
		\item 
		$T\neq \{a\}$. Since $E_1\cup E_2$ defends $a$, there is an $S\subseteq E_1\cup E_2$ such that $S\vdash^{R'}\contrary{t}$ for some $t\in T$ and $R'\subseteq \rules$. 
		By effect of the reduct, no rule $r\in R'\cap \rules_2$ gets removed. Otherwise, there is a rule $r\in R'\cap \rules_2$ and a sentence $p\in body(r)\cap \lit_1$ such that $p\notin Th_{D_1}(E_1)$. This entails $S\not \vdash^{R'} \contrary{t}$. Thus, for each such $r$ we have $r': head(r) \gets body(r)\setminus Th_{D_1}(E_1)$ in $\rules^{E_1}_2$. Therefore, $S\setminus E_1 \vdash \contrary{t}$ in $D^{E_1}_2$. Moreover, since no rule is removed, no further attacks are added towards $a$ by the modification. We conclude that $E_2$ defends $a$ in $D^{\star}_2$, i.e. $E_2\notin \com(D^{\star}_2)$. Contradiction. 
		\item $T=\{a\}$. From the Fundamental Lemma~\cite{CyrasFST2018} (Thm. 2.13) and the hypothesis that $a$ is defended by $E_1\cup E_2$, we get that $E_1\cup E_2\cup \{a\}$ is an admissible (and thus conflict-free) extension in $D$. Since $a$ is a self-attacking argument, we derive a contradiction.
	\end{enumerate}
	If $T\not \subseteq \asm_2$, since $E_1\cup E_2$ defends $a$, we know that for some $t\in T\cap \asm_1$, $E_1\vdash \contrary{t}$ or for some $t\in T\cap \asm_2$ and $S\subseteq E_1\cup E_2$, $S\vdash^{R'}\contrary{t}$. In the first case, every rule $r\in \rules_2$ whose body contains $t$ (or a sentence derivable from it) gets removed by the reduct. This is guaranteed because $t\notin Th_{D_1}(E_1)$ given that $E_1$ is conflict-free. 
	Therefore, we conclude that $\contrary{a}$ is not derivable in $D^{E_1}_2$ ($a$ is unattacked). Further, each rule $r$ removed under the reduct is not reintroduced via the modification, since $t\in (E_1)^+_{\rules_1}$ from hypothesis, and therefore $body(r)\cap \mathsf{IS}_{D_1}(E_1)\neq \emptyset$. As a result,  $\contrary{a}$ is not derivable in $D^{\star}_2$, meaning that $a$ is vacuously defended by $E_2$, making $E_2$ not complete. Contradiction. Consider now the second case. It can be easily derived that $E_2$ defends $a$ in $D^{\star}_2$ through similar considerations case (1) above. 
	
	(2.) $E_1=E\cap A_1$ and $E_2=E\cap A_2$ has been shown to be admissible sets above. We need to show that for all $a$ defended by $E_1$ in $D_1$ and by $E_2$ in $D^\star_2$, we have $a\in E_1$ and $a\in E_2$ respectively.
	Let us consider $E_1$ first. Towards contradiction, assume there is an $a\in \asm_1\setminus E_1$ such that $a$ is defended by $E_1$.  
	This directly implies that $a\in \asm_1\cup \asm_2\setminus E$ and $a$ is defended by $E$, in contradiction with the completeness of $E$ in $D$. 
	Consider now $E_2$. As before, we need to show that there is no $a\in \asm_2\setminus E_2$ such that $E_2$ defends $a$ in $D^\star_2$.
	Again, assume that there is such an $a\in \asm_2\setminus E_2$. 
	As before, we consider possible attack scenarios towards $a$. If $a$ does not receive any attack in $D^\star_2$, then for every $T\subseteq \asm$ and $R\subseteq \rules_2$ such that $T\vdash^R \contrary{a}$ in $D$, then for some $r\in R$ it holds that $body(r) \cap \mathsf{IS}_{D_1}(E_1)\neq \emptyset$ ($r$ was eliminated by the reduct and not reintroduced). Hence, $E_1$ defends $a$ in $D$, in contradiction with $E\in \com(D)$. 
	Assume now that $a$ receives an attack in $D^\star_2$, then for all $T\subseteq \asm_2$ and $R\in \rules^\star_2$ such that $T\vdash^R \contrary{a}$, there is an $S\subseteq E_2$ and $R'\in \rules^\star_2$ such that $S\vdash^{R'} \contrary{t}$ with $t\in T$ ($E_2$ defends $a$ in $D_2^\star$). 
	Note that $\{t,x_u\}\not \subseteq S$, as $E_2\in \cf(D_2^\star)$. 
	But we know $S\vdash^{R'} \contrary{t}$ corresponds to some derivation $S'\vdash^{R''} \contrary{t}$ with $R''\subseteq \rules$ and $S'\supseteq S$. If $S'=S$, we know that $E$ defends $a$ in $D$, contradicting the hypothesis that $E\in \com(D)$. 
	If on the other hand $S'\supset S$, we know that also $S'\cap \asm_1\subseteq E_1$, as otherwise some rule $r\in R''$ would be removed by the reduct. 
	Finally, for the attack $T\vdash^R \contrary{a}$, we have two distinct possibilities: either $R\subseteq \rules_2$ (the derivation is unaffected by reduct and modification) or it corresponds to some derivation $T'\vdash \contrary{a}$ with $T'\cap \asm_1\neq \emptyset$ and $T'\subseteq T\setminus \{x_u\}$. In both cases $D$ defends $a$ via $S$ or $S'$ against the attack. 
	Hence, we derive a contradiction to $E\in \com(D)$.
	Every possible way in which $a$ could be defended by $E_2$ in $D_2^\star$, but not in $E_2$, leads to a contradiction. Hence, $E_2\in \com(D_2^\star)$. 
	
	The proof for preferred and grounded semantics is constructed in the same way as the one for SETAFs, by replacing arguments with assumptions and appealing to Observation~\ref{pro:ABA Directionality} above. In particular, observe that for the splitting set $S$, we have $D_{\downarrow S}=D_1$. 
\end{proof}

\end{document}